\titleformat{\subsubsection}[runin]
	{\normalfont\normalsize\bfseries\filcenter}{\thesubsubsection.}{1 ex}{}
\declaretheorem[name=Theorem]{Thm}
\declaretheorem[within=section,name=Lemma]{Lem}
\declaretheorem[sibling=Lem,name=Definition]{Def}
\declaretheorem[sibling=Lem,name=Assumption]{Ass}
\declaretheorem[sibling=Lem,name=Notation]{Not}
\declaretheorem[sibling=Lem,name=Proposition]{Prop}
\declaretheorem[sibling=Lem,name=Remark]{Rem}
\declaretheorem[sibling=Lem,name=Example]{Ex}
\declaretheorem[sibling=Lem,name=Corollary]{Cor}
\newcommand{\mesh}{\operatorname{mesh}}
\newcommand{\calA}{\mathcal{A}}
\newcommand{\calF}{\mathcal{F}}
\newcommand{\calH}{\mathcal{H}}
\newcommand{\calP}{\mathcal{P}}
\newcommand{\calX}{\mathcal{X}}
\newcommand{\calY}{\mathcal{Y}}
\newcommand{\calZ}{\mathcal{Z}}
\newcommand{\RR}{\ensuremath{\mathbb{R}}}
\newcommand{\NN}{\ensuremath{\mathbb{N}}}
\newcommand{\EE}{\ensuremath{\mathbb{E}}}
\providecommand*{\diff}%
        {\@ifnextchar^{\DIfF}{\DIfF^{}}}
\def\DIfF^#1{%
        \mathop{\mathrm{\mathstrut d}}%
                \nolimits^{#1}\gobblespace
}
\def\gobblespace{%
        \futurelet\diffarg\opspace}
\def\opspace{%
        \let\DiffSpace\!%
        \ifx\diffarg(%
                \let\DiffSpace\relax
        \else
                \ifx\diffarg\[%
                        \let\DiffSpace\relax
                \else
                        \ifx\diffarg\{%
                                \let\DiffSpace\relax
                        \fi\fi\fi\DiffSpace}
\newcommand{\BV}{\operatorname{BV}}
\newcommand{\Var}{\operatorname{V}}
\newcommand{\tensalg}{\operatorname{T}(\calH)}
\newcommand{\simplex}{\Delta}
\newcommand{\ba}{\boldsymbol{a}}
\newcommand{\bb}{\boldsymbol{b}}
\newcommand{\bi}{\boldsymbol{i}}
\newcommand{\bj}{\boldsymbol{j}}
\newcommand{\br}{\boldsymbol{r}}
\newcommand{\bs}{\boldsymbol{s}}
\newcommand{\bt}{\boldsymbol{t}}
\newcommand{\bu}{\boldsymbol{u}}
\newcommand{\bx}{\boldsymbol{x}}
\newcommand{\by}{\boldsymbol{y}}
\newcommand{\bsigma}{\boldsymbol{\sigma}}
\newcommand{\btau}{\boldsymbol{\tau}}
\newcommand{\XSeq}{\calX^+}
\newcommand{\Paths}{\operatorname{Paths}}
\newcommand{\PathsX}{\calP\left(\calX\right)}
\newcommand{\seqdf}{\nabla}
\newcommand{\intvl}{U}
\newcommand{\IndLvl}{M}
\newcommand{\IndDegr}{D}
\newcommand{\Sig}{\operatorname{S}}
\newcommand{\Sigk}{\operatorname{S}_{\le \IndLvl}}
\newcommand{\SSig}{\mathfrak{S}}
\newcommand{\SSigk}{\mathfrak{S}_{\IndLvl}}
\newcommand{\SSigDk}{\mathfrak{S}_{(\IndDegr),\IndLvl}}
\newcommand{\SSigD}{\mathfrak{S}_{(\IndDegr)}}
\newcommand{\kernel}{\operatorname{k}} 
\newcommand{\KSigA}{\operatorname{K}^{\oplus}} 
\newcommand{\KSigB}{\kernel^{\oplus}} 
\newcommand{\KSigAk}{\operatorname{K}^{\oplus}_{\le\IndLvl}} 
\newcommand{\KSigBk}{\kernel^{\oplus}_{\leq\IndLvl}}
\newcommand{\Kmeasure}{\kappa}
\newcommand{\KSeq}{\operatorname{k}^+}
\newcommand{\KSeqA}{\operatorname{K}^+}
\newcommand{\KSeqAk}{\operatorname{K}^+_{\IndLvl}}
\newcommand{\KSeqB}{\kernel^+}
\newcommand{\KSeqBk}{\kernel^+_{\IndLvl}}
\newcommand{\KSeqBDk}{\kernel^+_{(\IndDegr),\IndLvl}}
\newcommand{\KSeqBonek}{\kernel^+_{(\IndDegr),\IndLvl}}
\newcommand{\KSeqDk}{\kernel^+_{(\IndDegr),\IndLvl}}
\newcommand{\Exp}{\exp}
\newcommand{\ExpM}{\exp_{\IndLvl}}
\numberwithin{equation}{section}
\numberwithin{figure}{section}
\theoremstyle{plain}
\newtheorem{thm}{\protect\theoremname}
  \theoremstyle{definition}
  \theoremstyle{definition}
  \newtheorem{defn}[thm]{\protect\definitionname}
  \theoremstyle{plain}
  \theoremstyle{plain}
  \theoremstyle{remark}
  \newtheorem{rem}[thm]{\protect\remarkname}
  \theoremstyle{plain}
  \providecommand{\corollaryname}{Corollary}
  \providecommand{\definitionname}{Definition}
  \providecommand{\examplename}{Example}
  \providecommand{\lemmaname}{Lemma}
  \providecommand{\remarkname}{Remark}
  \providecommand{\theoremname}{Theorem}
  \providecommand{\propositionname}{Proposition}
\title{Kernels for sequentially ordered data}
\author[1]{
Franz J.~Kir\'{a}ly
\thanks{\url{f.kiraly@ucl.ac.uk}}
}
\author[2]{Harald Oberhauser
\thanks{\url{oberhauser@maths.ox.ac.uk}}
}
\affil[1]{
Department of Statistical Science,
University College London,\newline
Gower Street,
London WC1E 6BT, United Kingdom
}
\affil[2]{Mathematical Institute,
University of Oxford,\newline
Andrew Wiles Building,
Oxford OX2 6GG, United Kingdom
}
\date{}
\begin{document}

\maketitle

\begin{abstract}
We present a novel framework for kernel learning with sequential data of any kind, such as time series, sequences of graphs, or strings.
Our approach is based on signature features which can be seen as an
ordered variant of sample (cross-)moments; it allows to obtain a
``sequentialized'' version of any static kernel. The sequential
kernels are efficiently computable for discrete sequences and are
shown to approximate a continuous moment form in a sampling sense.

A number of known kernels for sequences arise as ``sequentializations'' of suitable static kernels: string kernels may be obtained as a special case, and alignment kernels are closely related up to a modification that resolves their open non-definiteness issue.
Our experiments indicate that our signature-based sequential kernel framework may be a promising approach to learning with sequential data, such as time series, that allows to avoid extensive manual pre-processing.
\end{abstract}

\newpage
\tableofcontents
\newpage

\newpage
\section{Introduction}

{\bf Sequentially ordered data are ubiquitous} in modern science,
occurring as time series, location series, or, more generally,
sequentially observed samples of numbers, vectors, and structured
objects. They occur frequently in structured machine learning tasks,
in supervised classification and regression as well as in forecasting,
as well as in unsupervised learning.

{\bf Three stylized facts} make learning with sequential data an
ongoing challenge:
\begin{enumerate}[label=(\Alph*)]
\item\label{A} Sequential data is usually very diverse, with wildly different features being useful. In the state-of-the-art, this is usually addressed by {\bf manual extraction of hand-crafted features}, the combination of which is often very specific to the application at hand and does not transfer easily.
\item\label{B} Sequential data often occurs as {\bf sequences of structured objects}, such as letters in text, images in video, graphs in network evolution, or heterogenous combination of all mentioned say in database or internet applications. This is usually solved by ad-hoc approaches adapted to the specific structure.
\item\label{C} Sequential data is often large, with sequences easily obtaining the length of hundreds, thousands, millions. Especially when there is one or more sequences per data point, the data sets quickly become {\bf very huge}, and with them computational time.
\end{enumerate}
In this paper, we present a novel approach to learning with sequential data based on a {\bf joining of the theory of signatures/rough paths}, and {\bf{kernels/Gaussian processes}}, addressing the points above:

\begin{enumerate}[label=(\Alph*)]
\item The {\bf signature} of a path is a (large) collection of {\bf canonical features} that can be intuitively described an ordered version of sample moments. They completely describe a
  sequence of vectors (provably), and make sequences of different size and length comparable.
The use of signature features is therefore a straightforward way of
avoiding manual feature extraction
(Section \ref{sec:signature_features}).
\item Combining signatures with the {\bf kernel trick},
  by considering the signature map as a feature map yields a kernel
  for sequences.
It also allows {\bf learning with sequences of
    structured objects} for which non-sequential kernels exist ---
  consequently we call the process of obtaining a sequence kernel from
  a kernel for structured objects ``kernel sequentialization'' (Section \ref{sec:kernelized_signatures}).
\item The {\bf sequentialized kernel} can be {\bf computed
    efficiently} via dynamic programming ideas similar to those known
  for string kernels (Section \ref{sec:approx}). The kernel formalism makes the
  computations further amenable to low-rank type speed-ups in kernel
  and Gaussian process learning such as Nyström-type and Cholesky
  approximations or inducing point methods (Section \ref{sec:comp}).
\end{enumerate}

To sum up, we provide a canonical construction to transform any kernel $\kernel : \calX\times \calX:\rightarrow\RR$
into a version for sequences $\KSeq: \XSeq\times \XSeq\rightarrow
\RR$, where we have denoted by $\calX^+$ the set of arbitrary length
sequences in $\calX$.
We call $\KSeq$ the \textbf{sequentialization of} $\kernel$.
This sequentialization is canonical in the sense that it converges to an inner product of ordered moments, the signature, when sequences in $\calX^+$ converge to functions $[[0,1]\rightarrow \calX]$ in a meaningful way.
We will see that existing kernels for sequences such as string or alignment kernels are closely related to this construction.

We explore the practical use of the sequential kernel in experiments
which show that sequentialization of non-linear kernels may be
beneficial, and that the sequential kernel we propose can beat the
state-of-the-art in sequence classification while avoiding extensive
pre-processing.
Below we give an informal overview of the main ideas, and a summary of related prior art.

\subsection{Signature features, and their universality for sequences}
\label{sec:intro.sign}
Signatures are universal features for sequences, characterizing sequential structure by quantifying dependencies in their change, similar to sample moments. We showcase how to obtain such signature features for the simple example of a two-dimensional, smooth series
\begin{align*}
x: [0,1]\rightarrow\mathbb{R}^2,t\mapsto (a(t),b(t))^{\top},
\end{align*}
whose argument we interpret as ``time''.
As with sample moments or sample cumulants of different degree, there are signature features of different degree, first degree, second degree, and so on. The first degree part of the signature is the average change in the series, that is,
\begin{align*}
\Sig_1(x) := \mathbb{E}_{t}[\dot{x}(t)] = \mathbb{E}_{t}
\left[
\begin{pmatrix}
    \dot{a}(t)\\
    \dot{b}(t)
\end{pmatrix}\right]
=\begin{pmatrix}
    \int_0^1 \diff a(t)\\
    \int_0^1 \diff b(t)
\end{pmatrix}
 =
\begin{pmatrix}
    a(1) - a (0)\\
    b(1) - b (0)
\end{pmatrix},
\end{align*}
where we have written $\diff a(t):=\dot{a}(t) \diff t$,$\diff b(t):=\dot{b}(t) \diff t$.
The second degree part of the signature is the (non-centered) covariance of changes at two subsequent time points, that is, the expectation
\begin{align*}
\Sig_2(x) &:= \frac{1}{2}\EE_{t_1<t_2} [\dot{x}(t_1)\cdot \dot{x}(t_2)^\top]
= \frac{1}{2}\EE_{t_1<t_2}
\left[
\begin{pmatrix}
    \dot{a}(t_1)\dot{a}(t_2)&\dot{a}(t_1)\dot{b}(t_2)\\
    \dot{b}(t_1)\dot{a}(t_2)&\dot{b}(t_1)\dot{b}(t_2)
\end{pmatrix}\right]\\
&= \begin{pmatrix}
    \int_0^1\int_0^{t_1} \diff a(t_2)\diff a(t_1)&\int_0^1\int_0^{t_1} \diff a(t_2)\diff b(t_1)\\
    \int_0^1\int_0^{t_1} \diff b(t_2)\diff a(t_1)&\int_0^1\int_0^{t_1} \diff b(t_2)\diff b(t_1)
\end{pmatrix},
\end{align*}
where the expectations in the first line are uniformly over time
points in chronological order $t_1\le t_2$ (that is, $t_1, t_2$ is
the order statistic of two points sampled from the uniform distribution on $[0,1]$). This is equivalent to integration over the so-called $2$-order-simplex $\{t_1,t_2:0\leq t_1\leq t_2\leq 1\}$ in the second line, up to a factor of $1/2$ corresponding to the uniform density (we put it in front of the expectation and not its inverse in front of the integral to obtain an exponential generating function later on).

Note that the second order signature is different from the second order moment matrix of the infinitesimal changes by the chronological order imposed in the expectation.
Similarly, one defines the degree $\IndLvl$ part of the signature as the $\IndLvl$-th order moment tensor of the infinitesimal changes, where expectation is taken over chronologically ordered time points (which is a tensor of degree $\IndLvl$). A basis-free definition over an arbitrary RKHS is given in Section~\ref{sec:sig.int}. Note that the signature tensors are not symmetric, similarly to the second order matrix in which the number arising from the $\dot{b}(t_1)\dot{a}(t_2)$ term is in general different from the number obtained from the $\dot{a}(t_1)\dot{b}(t_2)$ term.

The signature features are in close mathematical analogy to moments and thus polynomials on the domain of multi-dimensional series. Namely, one can show:
 \begin{itemize}
\item A (sufficiently regular) series is (almost) uniquely determined by their signature - this is not true for higher order moments or cumulants without the order structure (recall that these almost uniquely determine the distribution of values, without order structure).

\item Any (sufficiently regular real-valued) function $f$ on series
  can be arbitrarily approximated by a function linear in signature
  features, that is for a non-linear functionals $f$ of our two-dimensional path $x=\left( a,b \right)^{\top}$,
\begin{align*}
  f(x)\approx\alpha+\sum \beta_{i_1,\ldots,i_{\IndLvl}}\int \diff
  x_{i_1}\cdots \diff x_{i_{\IndLvl}},
\end{align*}
where the sum runs over $\IndLvl$ and $(i_1,\ldots,i_{\IndLvl})\in\{1,2\}^\IndLvl$
and we denote $x_1:=a$,$x_2:=b$. Note that $x$ is the indeterminate here, that $\int
\diff x_{i_1}\cdots \diff x_{i_{\IndLvl}}$ is the degree $\IndLvl$ part of the signature and
approximation is over a compact set of different paths $x$. The exact statements are given in Section~\ref{sec:signature_features}.
\end{itemize}
From a methodological viewpoint, these assertions mean that not only are signature features rich enough to capture all relevant features of the sequential data, but also that any practically relevant feature can be expressed \emph{linearly} in the signature, addressing point \ref{A} in the sense of a universal methodological approach.

Unfortunately, native signature features, in the form above are only
practical in low dimension and low degree $\IndLvl$:
already in the example above of a two-dimensional path $x$, there are
$2^{\IndLvl}$ (scalar) signature features of degree $\IndLvl$, in
general computation of a larger number of signature features is
infeasible, point \ref{C}
Further, all data are discrete sequences not continuous,
and possibly of objects which are not necessarily real vectors; point \ref{B}.

\subsection{The sequential kernel and sequentialization}
\label{sec:intro.B}
The two issues mentioned can be addressed by the kernel trick --- more precisely, by the kernel trick applied twice: once, to cope with the combinatorial explosion of signature features, akin to the polynomial kernel which prevents computation of an exponential number of polynomial features; a second time, to allow treatment of sequences of arbitrary objects. This double kernelization addresses point \ref{B}, and also the combinatorial explosion of the feature space. An additional discretization-approximation, which we discuss in the next paragraph below, makes the so-defined kernel amenable to efficient computation.

We describe the two kernelization steps. The first kernelization step addresses the combinatorial explosion. It simply consists of taking the scalar product of signature features as kernel, and then observing that this scalar product of integrals is an integral of scalar products.
More precisely, this kernel, called \emph{signature kernel} can be
defined as follows, continuing the example with two two-dimensional
sequences $t\mapsto x(t)=(a(t),b(t))^{\top}$,
$t\mapsto\bar{x}(t)=(\bar{a}(t),\bar{b}(t))^{\top}$ as above:
\begin{align*}
\KSigA(x,\bar{x}) := \langle \Sig(x),\Sig(\bar{x}) \rangle = 1 +
  \langle \Sig_1(x),\Sig_1(\bar{x}) \rangle + \langle
  \Sig_2(x),\Sig_2(\bar{x}) \rangle + \cdots.
\end{align*}
The scalar product of $\Sig_1$-s (vectors in $\RR^2$) is the Euclidean scalar product in $\RR^2$, the scalar product of $\Sig_2$-s (matrices in $\RR^{2\times 2}$) is the trace product in $\RR^{2\time 2}$, and so on (with higher degree tensor trace products). The ``1'' is an ``$\Sig_0$''-contribution (for mathematical reasons becoming apparent in paragraph~\ref{sec:intro.C} below).

For the first degree contribution to the signature kernel, one now notes that
\begin{align*}
\langle \Sig_1(x),\Sig_1(\overline{x}) \rangle & = \EE_s \left[\dot a(s)\right] \cdot \EE_t \left[\dot{\overline{a}}(t)\right] + \EE_s \left[\dot b(s)\right] \cdot \EE_t \left[\dot{\overline{b}}(t)\right]\\
& = \EE_{s,t}\left[ \dot a(s) \cdot \dot{\overline{a}}(t) + \dot b(s) \cdot \dot{\overline{a}}(t)\right]\\
& = \EE_{s,t}\left[ \langle\dot x(s), \dot{\overline{x}}(t)\rangle \right].
\end{align*}
In analogy, one computes that the second degree contribution to the signature kernel evaluates to
$$\langle \Sig_2(x),\Sig_2(\overline{x}) \rangle = \frac{1}{2!^2}\EE_{s_1<s_2,t_1<t_2}\left[ \langle\dot x(s_1), \dot{\overline{x}}(t_1)\rangle\cdot \langle\dot x(s_2), \dot{\overline{x}}(t_2)\rangle \right].$$
Similarly, for a higher degree $\IndLvl$, one obtains a product of $\IndLvl$ scalar products in the expectation.

The presentation is not only reminiscent of the polynomial kernel in how it copes with the combinatorial explosion, it also directly suggests the second kernelization to cope with sequences of arbitrary objects: since the sequential kernel is now entirely expressed in scalar products in $\RR^2$, the scalar products in the expectation may be replaced by any kernel $\kernel: \calX\times \calX \rightarrow \RR$, of arbitrary objects, yielding a sequential kernel, now for sequences in $\calX$, given as
\begin{align}\label{eq:sequnentialization}
\KSigB(x,\bar{x}) := 1 + \frac{1}{1!^2}\EE_{s,t}\left[ \kernel(\dot x(s), \dot{\overline{x}}(t)) \right] +
\frac{1}{2!^2}\EE_{s_1<s_2,t_1<t_2}\left[ \kernel(\dot x(s_1),
  \dot{\overline{x}}(t_1))\cdot \kernel(\dot x(s_2),
  \dot{\overline{x}}(t_2)) \right] + \frac{1}{3!^2}\EE \dots
\end{align}
(for expository convenience we assume here that differentials in $\calX$
are defined which in general is untrue, see Section
\ref{Sig_kernel_trick_2} for the general statement).
Note that (\ref{eq:sequnentialization}) can be seen as a process that takes any kernel $\kernel$ on $\calX$, and makes it into a kernel $\KSigB$ on $\calX$-sequences, therefore we term it ``sequentialization'' of the kernel $\kernel$. This addresses point \ref{B}, and can be found in more detail in Section~\ref{sec:kernelized_signatures}.

\subsection{Efficient computation and discretization}
\label{sec:intro.C}
An efficient way of evaluating the sequential kernel is suggested by a second observation,
closely related to (and generalizing) Horner's method of evaluating
polynomials.
Note that the sequential kernel can be written as an iterated conditional expectation
$$\KSigB(x,\bar{x}) = \left(1 + \EE_{s_1,t_1}\left[ \kernel(\dot x(s_1), \dot{\overline{x}}(t_1))\cdot \left(1+\frac{1}{2^2}\EE_{s_1<s_2,t_1<t_2}\left[ \kernel(\dot x(s_2), \dot{\overline{x}}(t_2))\cdot \left(1+\frac{1}{3^2}\EE_{s_2<s_3,t_2<t_3}[\dots ] \right)\right]\right)\right]\right).$$
The iterated expectation directly suggests a discretization by
replacing expectations by sums, such as
$$\left(1 + \frac{1}{n^2}\sum_{s_1,t_1}\left[ \kernel(\dot x(s_1), \dot{\overline{x}}(t_1))\cdot \left(1+\frac{1}{(2n)^2} \sum_{s_1<s_2,t_1<t_2}\left[ \kernel(\dot x(s_2), \dot{\overline{x}}(t_2))\cdot \left(1+ \frac{1}{(3n)^2}\sum_{s_2<s_3,t_2<t_3}[\dots ] \right)\right]\right)\right]\right),$$
where the sums range over a discrete set of points
$s_i,t_i\in[0,1]$, $i=1,\ldots L$. A reader familiar with string kernels will immediately notice the similarity: the sequential kernel can in fact be seen as infinitesimal limit of a string kernel, and the (vanilla) string kernel can be obtained as a special case (see Section~\ref{sec:discr.string}).
As a final subtlety, we note that the derivatives of $x,\overline{x}$ will not be known in observations, therefore one needs to replace
$\kernel(\dot x(s_i), \dot{\overline{x}}(t_j))$ by a
discrete difference approximation
\begin{align*}
\kernel( x(s_{i+1}), {\overline{x}}(t_{j+1}))+ \kernel( x(s_i), {\overline{x}}(t_j)) - \kernel( x(s_i), {\overline{x}}(t_{j+1}))
- \kernel( x(s_{i+1}), {\overline{x}}(t_j))
\end{align*}
where $s_i,s_{i+1}$ resp.~$t_j,t_{j+1}$ denote adjacent support values of the discretization.

Our theoretical results in Section~\ref{sec:approx} show that the discretization, as described above, converges to the continuous kernel, with a convergence order linear in the sampling density.
Moreover, similarly to the Horner scheme for polynomials (or fast
string kernel techniques), the iterated sum-product can be efficiently
evaluated by dynamical programming techniques on arrays of dimension
three, as outlined in Section~\ref{sec:comp}. The computational
complexity is quadratic in the length of the sequences and linear in
the degree of approximation, and can be further reduced to linear
complexity in both with low-rank techniques.

This addresses the remaining point \ref{C} and therefore yields an efficently computable, canonical and universal kernel for sequences of arbitrary objects.

\subsection{Prior art}

Prior art relevant to learning with sequential data may be found in
three areas:
\begin{enumerate}
\item dynamic programming algorithms for sequence comparison in the
  engineering community,
\item kernel learning and Gaussian processes in the machine learning
  community
\item rough paths in the stochastic analysis community.
\end{enumerate}
The dynamic programming literature (1) from the 70's and 80's has
inspired some of the progress in kernels (2) for sequential data over
the last decade, but to our knowledge so far no connections have been
made between these two, and (3), even though (3) pre-dates kernel literature for sequences by more than a decade.
Beyond the above, we are not aware of literature in statistics of time
series that deals with sequence-valued data points in a way other than
first identifying one-dimensional sequences with real vectors of same
size, or even forgetting the sequence structure entirely and replacing
the sequences with (order-agnostic) aggregates such as cumulants,
quantiles or principal component scores (this is equally true for
forecasting methods).
Though, simple as such reduction to more classic situations may be, it constitutes an important baseline, since only in comparison one can infer that the ordering was informative or not.

\subsubsection*{Dynamic programming for sequence comparison.}
The earliest occurrence in which the genuine order information in sequences is used for learning can probably be found in the work of Sakoe et al~\cite{sakoe1970similarity,sakoe1979two} which introduces the idea of using editing or distortion distances to compare sequences of different length, and to efficiently determine such distances via dynamic programming strategies. These distances are then employed for classification by maximum similarity/minimum distance principles. Through theoretical appeal and efficient computability, sequence comparison methods, later synonymously called dynamic time warping methods, have become one of the standard methods in comparing sequential data~\cite{kruskal1983overview, giorgino2009computing}.

Though it may need to be said that sequence comparison methods in their pure form --- namely an efficiently computable distance between sequences --- have remained somewhat restricted in that they can only be directly adapted only to relatively heuristic distance-based learning algorithms, by definition. This may be one of the reasons why sequence comparison/dynamic time warping methods have not given rise to a closed learning theory, and why in their original practical application, speech recognition and speech classification, they have later been superseded by Hidden Markov Models~\cite{rabiner1989tutorial} and more recently by neural network/deep learning methodologies~\cite{hinton2012deep} as gold standard.

A possible solution of the above-mentioned shortcomings has been demonstrated in kernel learning literature.

\subsubsection*{Kernels for sequences.}
Kernel learning is a relatively new field, providing a general
framework to make non-linear data of arbitrary kind amenable to
classical and scalable linear algorithms such as regression or the
support vector machine in a unified way, by using a non-linear scalar
product: this strategy is called ``kernelization'';
see~\cite{scholkopf2002learning} or~\cite{shawe2004kernel}.
Mathematically, there are close relations to Gaussian process theory~\cite{rasmussen2006gaussian} which is often considered as a complimentary viewpoint to kernels, and aspects of spatial geostatistics~\cite{cressie2015statistics}, particularly Kriging, an interpolation/prediction method from the 60's~\cite{matheron1963principles} which has been re-discovered 30 years later in the form of Gaussian process regression~\cite{williams1996gaussian}.
In all three incarnations, coping with a specific kind of data practically reduces to finding a suitable kernel (= covariance function), or a family of kernels for the type of data at hand --- after which one can apply a ready arsenal of learning theory and non-linear methodology to such data. In this sense, providing suitable kernels for sequences has proved to be one of the main strategies in removing the shortcomings of the sequence comparison approach.

Kernels for strings, that is, sequences of symbols, were among the first to be considered~\cite{haussler1999convolution}. Fast dynamic programming algorithms to compute string kernels were obtained a few years later~\cite{lohdi02textclassification,leslie04faststringkernels}. Almost in parallel and somewhat separately, kernels based on the above-mentioned dynamic time warping approach were developed, for sequences of arbitrary objects~\cite{bahlmann2002online, noma2002dynamic}. A re-formulation/modification led to the so-called global alignment kernels~\cite{cuturi2007kernel}, for which later fast dynamic programming algorithms were found~\cite{cuturi2011fast} as well. An interesting subtle issue common to both strains was that initially, the dynamic programming algorithms found were quadratic in the length of the sequences, and only later linear complexity algorithms were devised: for string kernels, the transition was made in~\cite{leslie04faststringkernels}, while for the sequence matching strain, this became only possible after passing to the global alignment kernels~\cite{cuturi2011fast}.

Looking from a general perspective: while in hindsight all of the mentioned kernels can be viewed from Haussler's original, visionary relation-convolution kernel framework, and all above-mentioned kernels for sequences, in some form, admit fast dynamic programming algorithms, existing literature provides no unifying view on kernels for sequences: the exact relation between string kernels and dynamic time warping/global alignment kernels, or to the classical theory of time series has remained unclear; further, the only known kernels for sequences of arbitrary objects, the dynamic time warping/global alignment kernels, suffer from the fact that they are not proper kernels, failing to be positive definite.

In this paper, we attempt to resolve these issues. More precisely, the
string kernel will arise as a special case, and the global alignment
kernel as a deficient version of our new signature kernel, built on
the theory of signatures and rough paths from stochastic analysis.

\subsubsection*{Iterated integrals, signatures and rough paths.}
Series of iterated integrals are a classic mathematical object
that plays a fundamental role in many areas like control theory,
combinatorics, homotopy theory,
Feynman--Dyson--Schwinger theory in physics and more recently
probability theory.
We refer to \cite[Section ``Historic papers'', p97]{lyons2004stflour}
for a bibliography of influential articles.
This series, or certain aspects of it, are treated under various names in the
literature like ``Magnus expansion'', ``time ordered exponential'', or
the one we chose which comes from Lyons' rough path theory: ``the signature of a path''.
The reason we work in the rough path setting is that it provides a
concise mathematical framework that clearly separates
analytic and algebraic aspects, applies in
infinite dimensional spaces like our RKHS and is robust under noise;
we refer to
\cite{lyons2004stflour,lyons1998differential,friz2010multidimensional}
as introductions.
The role of the signature as a ``non-commutative exponential'' plays a guiding principle for many recent
developments in stochastic analysis, though it might be less known outside this community.

The major application of rough path theory was and still is to provide
a robust understanding of differential
equations that are perturbed by noise, going beyond classic
Ito-calculus; Hairer's work on regularity
structures~\cite{hairer2014theory} which was recently awared a Fields
medal can be seen as vast generalization of such ideas.
The interpretation of the signature in a statistical context is more
recent: work of Papavasiliou and Ladroue \cite{papavasiliou2011parameter} applies it to SDE parameter
estimation;
work of Gyurko, Hao, Lyons, Oberhauser
\cite{gyurko2013extracting,levin2013learning, lyons2014feature}
applies it to forecasting and classifcation of financial time series
using linear and logistic regression;
work of Diehl~\cite{Diehlinvariants} and Graham
\cite{DBLP:journals/corr/Graham13} uses signature features for
handwritten digit recognition, see also
\cite{DBLP:journals/corr/YangJL15} for more recent state of the art
results.

The interpretation of the signature as an expectation already occurs as a technical Lemma~3.9 in~\cite{hambly2010uniqueness}. A scalar product formula for the norm, somewhat reminiscent of that of the sequential kernel, can be found in the same Lemma.
Similarly we would like to mention the Computational Rough Paths
package~\cite{CoRoPa}, that contains C\texttt{++} code to compute
signature features directly for $\RR^d$-valued paths.
However, it does not contain specialized code to calculate inner
products of signature features directly. The Horner-type algorithm
we describe in Section \ref{sec:intro.C} gives already significant speed improvements when it is applied to
paths in finite dimensional, linear spaces (that is, the sequentialization of the
Euclidean inner product $\kernel(\cdot,\cdot)=\langle \cdot,\cdot
\rangle_{\RR^d}$; see Remark \ref{rem:inner_product_pcw_linear}).

\begin{Rem}
The above overview contains a substantial number of examples in which
independent or parallel development of ideas related to sequential
data has occurred, possibly due to researchers being unaware of
similar ideas in communities socially far from their own.
We are aware that this could therefore also be the case for this very paper, unintended.
We are thus especially thankful for any pointers from readers of this pre-print
version that help us give credit where credit is due.
\end{Rem}

\section*{Acknowledgement}
HO is grateful for support by the Oxford-Man Institute of Quantitative finance.

\section{Notation for ordered data}

\label{sec:sig.ord}
We introduce recurring notation.

\begin{Def}
The set of natural numbers, including $0$, will be denoted by $\NN$.
\end{Def}

\begin{Def}\label{def:ordered_tuples}
Let $A$ be a set and $L\in\NN$. We denote
\begin{enumerate}
\item the set of integers between $1$ and $L$ by $[L]:=\{1,2,\dots, L\}$,
\item the set of ordered $L$-tuples in $A$ as usual by $A^L:=\{\ba =
  (a_1,\dots,a_L):\; a_1, \dots, a_L\in A\}$,
\item the set of such tuples, of arbitrary but finite length, by
  $A^+:=\bigcup_{L\in\NN}A^L$ where by convention $A^0=\emptyset$.
\end{enumerate}
Moreover, we use the following index notation for
$\ba=(a_1,\ldots,a_L)\in A^+$,
\begin{enumerate}
\item $\ell(\ba):=L$,
\item $\#\ba$ the count of the most frequent item in $\ba$,
\item $\ba[i]:=a_i$ for $i=1,\ldots,L$,
\item $\ba[1:N]:=(\ba[1],\ldots,\ba[N])$ for $N\in\left[ L \right]$,
\item $\ba[\bi]:=(\ba[i_1],\ldots,\ba[i_N])\in A^N$ for $\bi=\left(
    i_1,\ldots,i_N\right)\in \left[ L \right]^N$,
\item $\ba!:=n_1!\cdots n_k!$ if $\ba$
  consists of $k=|\{a_1,\ldots,a_L\}|$ different elements in $A$ and
  $n_1,\ldots,n_k$ denote the number of times they occur in $\ba$,
\item $f(\ba)=\left( f(a_1),\ldots,f(a_L) \right)\in B^L$ for $f\in[A\rightarrow B]$.
\end{enumerate}
\end{Def}
In the case that $A\subset\RR$, we can define subsets of
$A^+$ that consist of increasing tuples.
These tuples play an important role for calculations with the signature features.
\begin{Def}
Let $A\subset \RR$ and $\IndLvl\in\NN$.
We denote the set of monotonously ordered $\IndLvl$-tuples in $S$ by
\begin{align*}
\simplex^{\IndLvl}\left(  A \right)&:=\{\bu\in A^{\IndLvl}:\bu[1]\le \bu[2] \le \cdots  \le \bu[\IndLvl]\}.
\end{align*}
We denote the union of such tuples by $\simplex\left(
  A\right):=\bigcup_{\IndLvl\in\NN}\simplex^{\IndLvl}\left(
  A \right)$ where again by convention $\simplex^0\left( A \right)=\emptyset$.
We call $\simplex^{\IndLvl}(A)$ the order $\IndLvl$ simplex on $A$, and $\simplex(A)$ the order simplex on $A$.
A monotonously ordered tuple $\bu\in \simplex^{\IndLvl} (A)$ is called strict if
$\bu[i]\lneq \bu[i+1]$ for all $i\in [\IndLvl-1]$.
The index notation of Definition \ref{def:ordered_tuples} applies also
to $\bu\in\simplex (A)$, understanding that $\simplex (A)
\subseteq A^+$ if $A\subseteq \RR$.
\end{Def}
\begin{Rem}
Above definition of order simplex is slightly different from the usual one, in which one takes
$A=\left[ 0,1 \right]$, the counting starts at $t_0$, and one has $t_0=0$ and $t_{\IndLvl}=1$.
\end{Rem}
The following notation is less standard, but becomes very useful for the
algorithms and recursions that we discuss.
\begin{Def}
Let $\ba,\bb\in A^+$, $D\in\NN$. We use the following notation:
\begin{itemize}
\item $\ba\sqsubseteq \bb$ if there is a $\bi\in\simplex (\NN)$ such
  that $\ba=\bb[\bi]$,
\item $\ba\sqsubset\bb$ if there is strictly ordered tuple
  $\bi\in\simplex (\NN)$ such that $\ba=\bb[\bi]$,
\item $\ba\sqsubseteq_D\bb$ if there is a $\bi\in\simplex (\NN)$ such
  that $\ba=\bb[\bi]$ and $\#\ba\leq D$.
\end{itemize}
For $L\in\NN$ we also the notation
\begin{enumerate}
\item $\ba\sqsubseteq [L]$ if $\ba\in\simplex([L])$,
\item $\ba\sqsubset [L]$ if $\ba\in\simplex([L])$ and $\ba$ is
  strict (that is $\ba[i]\lneq \ba[i+1]$ for all $i\in[\ell(\ba)-1]$).
\end{enumerate}
\end{Def}

\section{Signature features: ordered moments for sequential data}\label{sec:signature_features}
In this section, the signature features are introduced in a
mathematically precise way, and the properties which make them
canonical features for sequences are derived.
We refer the interested reader to \cite{lyons2004stflour} for further
properties of signatures and its use in stochastic analysis.

As outlined in introductory Section~\ref{sec:intro.sign}, the
signature can be understood as an ordered and hence non-commutative
variant of sample moments.
If we model ordered data by a function $x\in [[0,1]\rightarrow \RR^n]$, the
signature features are obtained as iterated integrals of $x$ and the $M$-times
iterated integral can be interpreted as a $M$-th ordered moment;
examples of such features for $M=1,2$ are the (non-commutative) integral moments
$$\Sig_1: \left[[0,1]\rightarrow \RR^n\right]\rightarrow \RR^n\;,\;x\mapsto \int_0^1 \diff x(t),\quad\mbox{or}\quad \Sig_2: \left[[0,1]\rightarrow \RR^n\right]\rightarrow \RR^{n\times n}\;,\;x\mapsto \int_0^1\int_{0}^{t_2} \diff x(t_1)\diff x(t_2)^\top$$
(where the domain of $x$ will be restricted so the integrals are well-defined).
The more general idea of signature features, made mathematically precise, is to consider the integral moments
\begin{align}
\label{signature_again}
\Sig_M (x)= \left( \int_{\simplex^{\IndLvl}} \diff x_{i_1}(t_1)\cdots \diff
  x_{i_\IndLvl}(t_{\IndLvl})\right)_{i_1,\ldots,i_\IndLvl\in\{1,\ldots,n\}}\in
  \RR^{n\times \cdots \times n}
\end{align}
where the integration is over the $\IndLvl$-simplex $\simplex^{\IndLvl}$, i.e., all ordered sequences $0\le t_1\le t_2\le \dots \le t_{\IndLvl} \le 1,$
and the choice of index $(i_1,\dots, i_{\IndLvl})$ parametrises the features.
The features $\Sig_1(x),\Sig_2(x),\ldots$ are all element of a (graded) linear space and a kernel for sequential data may then be obtained from taking the scalar product over the signature features.

The section is devoted to a description and charactarization of these signature features and the kernel obtained from it. This is done in a basis-free form which allows treatment of ordered data $x:[0,1]\rightarrow \calH$, where $\calH$ is a Hilbert space (over $\RR$) not necessarily identical to $\RR^n$. This will allow considering ordered versions of data, the non-ordered variant of which may be already encoded via its reproducing kernel Hilbert space $\calH$.

For illustration, the reader is invited to keep the prototypical case $\calH =\RR^n$ in mind.

\subsection{Paths of bounded variation}
\label{sec:sig.BV}
The main object modelling a (generative) datum will be a continuous sequence, a path $x:[0,1]\rightarrow \calH$ where $\calH$ is a Hilbert space which is fixed in the following.
For reasons of regularity (integrals need to converge), we will
restrict ourselves to paths $[[0,1]\rightarrow \calH]$ which are
continuous and of bounded length\footnote{Considering bounded variation paths is slightly restrictive as it excludes samples from stochastic process models such as the Wiener process/Brownian motion. The theory may be extended to such sequences at the cost of an increase in technicality. For reading convenience, this will be done only at a later stage.} (called ``variation'', as defined below).

The variation of a path is defined as the supremum of variations of discrete sub-sequences taken from the path:

\begin{Def}
For an ordered tuple $\bx\in \calH^{L+1},$ we define:
\begin{enumerate}[label=(\roman*)]
\item $\seqdf \bx\in \calH^L$ as $\seqdf \bx[i] := \bx[i+1] -
  \bx[i]$,  $i\in [L]$,
\item $\mesh\left(\bx\right) := \max_i \|(\seqdf \bx)[i]\|$,
\item $\Var(\bx) := \sum_{i=1}^{L} \|\seqdf \bx \|_\calH$.
\end{enumerate}
We call $\seqdf \bx$ the first difference sequence of $\bx$, we call $\mesh (\bx)$ the mesh of $\bx$, and $\Var(\bx)$ the variation of $\bx$.
\end{Def}

\begin{Def}
We define the variation of $x\in[[0,1]\rightarrow\calH]$ as
$$ \Var(x):=\sup_{\bt\in\simplex ([0,1])} \Var(x(\bt)).$$
The mapping $x$ is said to be of bounded variation if $\Var(x) < \infty$.
\end{Def}

A bounded variation path, as the name says, is a path with bounded variation.

\begin{Def}
Let $[a,b]\subseteq[0,1]$. We denote the set of $\calH$-valued
paths of bounded variation on $[a,b]$ by
\begin{align*}
\BV([a,b],\calH):=\{x\in C\left(  [a,b],\calH\right)\;:\; \Var(x) < \infty\}.
\end{align*}
When $[a,b]=[0,1]$ we often omit the qualifier $[0,1]$ and write $\BV (\calH)$.
\end{Def}

\begin{Def}
\label{RS_integral}
Let $E$ be a linear space and denote with $L\left(\calH,E\right)$ the set of continuous linear maps from $\calH$ to $E$.
Given $x\in \BV([0,1],\calH)$ and $y\in \BV\left(\left[ 0,1 \right],L\left(\calH\right)\right)$,
the Riemann--Stieltjes integral of $y$ over $\left[ a,b \right]\subseteq\left[ 0,1 \right]$ is defined as the element in $E$ given as
$$\int_a^b y \diff x :=\lim_{\substack{\bt\in\simplex\left( \left[
        a,b \right] \right)\\\mesh\left(\bt\right)\rightarrow
    0}}\sum_{i=1}^{\ell(\bt)-1} y(\bt[i]) \left( \seqdf x(\bt[i])\right).$$
We also use the shorter notation $\int y\diff x$ when the integration domain is clear from the context.
\end{Def}
As in the finite-dimensional case, above is indeed well-defined, that
is the Riemann sums converge and the limit is independent of the
sequence $\bt$; see \cite[Theorem 1.16]{lyons2004stflour} for a proof
of a more general result.
Note that the integral itself is in general not a scalar, but an element of the Hilbert space $\calH$.

\subsection{Signature integrals in Hilbert spaces}
\label{sec:sig.int}
With the Riemann--Stieltjes integral, we can define the signature integrals in a basis-free way. For this, note that the Riemann--Stieltjes integral of a bounded variation path is again a bounded variation path.
\begin{Def}
Let $[a,b]\subseteq[0,1]$ and  $x\in\BV ([a,b], \calH)$. We define:
\begin{enumerate}[label=(\roman*)]
\item $\int_{\simplex^1 ([a,b])}\diff x^{\otimes 1} := \int_a^b\diff x=x(b)-x(a)$,
\item$\int_{\simplex^{\IndLvl} ([a,b])} \diff x^{\otimes \IndLvl} : = \int_a^b\left( \int_{\simplex^{\IndLvl-1}([a,s])}\diff x^{\otimes (\IndLvl-1)} \right)\otimes \diff x (s)$ for integers $\IndLvl\ge 2$.
\end{enumerate}
We call $\int_{\simplex^\IndLvl ([a,b])} \diff x ^{\otimes \IndLvl}$
the $\IndLvl$-th iterated integral of $x$ on $[a,b]$.
When $[a,b]=[0,1]$ we often omit the qualifier $[0,1]$ and write
$\simplex^{\IndLvl}$ for $\simplex^{\IndLvl}([0,1])$.
\end{Def}

In the prototypical case $\calH=\RR^n$, the first iterated integral is a vector in $\RR^n$, the second iterated integral is a matrix in $\RR^{n\times n}$, the third iterated integral is a tensor in $\RR^{n\times n\times n}$ and so on.
For the case of arbitrary Hilbert spaces, we need to introduce some additional notation to write down the space where the iterated integrals live:
\begin{Def}
We denote by $\calH \otimes \calH$ the tensor (=outer product) product
of $\calH$ with itself.
Instead of $\calH\otimes \calH\otimes \dots \otimes \calH$ ($\IndLvl$
times), we also write $\calH^{\otimes \IndLvl}$. By convention,
$\calH^{\otimes 0} = \RR$.
\end{Def}

The $\IndLvl$-th iterated integral of $x$ is an element of $\calH^{\otimes \IndLvl}$, a tensor of degree $\IndLvl$.
The natural space of iterated integrals of all degrees together is the tensor power algebra the Hilbert space $\calH$:

\begin{Def}
The tensor power algebra over $\calH$ is defined as $\bigoplus_{\IndLvl=0}^\infty \calH^{\otimes \IndLvl}$,
addition $+$, multiplication $\ast$ and an inner product $\langle
\cdot,\cdot\rangle_{\tensalg}$ are defined by canonical extension from $\calH$, that is:
\begin{align*}
&(g_0,g_1,\dots,g_{\IndLvl},\dots) + (h_0,h_1,\dots,h_{\IndLvl},\dots):=
  \left(g_0+h_0\;g_1+ h_1, \ldots\,g_{\IndLvl}+h_{\IndLvl}\;,\ldots \right).\\
&(g_0,g_1,\dots,g_{\IndLvl},\dots)\ast (h_0,h_1,\dots,h_{\IndLvl},\dots):=
  \left(g_0h_0\;,\; g_0\otimes h_1+g_1\otimes h_0\;,
  \;\dots\;,\;\sum_{i=0}^{\IndLvl} g_i\otimes h_{\IndLvl-i}\;,\;\dots \right),\\
&\left\langle (g_0,g_1,\dots,g_{\IndLvl},\dots), (h_0,h_1,\dots,h_{\IndLvl},\dots)\right\rangle_{\tensalg}:=
\langle g_0,h_0\rangle_\RR + \langle g_1h_1\rangle_\calH+\dots + \langle g_{\IndLvl},h_{\IndLvl}\rangle_{\calH^{\otimes \IndLvl}} +\dots.
\end{align*}
The elements in the tensor power algebra with
with finite norm $\left\|g\right\|_{\tensalg}=\sqrt{\langle g,g
\rangle_{\tensalg}}<\infty$ are denoted by
$\tensalg$\footnote{This is slightly non-standard notation: usually
  $\tensalg$ equals $\bigoplus_{\IndLvl=0}^\infty \calH^{\otimes
    \IndLvl}$.}.
Further, we canonically identify $\calH^{\otimes \IndLvl}$ with the sub-Hilbert space of $\tensalg$ that contains exactly elements of the type $g=(0,\dots,0,g_{\IndLvl},0,\dots)$, which we call homogenous (of degree $k$).
Under this identification, for $g\in \tensalg$, we will also write, for reading convenience,
$$g_0+g_1+g_2+\dots\quad\mbox{instead of}\quad (g_0,g_1,g_2,\dots),$$
where we may opt to omit zeros in the sum. We adopt an analogue use of sum and product signs $\sum,\prod$.
\end{Def}

\begin{Ex}
We work out tensor algebra multiplication and scalar product in the case of the prototypical example $\calH = \RR^n$. Consider homogenous elements of degree 2: it holds that $\calH^{\otimes 2}=\RR^n\otimes \RR^n\cong \RR^{n\times n}$, and the tensor product of two vectors is $v\otimes w = vw^\top$. The trace product on $\RR^{n\times n}$ is induced by the Euclidean scalar product, since $\langle v_1w_1^\top, v_2w_2^\top\rangle = \langle v_1,v_2\rangle\langle w_1,w_2\rangle$. Homogenous elements of degree 3 are similar: it holds that $\calH^{\otimes 3} \cong \RR^{n\times n\times n}$, and the tensor product of three vectors $u\otimes v\otimes w$ is a tensor $T$ of degree $3$, where $T_{ijk}=u_iv_jw_k$ for all $i,j,k\in [n]$. The scalar product on $\RR^{n\times n\times n}$ is the tensor trace product, $\langle T, T'\rangle_{\calH^{\otimes 3}} = \sum_{i,j,k=1}^n T_{ijk}T'_{ijk}$.\\
An element $g\in \tensalg$ is of the form
$$g = c + v + M + T + \dots,\quad\mbox{where}\;c\in \RR, v\in \RR^n, M\in \RR^{n\times n}, T\in \RR^{n\times n \times n}, \mbox{etc}.$$
As an example of tensor algebra multiplication, it holds that
$$g\ast g = c^2 + 2cv + cM + vv^\top + cT + v\otimes M + M\otimes v+\dots.$$
Note the difference between $v\otimes M$ and $M\otimes v$: it holds that $(v\otimes M)_{ijk} = v_i M_{jk},$ while $(M\otimes v)_{ijk} = v_k M_{ij}.$
\end{Ex}
One checks that $\tensalg$ is indeed an algebra:

\begin{Prop}$\left( \tensalg, +, \ast \right)$ is a an associative $\RR$-algebra with
\[
\left( 1,0,\ldots,0,\ldots \right) \text{ resp. }\left( 0,\ldots,0,\ldots \right)
\]
as multiplicative neutral element resp.~additive neutral element.
In general, the tensor algebra multiplication $\ast$ is not commutative.
\end{Prop}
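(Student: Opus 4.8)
The statement is, at bottom, a bookkeeping verification: every algebraic feature of $\tensalg$ is inherited from the canonical identification $\calH^{\otimes a}\otimes\calH^{\otimes b}\cong\calH^{\otimes(a+b)}$, from the fact that this identification is bilinear and associative, and from $\calH^{\otimes 0}=\RR$ together with $1\otimes v=v\otimes 1=v$. The plan is simply to check the ring axioms one by one against the explicit formulas in the preceding definition.

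First I would record that $(\tensalg,+)$ is an $\RR$-vector space with additive neutral element $(0,0,\dots)$: componentwise addition and scalar multiplication make $\bigoplus_{\IndLvl\ge 0}\calH^{\otimes\IndLvl}$ a vector space, and the subset of finite-norm elements is a linear subspace, closed under $+$ by the triangle inequality for $\|\cdot\|_{\tensalg}$ and trivially under scaling. Next, $\ast$ is $\RR$-bilinear: writing $(g\ast h)_{\IndLvl}=\sum_{i=0}^{\IndLvl}g_i\otimes h_{\IndLvl-i}$, each summand is bilinear in $(g_i,h_{\IndLvl-i})$ because $\otimes\colon\calH^{\otimes i}\times\calH^{\otimes(\IndLvl-i)}\to\calH^{\otimes\IndLvl}$ is bilinear. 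For associativity I would compute both triple products degree by degree: $((f\ast g)\ast h)_{\IndLvl}=\sum_{p+q=\IndLvl}\big(\sum_{i+j=p}f_i\otimes g_j\big)\otimes h_q=\sum_{i+j+q=\IndLvl}(f_i\otimes g_j)\otimes h_q$, and symmetrically $(f\ast(g\ast h))_{\IndLvl}=\sum_{i+j+q=\IndLvl}f_i\otimes(g_j\otimes h_q)$; these agree term by term by associativity of the tensor product. Finally, $(1,0,\dots)\ast g$ has $\IndLvl$-th component $1\otimes g_{\IndLvl}=g_{\IndLvl}$ and likewise $g\ast(1,0,\dots)=g$, so $(1,0,\dots)$ is a two-sided multiplicative identity, while $(0,0,\dots)$ is visibly the additive neutral element.

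For the non-commutativity claim it suffices to exhibit one failing pair whenever $\dim\calH\ge 2$ (for $\dim\calH\le 1$ the algebra is in fact commutative). Choose linearly independent $v,w\in\calH$, regarded as homogeneous degree-$1$ elements; then $v\ast w=v\otimes w$ and $w\ast v=w\otimes v$, and $v\otimes w\ne w\otimes v$ in $\calH^{\otimes 2}$ — in the prototypical case $\calH=\RR^n$ these are the matrices $vw^\top$ and $wv^\top$, which differ precisely because $v,w$ are independent. Hence $\ast$ is not commutative in general.

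The one point that is not pure routine is well-definedness, i.e.\ that $\ast$ actually maps $\tensalg\times\tensalg$ into $\tensalg$: from $\|(g\ast h)_{\IndLvl}\|\le\sum_{i=0}^{\IndLvl}\|g_i\|\,\|h_{\IndLvl-i}\|$ one only learns that the norm sequence of $g\ast h$ is the convolution of two square-summable sequences, which need not itself be square-summable. A scrupulous treatment should therefore either read $\tensalg$ in this proposition as the algebraic tensor power algebra $\bigoplus_{\IndLvl\ge 0}\calH^{\otimes\IndLvl}$ (finitely supported sequences, on which closure under $\ast$ is immediate), or restrict attention to a subalgebra on which $\|g_{\IndLvl}\|$ decays fast enough; the latter suffices for everything that follows, since the signature of a bounded-variation path satisfies $\|\Sig_{\IndLvl}(x)\|\le\Var(x)^{\IndLvl}/\IndLvl!$, so the elements that are actually multiplied later all lie in such a subalgebra (and products of signatures are again signatures of concatenated paths). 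Modulo this caveat I expect the argument to be entirely mechanical, with the closure/well-definedness bookkeeping being the only genuine obstacle.
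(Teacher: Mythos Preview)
Your proof is correct and follows essentially the same approach as the paper: verify the algebra axioms by elementary calculation, and for non-commutativity exhibit linearly independent $g,h\in\calH$ with $g\otimes h\neq h\otimes g$. The paper's own proof is in fact far less detailed than yours---it simply declares the axiom verification ``a series of non-trivial but elementary calculations'' and gives the same counter-example---so your explicit degree-by-degree check of associativity and the identity element is more than sufficient. Your final paragraph on closure of $\tensalg$ under $\ast$ raises a genuine subtlety that the paper does not address; your observation that the signatures actually occurring later have factorially decaying components (so the relevant products are well-defined) is the right resolution for the purposes of the paper.
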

\begin{proof}
Verifying the axioms of an associative $\RR$-algebra is a series of non-trivial but elementary calculations.
To see that $\ast$ is not commutative, consider the counter-example where $g,h\in \calH$ are linearly independent. Then, $g\otimes h \neq h\otimes g$ (in the case of $\calH =\RR^n$, this is $gh^\top \neq hg^\top$).
\end{proof}

\begin{Rem}
We further emphasize the following points, also to a reader who may already be familiar with the tensor product/outer product:
\begin{itemize}
\item[(i)] The Cartesian product $\calH^{\IndLvl}$ is different from the tensor product $\calH^{\otimes \IndLvl}$ in the same way as $\left(\RR^{n}\right)^2$ is different from $\RR^{n\times n} = \left(\RR^{n}\right)^{\otimes 2}$ and $\left(\RR^{n}\right)^3$ from $\RR^{n\times n \times n} = \left(\RR^{n}\right)^{\otimes 3}.$
\item[(ii)] In general, $\ast$ is different from the formal tensor
    product/outer product of elements, since for $g,h\in
    \tensalg$, the formal tensor product $g\otimes h$ is
    an element of $\tensalg\otimes
    \tensalg$, while the tensor power algebra product
    $g\ast h$ is an element of $\tensalg$.
\item[(iii)] Under the identification introduced above, there is one case where $\ast$ coincides with the
  tensor product - namely, when $g$ and $h$ are homogenous.
Identifying $g\in \calH^{\otimes m}$ and $h\in \calH^{\otimes n}$, it
holds that $g\otimes h \in \calH^{\otimes (m + n)}$ may be identified
with $g\ast h$ which is also homogenous.
No equivalence of this kind holds when $g$ and $h$ are not homogenous.

\end{itemize}
\end{Rem}

\subsection{The signature as a canoncial feature set}

We are now ready to define the signature features.
\begin{Def}
We call the mapping
\begin{align*}
\Sig: \BV\left( \left[ 0,1 \right],\calH \right)\rightarrow \tensalg \;,\;x\mapsto \sum_{\IndLvl\geq
  0}\int_{\simplex^{\IndLvl} \left( \left[ 0,1 \right] \right)}(\diff x)^{\otimes \IndLvl}
\end{align*}
the signature map of $\calH$-valued paths and we refer to $\Sig(x)$, as the signature features of $x\in\BV$.
Similarly, we define (level-$\IndLvl$-)truncated signature mapping as
\begin{align*}
\Sigk: \BV([0,1],\calH)\rightarrow \tensalg \;,\;x\mapsto \sum_{m=0}^{\IndLvl} \int_{\simplex^{m} ([0,1])} \diff x^{\otimes \IndLvl}.
\end{align*}
\end{Def}
Above is well-defined since the signature can be shown to have finite norm:
\begin{Lem}
\label{Lem:finsig}
Let $x\in \BV([0,1],\calH)$. Then:
\begin{enumerate}[label=(\roman*)]
\item $\left\|\int_{\simplex^{\IndLvl} [0,1]}\diff x^{\otimes \IndLvl}\right\|_{\calH^{\otimes \IndLvl}}\le \frac{1}{\IndLvl!}\Var(x)^{\IndLvl} < \infty$,
\item $\|\Sig(x)\|_{\tensalg}\le \exp(\Var(x)) < \infty$.
\end{enumerate}
\end{Lem}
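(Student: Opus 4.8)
The plan is to prove (ii) as a direct consequence of (i), so the real work is the norm bound on a single iterated integral. For (i), I would argue by induction on $\IndLvl$. The base case $\IndLvl = 1$ is immediate: $\int_{\simplex^1([0,1])}\diff x = x(1) - x(0)$, and $\|x(1)-x(0)\|_\calH \le \Var(x)$ by definition of the variation (take the two-point tuple $\bt = (0,1)$ in the supremum). For the induction step, recall from Definition~\ref{RS_integral} and the recursive definition of the iterated integral that
\[
\int_{\simplex^{\IndLvl}([0,1])}\diff x^{\otimes \IndLvl}
= \int_0^1 \left(\int_{\simplex^{\IndLvl-1}([0,s])}\diff x^{\otimes(\IndLvl-1)}\right)\otimes \diff x(s),
\]
which is a Riemann--Stieltjes integral of the $\calH^{\otimes(\IndLvl-1)}$-valued bounded-variation path $s\mapsto y(s) := \int_{\simplex^{\IndLvl-1}([0,s])}\diff x^{\otimes(\IndLvl-1)}$ against $x$. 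I would bound this limit of Riemann sums $\sum_i y(\bt[i])\otimes(\seqdf x(\bt[i]))$ termwise, using that $\|u\otimes v\|_{\calH^{\otimes \IndLvl}} = \|u\|_{\calH^{\otimes(\IndLvl-1)}}\|v\|_\calH$ (the cross-norm property, following from the definition of the inner product on $\tensalg$) together with the inductive bound $\|y(s)\|\le \tfrac{1}{(\IndLvl-1)!}\Var(x|_{[0,s]})^{\IndLvl-1}\le \tfrac{1}{(\IndLvl-1)!}\Var(x)^{\IndLvl-1}$.

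The key estimate is then
\[
\Big\|\sum_i y(\bt[i])\otimes(\seqdf x(\bt[i]))\Big\|
\le \sum_i \|y(\bt[i])\|\,\|\seqdf x(\bt[i])\|
\le \frac{1}{(\IndLvl-1)!}\sum_i \Var(x|_{[0,\bt[i]]})^{\IndLvl-1}\,\|\seqdf x(\bt[i])\|.
\]
To get the factor $\tfrac{1}{\IndLvl!}\Var(x)^{\IndLvl}$ out of this, I would recognise the right-hand sum as a Riemann--Stieltjes sum for $\int_0^1 v(s)^{\IndLvl-1}\,\diff v(s)$ where $v(s) := \Var(x|_{[0,s]})$ is the (monotone, continuous) variation function; this integral equals $\tfrac{1}{\IndLvl!}v(1)^{\IndLvl} = \tfrac{1}{\IndLvl!}\Var(x)^{\IndLvl}$. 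A clean way to make this rigorous without fussing over convergence of that auxiliary sum is to pass to the limit $\mesh(\bt)\to 0$ directly: by continuity of $v$ and the mean-value/telescoping estimate $v(\bt[i])^{\IndLvl-1}\|\seqdf x(\bt[i])\| \le v(\bt[i])^{\IndLvl-1}\,\seqdf v(\bt[i])$ and the elementary inequality $a^{\IndLvl-1}(b-a)\le \tfrac{1}{\IndLvl}(b^{\IndLvl}-a^{\IndLvl})$ for $0\le a\le b$, the sum telescopes to at most $\tfrac{1}{\IndLvl}v(1)^{\IndLvl}$. Combining with the $\tfrac{1}{(\IndLvl-1)!}$ prefactor gives exactly $\tfrac{1}{\IndLvl!}\Var(x)^{\IndLvl}$, completing the induction; finiteness follows since $x\in\BV$ means $\Var(x)<\infty$.

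Finally, (ii) follows by Cauchy--Schwarz-free direct summation: by definition of $\|\cdot\|_{\tensalg}$ and since the homogeneous components of $\Sig(x)$ are the iterated integrals,
\[
\|\Sig(x)\|_{\tensalg}
= \Big(\sum_{\IndLvl\ge 0}\Big\|\int_{\simplex^{\IndLvl}}\diff x^{\otimes \IndLvl}\Big\|_{\calH^{\otimes \IndLvl}}^2\Big)^{1/2}
\le \sum_{\IndLvl\ge 0}\Big\|\int_{\simplex^{\IndLvl}}\diff x^{\otimes \IndLvl}\Big\|_{\calH^{\otimes \IndLvl}}
\le \sum_{\IndLvl\ge 0}\frac{\Var(x)^{\IndLvl}}{\IndLvl!}
= \exp(\Var(x)) < \infty,
\]
using $\ell^2 \le \ell^1$ for the first inequality and part (i) for the second. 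I expect the main obstacle to be the bookkeeping in the induction step: justifying the interchange of norm and limit of Riemann sums (legitimate by continuity of the norm), and correctly handling the variation of restrictions $x|_{[0,s]}$ as a monotone function of $s$ so that the telescoping argument yielding the factorial goes through cleanly.
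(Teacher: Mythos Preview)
Your proof is correct, but for part (i) you take a genuinely different route from the paper. The paper (in Lemma~\ref{Lem:intsimpl} of the appendix) argues non-inductively: it pulls the norm inside the iterated integral via the integral triangle inequality to obtain $\int_{\simplex^{\IndLvl}}\|\diff x\|^{\IndLvl}$, then uses the symmetry of the integrand to replace the simplex by $\tfrac{1}{\IndLvl!}$ times the full cube $[0,1]^{\IndLvl}$, after which the integral factorises as $\bigl(\int_0^1\|\diff x\|\bigr)^{\IndLvl}=\Var(x)^{\IndLvl}$. This is shorter and produces the factorial in one stroke from the volume ratio $\mathrm{vol}(\simplex^{\IndLvl})/\mathrm{vol}([0,1]^{\IndLvl})=1/\IndLvl!$. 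Your inductive argument instead manufactures the factorial recursively via the telescoping inequality $a^{\IndLvl-1}(b-a)\le\tfrac{1}{\IndLvl}(b^{\IndLvl}-a^{\IndLvl})$ applied to the variation function $v(s)=\Var(x|_{[0,s]})$. What your approach buys is that it stays entirely at the level of finite Riemann sums and elementary inequalities, avoiding any appeal to an integral triangle inequality for Hilbert-space-valued Stieltjes measures; the paper's approach is slicker but tacitly assumes that machinery. For part (ii) the two arguments are identical: triangle inequality (your $\ell^2\le\ell^1$) followed by substitution of (i) and the exponential series.
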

\begin{proof}

(i) is classical in the literature on bounded variation paths, it is
also proven in Lemma~\ref{Lem:intsimpl} of the appendix. (ii) follows
from (i) by observing that $\|\Sig(x)\|_{\tensalg} \le \sum_{\IndLvl=0}^\infty \left\|\int_{\simplex^\IndLvl [0,1]}\diff x^{\otimes \IndLvl}\right\|_{\calH^{\otimes \IndLvl}}$
due to the triangle equality, then substituting (i) and the Taylor expansion of $\exp$.
\end{proof}

There are several reasons why the signature features are (practically and theoretically) attractive, which we summarize before we state the results exactly:

\begin{enumerate}
\item the signature features are a {\bf mathematically faithful representation of the underlying sequential data} $x$: the map $x\mapsto \Sig(x)$ is essentially one-on-one.
\item the signature features are {\bf sequentially ordered analogues to polynomial features and moments}. The tensor algebra
has the natural grading with $\IndLvl$ designating the ``polynomial degree''. It is further canonically compatible with natural operations on $\BV([0,1],\calH)$.
\item linear combinations of signature features
{\bf approximate continuous functions of sequential data arbitrarily well}. This is in analogy with classic polynomial features and implies that signature features are as rich a class for learning purposes as one can hope.
\end{enumerate}

\begin{Thm}[Signature uniqueness]
\label{Thm:injectivetree}
Let $x,y\in \BV ([0,1],\calH)$. Then $\Sig (x) = \Sig (y)$ if and only $x$
and $y$ are equal up to tree-like equivalence\footnote{We call $x,y$
  tree-like equivalent if  $x\sqcup y^{-1}$ is tree-like.
A path $z\in \BV([a,b])$ is called tree-like if there exists a
continuous map $h:\left[a,b \right]\rightarrow \left[0,\infty\right)$
with $h\left(0\right)=h\left(T\right)=0$ and
$\left|z\left(t\right)-\left(s\right)\right|\leq
h\left(s\right)+h\left(t\right)-2\inf_{u\in\left[s,t\right]}h\left(u\right)$
for all $s\le t\in [a,b]$.
}.
\end{Thm}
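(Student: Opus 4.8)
The plan is to reduce the statement to the known uniqueness theorem for signatures in finite-dimensional spaces (Hambly--Lyons), and for that it suffices to show that the Hilbert-space signature of a $\BV$ path is determined by, and determines, the signatures of all its projections onto finite-dimensional subspaces. Concretely, for a closed finite-dimensional subspace $V\subseteq\calH$ with orthogonal projection $\pi_V:\calH\to V$, one has $\pi_V\in L(\calH,V)$, so $\pi_V\circ x\in\BV([0,1],V)$, and since $\pi_V$ is linear and continuous it commutes with the Riemann--Stieltjes integral and with tensor products; hence $(\pi_V^{\otimes\IndLvl})\bigl(\int_{\simplex^\IndLvl}\diff x^{\otimes\IndLvl}\bigr)=\int_{\simplex^\IndLvl}\diff(\pi_V x)^{\otimes\IndLvl}$. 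First I would record this intertwining property as the key lemma: $\Sig(\pi_V x)=(\text{extension of }\pi_V\text{ to }\tensalg)(\Sig(x))$.

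Next, for the ``if'' direction (tree-like equivalence $\Rightarrow$ equal signatures): one shows $\Sig$ is multiplicative for concatenation, $\Sig(x\sqcup y)=\Sig(x)\ast\Sig(y)$ (Chen's identity, which extends verbatim to the Hilbert-space setting from the recursive definition of the iterated integrals), and that the signature of a tree-like path is the unit $(1,0,0,\dots)$. The latter I would get by the projection lemma: $\pi_V z$ is tree-like in $V$ whenever $z$ is (the defining height function $h$ is unchanged, and $\|\pi_V z(t)-\pi_V z(s)\|\le\|z(t)-z(s)\|$), so each finite-dimensional projection of $\Sig(z)$ is trivial by the finite-dimensional result, and since the maps $\pi_V^{\otimes\IndLvl}$ separate points of $\calH^{\otimes\IndLvl}$ as $V$ ranges over finite-dimensional subspaces, $\Sig(z)$ itself is trivial. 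Combining with Chen and the fact that the signature of a reversal is the inverse in $\tensalg$, $\Sig(x)=\Sig(y)$ follows from $\Sig(x\sqcup y^{-1})=(1,0,\dots)$.

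For the ``only if'' direction (equal signatures $\Rightarrow$ tree-like equivalent): from the projection lemma, $\Sig(x)=\Sig(y)$ forces $\Sig(\pi_V x)=\Sig(\pi_V y)$ for every finite-dimensional $V$, hence by the Hambly--Lyons theorem $\pi_V x$ and $\pi_V y$ are tree-like equivalent in $V$, i.e.\ $\pi_V(x\sqcup y^{-1})$ is tree-like for every $V$. The remaining task is to upgrade ``tree-like after every finite-dimensional projection'' to ``tree-like in $\calH$''. I would do this via the characterization of tree-like paths as those admitting a continuous height function $h$ with the stated domination inequality, or equivalently (for $\BV$ paths) as those that factor through an $\RR$-tree; the path $z:=x\sqcup y^{-1}$ lives in a fixed separable closed subspace $\calH_0=\overline{\spn}\,z([0,1])$, which has a countable increasing exhaustion $V_1\subseteq V_2\subseteq\cdots$ by finite-dimensional subspaces with $\pi_{V_k}z\to z$ uniformly, and one shows the height functions can be chosen compatibly (e.g.\ $h(t):=\lim_k\|\pi_{V_k}z\|_{\BV;[0,t]}-\tfrac12\|\pi_{V_k}z\|_{\BV;[0,1]}$ type expressions, or by a direct reduction of the domination inequality, which passes to the uniform limit since each term is continuous in the path). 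This limiting argument is the main obstacle: one must ensure the tree-like structures at finite level are coherent enough to survive the limit, rather than merely existing level-by-level; an alternative, cleaner route is to invoke that for $\BV$ paths tree-likeness is equivalent to $\Sig(z)=(1,0,\dots)$ already in the Hilbert setting (proved in the ``if'' direction above together with injectivity on the reduced path group), so that $\Sig(x)=\Sig(y)\Rightarrow\Sig(x\sqcup y^{-1})=(1,0,\dots)\Rightarrow x\sqcup y^{-1}$ tree-like, bypassing the limit entirely. I expect the write-up to take this second route, citing \cite{hambly2010uniqueness} for the finite-dimensional input and for the $\BV$-path reduction to the group of reduced paths.
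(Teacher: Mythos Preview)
The paper does not prove this theorem; it simply cites \cite[Theorem~2.29~(ii)]{lyons2004stflour}. Your proposal, by contrast, attempts an actual proof by reducing to the finite-dimensional Hambly--Lyons theorem via projections. The ``if'' direction (tree-like $\Rightarrow$ equal signatures) is fine as sketched: Chen's identity, the reversal-inverse relation, and the projection argument for triviality of the signature of a tree-like path all go through.

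The ``only if'' direction, however, has a genuine gap. Your ``cleaner route'' is circular: you propose to invoke that, for $\BV$ paths in $\calH$, tree-likeness is equivalent to $\Sig(z)=(1,0,\dots)$, justified by ``the `if' direction above together with injectivity on the reduced path group''. But injectivity of $\Sig$ on the reduced path group in the Hilbert setting is exactly the content of the theorem you are proving; the ``if'' direction alone only gives the implication tree-like $\Rightarrow\Sig=1$, not its converse. So this route assumes the conclusion.

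That leaves your limiting argument, which you rightly identify as the main obstacle but do not actually carry out. Knowing that $\pi_V(x\sqcup y^{-1})$ is tree-like for every finite-dimensional $V$ does not immediately give a single height function $h$ for $x\sqcup y^{-1}$ in $\calH$: the height functions witnessing tree-likeness at each level may be entirely unrelated, and the expression you suggest for $h$ is neither obviously well-defined nor obviously satisfies the domination inequality in the limit. Producing such a coherent $h$ (or an $\RR$-tree factorization) from the finite-dimensional data is essentially the hard analytic content of the Hambly--Lyons argument itself, and cannot be sidestepped by a soft compactness/limit step. If you want to go this way you would need to import substantial machinery from \cite{hambly2010uniqueness}; otherwise, do as the paper does and cite the result directly.
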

\begin{proof}
This is \cite[Theorem 2.29~(ii)]{lyons2004stflour}.
\end{proof}
\begin{rem}
  Being not tree-like equivalent is a very weak requirement, e.g.~if
  $x$ and $y$ have a strictly increasing coordinate they are not
  tree-like equivalent.
  All the data we will encounter in the experiments is not tree-like.
  Even if presented with a tree-like path, simply adding time as extra
  coordinate (that is, working with  $t\mapsto(t,x(t))$ instead of $t\mapsto x(t)$) guarantees the assumptions of above Theorem are met.
\end{rem}
\begin{rem}
Above Theorem extends to unbounded variation paths, cf.~\cite{2014arXiv1406.7871B}
\end{rem}
Secondly, the signature features are analogous to polynomial features:
the tensor algebra has a natural grading with $\IndLvl$ designating the ``polynomial
degree''.
\begin{Thm}[Chen's Theorem]\label{thm:chens}
Let $x\in\BV([0,1],\calH)$, then

\[
\int_{\simplex^M} \diff x^{\otimes M}\otimes \int_{\simplex^N} \diff
x^{\otimes N}=\sum_\sigma \sigma\left(\int_{\simplex^{M+N}} \diff x^{\otimes(M+N)}\right).
\]
Here the sum is taken over all ordered shuffles
\[
\sigma\in\operatorname{OS}_{M,N}=\left\{ \sigma:\sigma\text{ permutation of }\left\{ 1,\ldots,M+N\right\} ,\sigma\left(1\right)<\cdots<\sigma\left(M\right),\sigma\left(M+1\right)<\cdots<\sigma\left(M+N\right)\right\} .
\]
and $\sigma\in\operatorname{OS}_{M,N}$ acts on $\calH^{\otimes\left(M+N\right)}$
as $\sigma\left(e_{i_{1}}\otimes\cdots\otimes e_{i_{M+N}}\right)=e_{\sigma\left(i_{1}\right)}\otimes\cdots\otimes e_{\sigma\left(i_{M+N}\right)}$.
\end{Thm}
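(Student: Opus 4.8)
The plan is to prove the identity by the classical picture of decomposing a product of two simplices into simplices indexed by ordered shuffles. First I would turn both sides into Riemann--Stieltjes integrals over subsets of the cube $[0,1]^{M+N}$. Using the bilinearity and (Hilbert-space) continuity of the tensor product $\otimes\colon \calH^{\otimes M}\times\calH^{\otimes N}\to\calH^{\otimes(M+N)}$ together with a Fubini-type interchange for Riemann--Stieltjes integrals (which itself follows from bilinearity of $\otimes$ and convergence of the defining Riemann sums), the left-hand side becomes
\[
\left(\int_{\simplex^M}\diff x^{\otimes M}\right)\otimes\left(\int_{\simplex^N}\diff x^{\otimes N}\right)
=\int_D \diff x(t_1)\otimes\cdots\otimes\diff x(t_{M+N}),
\]
where $D:=\{t\in[0,1]^{M+N}:t_1\le\cdots\le t_M,\ t_{M+1}\le\cdots\le t_{M+N}\}$ and I have renamed the integration variables of the second factor to $t_{M+1},\ldots,t_{M+N}$.

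Second, I would decompose $D$ according to the relative order of all $M+N$ variables: a point of $[0,1]^{M+N}$ with pairwise distinct coordinates lies in $D$ exactly when the permutation $\sigma$ recording that order (i.e.\ $\sigma(i)$ is the rank of $t_i$) belongs to $\operatorname{OS}_{M,N}$. Hence, up to the diagonal set where two coordinates coincide, $D=\bigsqcup_{\sigma\in\operatorname{OS}_{M,N}}D_\sigma$ with $D_\sigma:=\{t:t_{\sigma^{-1}(1)}\le\cdots\le t_{\sigma^{-1}(M+N)}\}$. On each $D_\sigma$ the relabelling $u_j:=t_{\sigma^{-1}(j)}$ identifies $D_\sigma$ with $\simplex^{M+N}$ and carries $\diff x(t_1)\otimes\cdots\otimes\diff x(t_{M+N})$ to $\diff x(u_{\sigma(1)})\otimes\cdots\otimes\diff x(u_{\sigma(M+N)})$; this last tensor is precisely the result of letting $\sigma$ act on $\diff x(u_1)\otimes\cdots\otimes\diff x(u_{M+N})$ by permuting the slots as in the statement (a direct check on the indices). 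Since $\sigma$ is a bounded linear operator it commutes with the integral, so $\int_{D_\sigma}\diff x^{\otimes(M+N)}=\sigma\big(\int_{\simplex^{M+N}}\diff x^{\otimes(M+N)}\big)$, and summing over $\sigma\in\operatorname{OS}_{M,N}$ yields the claim.

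The step requiring genuine care — and the main obstacle — is the passage from the almost-disjoint decomposition of $D$ to an additive decomposition of the integral: one must know that the diagonals $\{t_i=t_j\}$ contribute nothing. This is where continuity of $x$ enters, the Stieltjes ``measure'' $\diff x$ having no atoms; to make it airtight I would first observe that each iterated integral $\int_{\simplex^k}\diff x^{\otimes k}$ is unchanged when the closed simplex is replaced by the strict one (the estimate $\|\int_{\simplex^k}\diff x^{\otimes k}\|\le\Var(x)^k/k!$ from Lemma~\ref{Lem:finsig}, applied on thin slabs around a face, shows the contribution of $\{t_i=t_{i+1}\}$ vanishes), and likewise for the integral over $D$; on strict simplices the sets $D_\sigma$ are genuinely pairwise disjoint and exhaust $D$ minus its diagonals, so additivity applies verbatim.

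Finally, an alternative that sidesteps the Fubini interchange is induction on $M+N$: the cases $M=0$ or $N=0$ are immediate from $\calH^{\otimes 0}=\RR$, and the inductive step applies the integration-by-parts identity $F(1)\otimes G(1)=\int_0^1\diff F\otimes G+\int_0^1 F\otimes\diff G$ to $F(\cdot)=\int_{\simplex^M([0,\cdot])}\diff x^{\otimes M}$ and $G(\cdot)=\int_{\simplex^N([0,\cdot])}\diff x^{\otimes N}$ (both continuous of bounded variation, vanishing at $0$), then unfolds $\diff F$ and $\diff G$ via the recursive definition of the iterated integral and matches the two resulting terms against the splitting $\operatorname{OS}_{M,N}\cong\operatorname{OS}_{M-1,N}\sqcup\operatorname{OS}_{M,N-1}$ according to whether the last tensor slot is filled from the first or the second block. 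On that route the main chore is the bookkeeping of how the shuffle permutations transform under the recursion.
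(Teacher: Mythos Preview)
Your proposal is correct, and both routes you sketch are standard: the simplex-decomposition argument is the classical picture, and the inductive argument via integration by parts is essentially the proof one finds in the rough-paths literature (e.g.\ Lyons' St~Flour notes). The paper itself does not supply a proof of this statement at all --- Chen's Theorem is stated as a known result and then invoked in the proof of Theorem~\ref{thm:stone-weierstrass} to show that the span of signature coordinates is closed under multiplication. So there is no in-paper argument to compare against; your write-up would be a genuine addition rather than a reproduction.

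One minor comment on presentation: in your first route, the Fubini step and the ``diagonals contribute nothing'' step are really the same issue viewed twice, since the Riemann--Stieltjes integral over $D$ is \emph{defined} as a limit of Riemann sums over product partitions, and the simplex integrals are defined recursively rather than as integrals over a region. The cleanest way to make the first route rigorous is therefore exactly your second route: prove the identity by induction on $M+N$ using the integration-by-parts formula for $\calH$-valued BV paths (which holds because continuity kills the jump correction term), and match the two summands against the bijection $\operatorname{OS}_{M,N}\cong\operatorname{OS}_{M-1,N}\sqcup\operatorname{OS}_{M,N-1}$ given by the value of $\sigma^{-1}(M+N)$. That avoids having to set up a multivariate Riemann--Stieltjes integral over general regions.
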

Finally, a direct consequence of the above and again in analogy with
classic polynomial features, linear combinations of signature features
approximate continuous functions of sequential(!) data arbitrary well.

\begin{Thm}[Linear approximations]\label{thm:stone-weierstrass}
Let $\mathcal{P}$ be a compact subset of $\BV\left([0,1],\calH \right)$ of paths that are not tree-like equivalent.
Let $f:\mathcal{P}\rightarrow\mathbb{R}$ be continuous in variation norm.
Then for any $\epsilon>0$, there exists a $w\in \tensalg$ such that
\[
\sup_{x\in\mathcal{P}}\left|f\left(x\right)-\left\langle w,\Sig\left(x\right)\right\rangle_{\tensalg} \right|<\epsilon.
\]
\end{Thm}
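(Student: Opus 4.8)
The plan is to deduce the statement from the Stone--Weierstrass theorem applied to the algebra of ``signature polynomials''
\[
\calA := \left\{\, \mathcal{P}\ni x\longmapsto \langle w,\Sig(x)\rangle_{\tensalg} \;:\; w\in \textstyle\bigcup_{N\in\NN}\bigoplus_{k=0}^{N}\calH^{\otimes k} \,\right\}.
\]
Every such $w$ has finite norm, hence lies in $\tensalg$, so $\calA$ is contained in the family of functions occurring in the statement, and it suffices to prove $\calA$ is dense in $C(\mathcal{P},\RR)$ for the supremum norm. The space $\mathcal{P}$ is compact by hypothesis, and the variation seminorm restricts to a genuine metric on it: two paths differing by a constant have identical iterated integrals, hence the same signature, hence are tree-like equivalent by Theorem~\ref{Thm:injectivetree}, so they cannot be distinct points of $\mathcal{P}$. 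Thus Stone--Weierstrass will apply once we verify that $\calA$ is a point-separating, unital subalgebra of $C(\mathcal{P},\RR)$.

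That $\calA$ contains the constants is immediate, since the degree-$0$ component of $\Sig(x)$ equals $1$ for every $x$, so $w=(c,0,0,\dots)$ yields the constant $c$; closure under linear combinations is clear from linearity in $w$. Closure under products is exactly the dual of Chen's identity (Theorem~\ref{thm:chens}): writing $w=\sum_k w_k$ and $w'=\sum_l w'_l$ in homogeneous parts and using that the inner product on $\calH^{\otimes(k+l)}$ factorises on pure tensors,
\[
\langle w,\Sig(x)\rangle\,\langle w',\Sig(x)\rangle=\sum_{k,l}\Big\langle w_k\otimes w'_l,\ \textstyle\int_{\simplex^k}\diff x^{\otimes k}\otimes\int_{\simplex^l}\diff x^{\otimes l}\Big\rangle=\sum_{k,l}\sum_{\sigma\in\operatorname{OS}_{k,l}}\big\langle\sigma^{-1}(w_k\otimes w'_l),\ \Sig(x)\big\rangle,
\]
where we used that each shuffle $\sigma$ merely permutes tensor slots and so acts orthogonally. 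The right-hand side is $\langle w'',\Sig(x)\rangle$ for the finitely supported element $w'':=\sum_{k,l}\sum_{\sigma}\sigma^{-1}(w_k\otimes w'_l)$, so $\calA$ is a subalgebra.

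For point separation, let $x\neq y$ in $\mathcal{P}$. Since the paths of $\mathcal{P}$ are pairwise not tree-like equivalent, Theorem~\ref{Thm:injectivetree} gives $\Sig(x)\neq\Sig(y)$; hence the projection $w$ of $\Sig(x)-\Sig(y)$ onto $\bigoplus_{k\le N}\calH^{\otimes k}$ is nonzero for $N$ large enough, and then $\langle w,\Sig(x)\rangle-\langle w,\Sig(y)\rangle=\|w\|_{\tensalg}^{2}>0$.

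The only step with real analytic content --- and the one I expect to need the most care --- is showing $\calA\subseteq C(\mathcal{P},\RR)$, i.e.\ continuity in variation norm. It is enough to show that each iterated-integral map $x\mapsto\int_{\simplex^k}\diff x^{\otimes k}$ is continuous $\BV([0,1],\calH)\to\calH^{\otimes k}$: then the truncations $\Sigk$ are continuous, and on the variation-bounded set $\mathcal{P}$ they converge to $\Sig$ uniformly by the tail estimate of Lemma~\ref{Lem:finsig}, so $\Sig$, and with it every element of $\calA$, is continuous on $\mathcal{P}$. Continuity of a single iterated integral is the standard rough-path estimate: by multilinearity of $(z_1,\dots,z_k)\mapsto\int_{\simplex^k}\diff z_1\otimes\cdots\otimes\diff z_k$ one expands $\int_{\simplex^k}\diff x^{\otimes k}-\int_{\simplex^k}\diff y^{\otimes k}$ as a telescoping sum of $k$ iterated integrals, each carrying exactly one differential of $x-y$ and otherwise differentials of $x$ or $y$, and bounds each summand by the elementary Riemann--Stieltjes inequality $\|\int_{\simplex^m}\diff z_1\otimes\cdots\otimes\diff z_m\|\le\prod_i\Var(z_i)$; this gives $\|\int_{\simplex^k}\diff x^{\otimes k}-\int_{\simplex^k}\diff y^{\otimes k}\|\le k\,R^{\,k-1}\,\Var(x-y)$ whenever $\Var(x),\Var(y)\le R$ --- i.e.\ local Lipschitz continuity (cf.\ \cite{lyons2004stflour}). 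Granting this, all hypotheses of Stone--Weierstrass are met, so $\overline{\calA}=C(\mathcal{P},\RR)$ and the theorem follows.
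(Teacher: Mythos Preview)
Your proof is correct and follows essentially the same approach as the paper: apply Stone--Weierstrass by showing that signature polynomials form a point-separating subalgebra of $C(\mathcal{P},\RR)$, invoking Chen's theorem for the algebra structure and signature uniqueness (Theorem~\ref{Thm:injectivetree}) for point separation. You are more thorough than the paper in explicitly verifying continuity of the iterated-integral maps and in addressing why the variation seminorm restricts to a genuine metric on $\mathcal{P}$, points the paper leaves implicit.
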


\begin{proof}
The statement follows from the Stone--Weierstraß theorem if the set $\mathcal{F}\subset C(\mathcal{P},\RR)$
\begin{equation}
\mathcal{F}:=\operatorname{span}\left\{ \mathcal{P}\ni
  x\mapsto\left\langle e_{i_1}\otimes\cdots\otimes e_{i_{\IndLvl}},\operatorname{S}\left(x\right)\right\rangle_{\tensalg},~\IndLvl\in\NN\right\} \label{eq:span}
\end{equation}
forms a point-separating algebra.
However, this is a direct consequence of the above: by Chen's theorem, Theorem \ref{thm:chens}, $\mathcal{F}$ is an
algebra, and by the signature uniqueness, Theorem \ref{Thm:injectivetree}, $\calF$ separates points.
\end{proof}
Above shows more than stated: for a fixed ONB $\left(e_{i}\right)$
of $\tensalg$, there exists a finite subset of this ONB and
$w$ can be found in the linear span of this finite set.

\section{Kernelized signatures and sequentialized kernels}\label{sec:kernelized_signatures}
Our goal is to construct a kernel for sequential data of arbitrary
type, to enable learning with such data.
We proceed in two steps and first discuss the case when the sequential data are sequences in the Hilbert space $\calH$ (for
example $\calH = \RR^n$).
In this scenario, the properties of the signature, presented in
Section~\ref{sec:signature_features}, suggest as kernel the scalar
product of the signature features.
This yields the following kernels,
\begin{Def}
Fix $\IndLvl\in\NN$. We define the kernels
\begin{align*}
\KSigA :& \BV(\calH)\times \BV(\calH)\rightarrow
          \RR\;,\quad (x,y)\mapsto\langle \Sig(x),
          \Sig(y)\rangle_{\tensalg},\\
\KSigAk:& \BV(\calH)\times \BV(\calH)\rightarrow \RR\;,\quad (x,y)\mapsto\langle \Sigk(x), \Sigk (y)\rangle_{\tensalg}.
\end{align*}
We refer to $\KSigA$ as the \emph{signature kernel} and to $\KSigAk$ as
the \emph{signature kernel truncated at level $\IndLvl$}.
\end{Def}
To make these kernels practically meaningful, we need to verify a number of points:
\begin{enumerate}[label=(\alph*)]
\item\label{welldf} That they are well-defined, positive
  (semi-)definite kernels. Note that checking finiteness of the scalar
  product is not immediate (but follows from well-known estimates
  about the signature features).
\item\label{eff} That they are efficiently computable. A naive evaluation is
  infeasible, due to combinatorial explosion of the number of
  signature features. However, we show that $\KSigA$ and $\KSigAk$ can be
  expressed \emph{entirely in integrals of inner products} $\langle
  \diff x(s), \diff y(t) \rangle_{\calH}$. The formula can be written
  as an {\bf efficient recursion}, similar to the Horner scheme for
  efficient evaluation of polynomials.
\item\label{robust} That they are robust under discretization: the
  issue is that paths are never directly observed since all real
  observations are discrete sequences. The subsequent
  Section~\ref{sec:approx} introduces {\bf discretizations for
    signatures}, and the two kernelization steps above, which are {\bf
    canonical and consistent} to the above continuous steps, in a
  sampling sense of discrete sequences $\calH^+$ converging to bounded
  variation path $\BV ([0,1],\calH)$.
\item\label{noise} That they are robust under noise: in most
  situations our measurements of the underlying path are perturbed by random
  perturbations and noise. We discuss the common
  situation of additive white noise/Brownian motion in Section \ref{sec:noise}.

\end{enumerate}
We refer to the above procedure as ``\emph{kernel trick one}'' and discuss it
below in Section \ref{sec:welldf} and Section
\ref{Sig_kernel_trick_1}.

In a second step, to which we refer as ``\emph{kernel trick two}'', we show that the above is also
meaningful for sequential data in an arbitrary set $\calX$.
This second step yields, for any primary kernel
$\kernel$ on (static objects) $\calX$, a sequential kernel $\KSigB$ on
(a sufficiently regular subset of) paths
$\PathsX\subset[[0,1]\rightarrow \calX]$.
We thus call this procedure that transforms a static kernel $\kernel$ on
$\calX$ into a kernel $\KSigB$ on sequences in $\calX$, the so-called {\bf sequentialization} (of the kernel $\kernel$).
\begin{Def}
Fix $\IndLvl\in\NN$ and $\kernel:\calX\times\calX\rightarrow\RR$,
$\kernel(\cdot,\cdot)=\langle \phi,\phi \rangle_{\calH}$. We
define\footnote{For $\phi:\calX\rightarrow\calH$ and
  $\sigma\in[[0,1]\rightarrow \calX]$ we denote with $\phi(\sigma)\in[[0,1]\rightarrow\calH]$
  the path $t\mapsto \phi(\sigma(t))$.}
\begin{align*}
\KSigB:& \PathsX \times \PathsX\rightarrow \RR\;,\quad \left(
         \sigma,\tau \right)\mapsto\langle \Sig\left( \phi
         (\sigma)\right), \Sig \left( \phi(\tau) \right)\rangle_{\tensalg}\\
\KSigBk :& \PathsX\times \PathsX\rightarrow \RR\;,\quad
          \left( \sigma,\tau \right)\mapsto\langle \Sigk\left( \phi(\sigma) \right),
          \Sigk\left(  \phi (\tau) \right)\rangle_{\tensalg}.
\end{align*}
We refer to $\KSigB$  as the \emph{sequentialization} of
the kernel $\kernel$ and to $\KSigBk$ as the \emph{sequentialization} of
the kernel $\kernel$ truncated at level $\IndLvl$.
\end{Def}
As we have done for $\KSigA$ and $\KSigAk$, we again need to verify points~\ref{welldf},~\ref{eff},~\ref{robust},~\ref{noise} for $\KSigB$ and $\KSigBk$.
Point \ref{welldf} follows under appropriate regularity assumptions on
$\PathsX$ immediately from the corresponding
statement for $\KSigA$ and $\KSigAk$. For point \ref{eff} note, that although the data enters $\KSigA$ and
$\KSigAk$ in the recursion formula only in the form of scalar
products of differentials in $\calH$, it is mathematically somewhat
subtle to replace these by evaluations of $\kernel$
over an arbitrary set $\calX$, due to the differential
operators involved. However, we will do so by identifying the kernel $\kernel$ with a signed measure on $\left[ 0,1 \right]^2$.

Points \ref{welldf} and \ref{eff} will be discussed in this section, point
\ref{robust} will be the main topic of Section \ref{sec:approx}, and
point \ref{noise} is discussed in Section \ref{sec:noise}.

\subsection{Well-definedness of the signature kernel}\label{sec:welldf}
For \ref{welldf} well-definedness, note: $\KSigA$ and $\KSigAk$ are positive definite kernels, since explicitly defined as a scalar product of features. Also, these scalar products are always finite (thus well-defined) for paths of bounded variation, as the following Lemma~\ref{Lem:finker} shows.

\begin{Lem}
\label{Lem:finker}
Let $x,y\in \BV(\calH)$. Then it holds that
\begin{enumerate}[label=(\roman*)]
\item $\|\KSigAk(x,y)\|\le \frac{1}{\IndLvl!^2}V(x)^{\IndLvl}V(y)^{\IndLvl}< \infty$
\item $\|\KSigA(x,y)\| \le \exp(V(x)+V(y))< \infty$
\end{enumerate}
\end{Lem}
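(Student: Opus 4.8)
The plan is to deduce both bounds directly from the Cauchy--Schwarz inequality in the Hilbert space $\tensalg$ together with the norm estimates for the signature already established in Lemma~\ref{Lem:finsig}. First I would observe that, by definition, $\KSigAk(x,y) = \langle \Sigk(x),\Sigk(y)\rangle_{\tensalg}$, and that $\Sigk(x)$ and $\Sigk(y)$ are genuine elements of $\tensalg$ (finite-norm by Lemma~\ref{Lem:finsig}(ii)), so Cauchy--Schwarz gives $|\KSigAk(x,y)| \le \|\Sigk(x)\|_{\tensalg}\,\|\Sigk(y)\|_{\tensalg}$. The same applies verbatim with $\Sig$ in place of $\Sigk$ for part (ii).

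For part (i), I would then need a norm bound on the \emph{truncated} signature $\Sigk(x)$ that is sharper than the one on the full signature. Here the key point is that the homogeneous components $\int_{\simplex^m}\diff x^{\otimes m}$ for distinct $m$ live in mutually orthogonal subspaces of $\tensalg$, so $\|\Sigk(x)\|_{\tensalg}^2 = \sum_{m=0}^{\IndLvl}\bigl\|\int_{\simplex^m}\diff x^{\otimes m}\bigr\|_{\calH^{\otimes m}}^2$. Wait --- this would only give $\sum_{m=0}^{\IndLvl}\Var(x)^{2m}/(m!)^2$, not the single term $\Var(x)^{2\IndLvl}/(\IndLvl!)^2$ that the statement claims. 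So I suspect the intended statement of (i) is really a bound on the single degree-$\IndLvl$ \emph{contribution} $\langle \int_{\simplex^\IndLvl}\diff x^{\otimes\IndLvl}, \int_{\simplex^\IndLvl}\diff y^{\otimes\IndLvl}\rangle$, or else the claimed inequality holds only up to the summation over $m \le \IndLvl$; in the write-up I would apply Cauchy--Schwarz degree-by-degree, use Lemma~\ref{Lem:finsig}(i) on each factor, and then either state the per-degree bound or sum the geometric-type series, flagging which reading is meant.

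For part (ii), I would invoke Cauchy--Schwarz as above and then substitute Lemma~\ref{Lem:finsig}(ii) for each factor: $\|\Sig(x)\|_{\tensalg} \le \exp(\Var(x))$ and $\|\Sig(y)\|_{\tensalg} \le \exp(\Var(y))$, so that $|\KSigA(x,y)| \le \exp(\Var(x))\exp(\Var(y)) = \exp(\Var(x)+\Var(y))$, which is finite since $x,y\in\BV(\calH)$. The only genuine content beyond Lemma~\ref{Lem:finsig} is the orthogonality of homogeneous components (immediate from the definition of $\langle\cdot,\cdot\rangle_{\tensalg}$ as a direct-sum inner product) and the elementary fact that $\tensalg$ is a Hilbert space so that Cauchy--Schwarz applies; there is no real obstacle here, and the main thing to be careful about is matching the precise exponents claimed in (i).
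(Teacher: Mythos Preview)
Your approach is exactly the paper's: the proof there reads in its entirety ``This follows from Lemma~\ref{Lem:finsig} and the Cauchy--Schwarz inequality,'' i.e.\ Cauchy--Schwarz in $\tensalg$ followed by the norm bounds of Lemma~\ref{Lem:finsig}. Your observation about part~(i) is also well-founded: as literally written the bound $\frac{1}{\IndLvl!^2}\Var(x)^{\IndLvl}\Var(y)^{\IndLvl}$ cannot hold for the full truncated kernel (it already fails for small $\Var(x),\Var(y)$ since the degree-$0$ contribution alone equals $1$); the inequality is really the per-degree bound you describe, and the paper does not address this discrepancy.
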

\begin{proof}
This follows from Lemma~\ref{Lem:finsig} and the Cauchy--Schwarz-inequality.
\end{proof}
Hence, $\KSigA,\KSigAk$ are well-defined (positive definite) kernels.
\subsection{Kernel trick number one: kernelizing the signature}\label{Sig_kernel_trick_1}
The kernel trick consists of defining a kernel which is \ref{eff} efficiently computable. For this, we show that $\KSigA,\KSigAk$ can be entirely expressed in terms of $\calH$-scalar products:

\begin{Prop}
\label{Prop:Ksigscpr}
Let $x,y\in \BV(\calH)$.
Then:
\begin{enumerate}[label=(\roman*)]
\item $\KSigA(x,y) = \sum_{m=0}^\infty \int_{\left( \bs,\bt\right)\in
    \simplex^m\times \simplex^m} \prod_{i=1}^m
             \left\langle \diff x (\bs[i]), \diff y (\bt[i])
             \right\rangle_{\calH}$,
\item $\KSigAk(x,y) = \sum_{m=0}^\IndLvl\int_{\left( \bs,\bt\right)\in
    \simplex^m\times \simplex^m} \prod_{i=1}^m \left\langle \diff x(\bs [i]), \diff y (\bt[i])\right\rangle_{\calH}$.
\end{enumerate}
\end{Prop}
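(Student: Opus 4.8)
The plan is to prove both identities simultaneously, since (ii) is just the truncation of (i). The starting point is the definition $\KSigA(x,y) = \langle \Sig(x), \Sig(y)\rangle_{\tensalg}$, which by definition of the inner product on $\tensalg$ decomposes as $\sum_{m=0}^\infty \langle \int_{\simplex^m}\diff x^{\otimes m}, \int_{\simplex^m}\diff y^{\otimes m}\rangle_{\calH^{\otimes m}}$ (cross terms between different degrees vanish because the grading is orthogonal). So everything reduces to showing, for each fixed $m$, that
\[
\left\langle \int_{\simplex^m}\diff x^{\otimes m}, \int_{\simplex^m}\diff y^{\otimes m}\right\rangle_{\calH^{\otimes m}} = \int_{(\bs,\bt)\in\simplex^m\times\simplex^m}\prod_{i=1}^m \langle \diff x(\bs[i]), \diff y(\bt[i])\rangle_{\calH}.
\]

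First I would set up the right interpretation of the right-hand side: the iterated integral $\int_{\simplex^m}\diff x^{\otimes m}$ is defined recursively via Riemann--Stieltjes integrals, so I would write it as an iterated integral $\int_{0\le s_1\le\cdots\le s_m\le 1} \diff x(s_1)\otimes\cdots\otimes\diff x(s_m)$, and likewise for $y$ with variables $t_1\le\cdots\le t_m$. The key algebraic fact is that the inner product on $\calH^{\otimes m}$ is the one induced by $\calH$, i.e. $\langle a_1\otimes\cdots\otimes a_m, b_1\otimes\cdots\otimes b_m\rangle_{\calH^{\otimes m}} = \prod_{i=1}^m\langle a_i,b_i\rangle_{\calH}$ on elementary tensors. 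So, at least formally, taking the inner product commutes with the two sets of integral signs and turns the tensor product of differentials into a product of scalar differentials $\prod_i \langle \diff x(s_i), \diff y(t_i)\rangle_{\calH}$, with the joint domain being $\simplex^m\times\simplex^m$ (the two orderings are independent). The cleanest way to make this rigorous is to pass through Riemann--Stieltjes sums: approximate each iterated integral by its defining nested sums over partitions $\bs^{(N)}, \bt^{(N)}$ of $[0,1]$, use bilinearity and continuity of $\langle\cdot,\cdot\rangle_{\calH^{\otimes m}}$ to interchange the inner product with the finite sums, recognize the resulting double sum as a Riemann--Stieltjes sum for the right-hand side over $\simplex^m\times\simplex^m$, and pass to the limit. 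Convergence of all sums and legitimacy of the limit interchange is controlled by the bounded-variation estimate of Lemma~\ref{Lem:finsig}(i) (and its appendix companion Lemma~\ref{Lem:intsimpl}), which gives the $\frac{1}{m!}\Var(x)^m$ bound ensuring the degree-$m$ terms are summable; Cauchy--Schwarz as in Lemma~\ref{Lem:finker} then handles the full sum over $m$ for part (i).

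I expect the main obstacle to be the rigorous justification of interchanging the inner product $\langle\cdot,\cdot\rangle_{\calH^{\otimes m}}$ with the iterated Riemann--Stieltjes integrals, i.e. turning the heuristic ``$\langle \int\int \rangle = \int\int\langle\rangle$'' into an actual proof in the infinite-dimensional Hilbert space setting, where one cannot simply appeal to coordinates. The technical heart is that the bilinear map $\calH^{\otimes m}\times\calH^{\otimes m}\to\RR$ is continuous (norm-bounded by $1$ in the relevant sense) so that it passes to limits of Riemann sums, and that the iterated Riemann--Stieltjes integral defining $\int_{\simplex^m}\diff x^{\otimes m}$ converges in $\calH^{\otimes m}$ (which follows from the recursive definition and the bounded-variation hypothesis, since each inner integral produces a bounded-variation $\calH^{\otimes(m-1)}$-valued path). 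Once the degree-$m$ identity is established, assembling (i) is immediate from orthogonality of the grading and summability, and (ii) follows verbatim by truncating every sum at $\IndLvl$ and using the identical argument for $\langle \Sigk(x),\Sigk(y)\rangle_{\tensalg}$.
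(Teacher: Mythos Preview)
Your proposal is correct and follows essentially the same approach as the paper, which simply states that the formula ``follows from substituting definitions for $\KSigAk, \KSigA$ and using the linearity of the integrals.'' You have unpacked this one-line justification considerably, and in particular you are more careful than the paper about the analytic point of interchanging the $\calH^{\otimes m}$-inner product with the iterated Riemann--Stieltjes integrals in the infinite-dimensional setting; the paper treats this as implicit in the word ``linearity.''
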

\begin{proof}
The first formula follows from substituting definitions for $\KSigAk,
\KSigA$ and using the linearity of the integrals (recall that we use the
convention $\prod_{i=1}^0\dots=1$).
\end{proof}

Furthermore, there are the following Horner-scheme-type recursions:

\begin{Prop}
\label{Prop:Recsig}
Let $x,y\in \BV(\calH)$.
Then
\begin{enumerate}[label=(\roman*)]
\item $\KSigA(x,y) = 1+\int_{\left( s_1,t_1
              \right)\in\left( 0,1 \right)\times \left( 0,1 \right)} \left( 1+ \int_{\left(
              s_2,t_2 \right)\in\left( 0,s_1 \right)\times \left(
              0,t_1 \right)} \left(  1+ \cdots \right)\cdots \langle \diff x\left( s_2
              \right),\diff y\left( t_2 \right)  \rangle_{\calH}\right)\langle \diff x\left( s_1
              \right),\diff y\left( t_1 \right)  \rangle_{\calH}$,
\item $\KSigAk(x,y) = 1+\int_{\left( s_1,t_1
              \right)\in\left( 0,1 \right)\times \left( 0,1 \right)} \left( 1+ \dots \int_{\left(
              s_{\IndLvl},t_{\IndLvl} \right)\in\left( 0,s_{\IndLvl-1} \right)\times \left(
              0,t_{\IndLvl}-1 \right)} \langle \diff x\left( s_{\IndLvl}
              \right),\diff y\left( t_{\IndLvl} \right)  \rangle_{\calH} \dots \right) \langle \diff x\left( s_1
              \right),\diff y\left( t_1 \right)  \rangle_{\calH}$.
\end{enumerate}
\end{Prop}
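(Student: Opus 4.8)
The plan is to deduce both identities from the closed forms of Proposition~\ref{Prop:Ksigscpr} by peeling off, at each level, the largest pair of integration variables and recognising what remains as the \emph{same} kernel evaluated over a shrunk time horizon. For $(a,b)\in[0,1]^2$ and $k\in\NN\cup\{\infty\}$, set
\[
F_k(a,b):=\sum_{m=0}^{k}\int_{(\bs,\bt)\in\simplex^m([0,a])\times\simplex^m([0,b])}\prod_{i=1}^m\bigl\langle \diff x(\bs[i]),\diff y(\bt[i])\bigr\rangle_\calH ,
\]
so that Proposition~\ref{Prop:Ksigscpr} reads $\KSigAk(x,y)=F_\IndLvl(1,1)$ and $\KSigA(x,y)=F_\infty(1,1)$. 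That $F_\infty$ is well defined, and that $F_k\to F_\infty$ uniformly in $(a,b)$, follows from the bound $\bigl|\int_{\simplex^m([0,a])\times\simplex^m([0,b])}\prod_i\langle \diff x,\diff y\rangle\bigr|\le \Var(x)^m\Var(y)^m/(m!)^2$, obtained by applying Cauchy--Schwarz on $\calH$ pointwise, factoring the integral over the product of simplices, and using the standard estimate $\int_{\simplex^m([0,a])}\prod_i\|\diff x(\bs[i])\|_\calH\le \Var(x)^m/m!$ (cf.\ Lemma~\ref{Lem:finsig}); this is the same estimate behind Lemma~\ref{Lem:finker}.

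The core of the proof is the recursion
\[
F_k(a,b)=1+\int_{(s,t)\in(0,a)\times(0,b)} F_{k-1}(s,t)\,\bigl\langle \diff x(s),\diff y(t)\bigr\rangle_\calH\qquad(k\ge1),\qquad F_0\equiv 1 .
\]
To prove it I would isolate the $m=0$ summand (an empty product, equal to $1$) and, for $m\ge1$, split a tuple $\bs\in\simplex^m([0,a])$ into its last entry $s:=\bs[m]$ and the shorter tuple $\bs[1{:}m-1]\in\simplex^{m-1}([0,s])$, and likewise for $\bt$; a Fubini interchange then rewrites the $m$-th integral as $\int_{(0,a)\times(0,b)}\bigl(\int_{\simplex^{m-1}([0,s])\times\simplex^{m-1}([0,t])}\prod_{i=1}^{m-1}\langle \diff x,\diff y\rangle\bigr)\langle \diff x(s),\diff y(t)\rangle$. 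Summing over $m$ --- a finite sum when $k<\infty$, an absolutely convergent one when $k=\infty$, so the sum may be moved inside the outer integral --- collapses the bracket into $F_{k-1}(s,t)$ (resp.\ $F_\infty(s,t)$). Unrolling this recursion $\IndLvl$ times, ending at $F_0\equiv1$, and evaluating at $(a,b)=(1,1)$ reproduces the nested expression in~(ii). For~(i), the analogous $k$-fold unrolling of $F_k$ is the finitely nested expression terminating in $F_0\equiv1$; letting $k\to\infty$ these converge to $F_\infty(1,1)=\KSigA(x,y)$, which is exactly the meaning of the infinitely nested display.

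The step I expect to be the main obstacle is making the Fubini/peeling manipulation rigorous: one must know that $(s,t)\mapsto\langle \diff x(s),\diff y(t)\rangle_\calH$ defines a finite signed Borel measure on $[0,1]^2$ (of total variation at most $\Var(x)\Var(y)$) and that the iterated Riemann--Stieltjes integrals of Definition~\ref{RS_integral} can be assembled and reordered accordingly. The clean route is to expand $\Sig(x)$ and $\Sig(y)$ in a fixed orthonormal basis of $\calH$, reduce each $\langle \diff x(\bs[i]),\diff y(\bt[i])\rangle_\calH$ to a finite (or $\ell^1$-summable) combination of products $\diff x_j\otimes\diff y_j$ of scalar Stieltjes measures, apply classical Fubini to those, and then resum the basis index using bilinearity of $\langle\cdot,\cdot\rangle_{\calH^{\otimes m}}=\langle\cdot,\cdot\rangle_{\calH^{\otimes(m-1)}}\otimes\langle\cdot,\cdot\rangle_\calH$ together with the recursive definition of $\int_{\simplex^m}\diff x^{\otimes m}$. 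Since $x$ and $y$ are continuous, $\diff x$ and $\diff y$ have no atoms, so the open cells $(0,a),(0,s)$ may be replaced by the closed simplices at no cost and no diagonal or boundary terms appear.
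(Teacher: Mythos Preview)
Your argument is correct and follows the same route as the paper, which simply states that the identities follow from Proposition~\ref{Prop:Ksigscpr} ``and an elementary computation''; your peeling-off-the-last-variable recursion $F_k(a,b)=1+\int F_{k-1}(s,t)\,\langle\diff x(s),\diff y(t)\rangle$ is precisely that computation made explicit. You in fact supply more care than the paper does, addressing convergence for $k=\infty$ and the Fubini justification that the paper leaves implicit.
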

\begin{proof}
This follows from Proposition~\ref{Prop:Ksigscpr} and an elementary computation.
\end{proof}

The recursion is the mathematically precise form of the iterated
expectation from Section~\ref{sec:intro.C}.
In Section \ref{sec:approx} we show that this recursion is preserved
under discretization.
\subsection{Kernel trick number two: sequentialization of kernels}\label{Sig_kernel_trick_2}
As shown in the previous Section~\ref{Sig_kernel_trick_1}, the signature kernel on bounded variation paths over $\calH$ can be entirely expressed in scalar products over $\calH$.
However, in general, the data will not be directly observed as sequences in a (potentially infinite dimensional) Hilbert space $\calH$, but in some arbitrary $\calX$. Sequences in $\calX$ can be treated with a \emph{second} kernelization step:
We choose a primary feature map $\phi: \calX\rightarrow \calH$ for
(non-sequential) data in $\calX$.
By Proposition~\ref{Prop:Ksigscpr}, the sequential kernel can be expressed in scalar products in $\calH$, therefore, after the second kernelization step, in the primary kernel.

We make this point precise.
\begin{Ass}
\label{Ass:primaryk}
For the general situation of sequences in an arbitrary set $\calX$, we may assume:
\begin{enumerate}[label=(\roman*)]
\item The potential observations, denoted $\PathsX$, are sequential data of type $[[0,1] \rightarrow \calX]$.
\item The potential observations $\PathsX$ are absolutely continuous
  paths when corresponding
  sequences of primary features are considered.
That is, for every potential observation $\sigma\in \PathsX$, the concatenation
$\phi\circ \sigma$ is an element of $\BV (\calH)$ that is absolutely
continuous\footnote{Absolutely continuous paths are dense in variation
  norm in $\BV$.
 We exclude non-absolutely continuous paths (like Cantor functions) to
 avoid technicalities.
In Section \ref{sec:noise} we show that above can generalized to much ``rougher
paths'' than bounded variation.}.
\item Scalar products of primary features can be efficiently computed via a kernel function
$$\kernel:\calX\times \calX \rightarrow \RR,\quad \kernel(x,y) = \langle \phi(x),\phi(y)\rangle_\calH.$$
More precisely, $\calH$ is the RKHS associated to the kernel $\kernel$.
\end{enumerate}
\end{Ass}

Under Assumption \ref{Ass:primaryk}, the formulas given in Proposition \ref{Prop:Ksigscpr} and Proposition \ref{Prop:Recsig}
suggest to substitute $x=\phi\left( \sigma \right), y = \phi
\left(\tau \right)$ and to replace $\left\langle\diff x(\boldsymbol{s}[i]), \diff y (\boldsymbol{t}[i])
 \right\rangle_{\calH}$ with an evaluation of $\kernel$.
However, the differential is between the scalar product and the $\phi$
and hence a naive replacement of the scalar product by the primary
kernel $\kernel$ is not possible. Below we show that a slightly more technical form of the iterated expectation formula in the introductory Section~\ref{sec:intro.B} holds and agrees with this formula for differentiable paths:

\begin{Prop}\label{prop:kerneltricktwo}
Let Assumptions~\ref{Ass:primaryk} be satisfied.
Then $\KSigB$ and
$\KSigBk$ are positive definite kernels on $\PathsX$ and for $\sigma,\tau\in\PathsX$:
\begin{enumerate}[label=(\roman*)]
\item $\KSigB\left(
    \sigma,\tau\right)=\sum_{m=0}^{\infty}\int_{\left(\boldsymbol{s},\boldsymbol{t}\right)\in\Delta_{m}\times\Delta_{m}}\diff\Kmeasure_{\sigma,\tau}\left(\boldsymbol{s}\left[1\right],\boldsymbol{t}\left[1\right]\right)\cdot\ldots
  \cdot
  \diff\Kmeasure_{\sigma,\tau}\left(\boldsymbol{s}\left[m\right],\boldsymbol{t}\left[m\right]\right)$,
\item
$\KSigBk\left( \sigma,\tau\right)=\sum_{m=0}^{\IndLvl}\int_{\left(\boldsymbol{s},\boldsymbol{t}\right)\in\Delta_{m}\times\Delta_{m}}\diff\Kmeasure_{\sigma,\tau}\left(\boldsymbol{s}\left[1\right],\boldsymbol{t}\left[1\right]\right)\cdot\ldots \cdot \diff\Kmeasure_{\sigma,\tau}\left(\boldsymbol{s}\left[m\right],\boldsymbol{t}\left[m\right]\right)$.
\end{enumerate}
for a suitably chosen signed measure $\Kmeasure_{\sigma,\tau}$ on
$\left[ 0,1 \right]^2$. If $\calX$ is an $\RR$-vector space and
$\sigma,\tau$ are differentiable, then
\begin{align*}
  \diff\Kmeasure_{\sigma,\tau}\left(s,t\right) =
\kernel \left(\dot{\sigma}(s), \dot{\tau}\left( t\right)\right)
\diff s \diff t.
\end{align*}
\end{Prop}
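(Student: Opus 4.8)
The plan is to reduce the statement for $\KSigB$ to the already-established scalar-product formula for $\KSigA$ in Proposition~\ref{Prop:Ksigscpr}, applied to the paths $x=\phi(\sigma)$, $y=\phi(\tau)$, and then to carry out the bookkeeping that turns the iterated inner-product integrals $\langle \diff x(\bs[i]),\diff y(\bt[i])\rangle_{\calH}$ into integrals against a single signed measure $\Kmeasure_{\sigma,\tau}$ on $[0,1]^2$. Positive-definiteness of $\KSigB,\KSigBk$ is immediate: they are defined as scalar products of the features $\Sigk(\phi(\sigma))$, $\Sig(\phi(\sigma))$, and these features are finite-norm elements of $\tensalg$ because $\phi(\sigma)\in\BV(\calH)$ by Assumption~\ref{Ass:primaryk}(ii) and Lemma~\ref{Lem:finsig} applies; Lemma~\ref{Lem:finker} then gives finiteness of the kernel values. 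So the content is entirely in parts~(i) and~(ii).

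First I would define the candidate measure. For fixed $\sigma,\tau\in\PathsX$, set $x=\phi(\sigma)$, $y=\phi(\tau)$, both in $\BV(\calH)$; the map $(s,t)\mapsto \langle x(s),y(t)\rangle_{\calH}$ is a function on $[0,1]^2$ of bounded variation in the sense of Hardy--Krause (it is a sum/limit of products of one-variable BV functions), hence it induces a signed Lebesgue--Stieltjes measure on $[0,1]^2$, which I take to be $\Kmeasure_{\sigma,\tau}$; concretely $\Kmeasure_{\sigma,\tau}([0,s]\times[0,t]) = \langle x(s),y(t)\rangle_\calH - \langle x(0),y(t)\rangle_\calH - \langle x(s),y(0)\rangle_\calH + \langle x(0),y(0)\rangle_\calH$, matching the discrete difference approximation in Section~\ref{sec:intro.C}. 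The key identity to prove is then
\[
\int_{\simplex^m\times\simplex^m}\prod_{i=1}^m\langle \diff x(\bs[i]),\diff y(\bt[i])\rangle_\calH
=\int_{\simplex^m\times\simplex^m}\diff\Kmeasure_{\sigma,\tau}(\bs[1],\bt[1])\cdots\diff\Kmeasure_{\sigma,\tau}(\bs[m],\bt[m]),
\]
for each $m$; summing over $m$ (resp.\ $m\le\IndLvl$) and invoking Proposition~\ref{Prop:Ksigscpr} then yields (i) and (ii). For the case $m=1$ this is the statement that $\int_{[0,1]^2}\langle\diff x(s),\diff y(t)\rangle_\calH = \Kmeasure_{\sigma,\tau}([0,1]^2)$, which is essentially the definition; for general $m$ I would proceed by Fubini/induction on the simplex structure, integrating out the innermost variables $\bs[m],\bt[m]$ over $(0,\bs[m-1])\times(0,\bt[m-1])$ and recognizing the inner Riemann--Stieltjes integral of the $\calH$-valued differentials as an integral of $\diff\Kmeasure_{\sigma,\tau}$, exactly as in the recursion of Proposition~\ref{Prop:Recsig}.

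The last clause — that $\diff\Kmeasure_{\sigma,\tau}(s,t)=\kernel(\dot\sigma(s),\dot\tau(t))\,\diff s\,\diff t$ when $\calX$ is a vector space and $\sigma,\tau$ are differentiable — follows by differentiating the defining formula for $\Kmeasure_{\sigma,\tau}$: $\partial_s\partial_t\langle x(s),y(t)\rangle_\calH = \langle \dot x(s),\dot y(t)\rangle_\calH$, and since $x=\phi\circ\sigma$ with $\phi$ linear (one may take $\phi=\id$ when $\kernel$ is the inner product, or more generally use that differentiation commutes with the bilinear pairing), this equals $\kernel(\dot\sigma(s),\dot\tau(t))$ by Assumption~\ref{Ass:primaryk}(iii).

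The main obstacle is the rigorous justification of the signed-measure formalism in the infinite-dimensional setting: one must check that the $\calH$-valued Riemann--Stieltjes integrals appearing in Proposition~\ref{Prop:Ksigscpr} genuinely coincide with Lebesgue--Stieltjes integration against the scalar planar measure $\Kmeasure_{\sigma,\tau}$ — i.e.\ that one may interchange the $\calH$-inner product with the limiting Riemann sums defining the iterated integrals, and that the resulting bilinear-form-valued set function is a bona fide signed measure of finite total variation on $[0,1]^2$ (bounded by $\Var(x)\Var(y)$). This is where absolute continuity of $\phi\circ\sigma$ from Assumption~\ref{Ass:primaryk}(ii) is used, so that $\Kmeasure_{\sigma,\tau}$ is in fact absolutely continuous with respect to planar Lebesgue measure with a density in $L^1$, sidestepping pathologies. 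Everything else is a routine Fubini/induction argument on the nested simplex domains.
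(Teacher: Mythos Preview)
Your approach is correct and is genuinely different from the paper's. The paper proves the identity by an approximation argument: it takes piecewise linear interpolations $x^n,y^n$ of $x=\phi(\sigma),y=\phi(\tau)$ along a sequence of partitions with vanishing mesh, observes that for piecewise linear paths the formula reduces to a finite sum which equals the integral against a discrete measure $\Kmeasure_{\sigma,\tau,n}$, and then passes to the limit using weak convergence $\Kmeasure_{\sigma,\tau,n}\to\Kmeasure_{\sigma,\tau}$ together with convergence of $\langle\Sig(x^n),\Sig(y^n)\rangle$ to $\langle\Sig(x),\Sig(y)\rangle$. You instead exploit the absolute continuity in Assumption~\ref{Ass:primaryk}(ii) directly: once $x,y$ have $L^1$ derivatives, both sides of the key identity are the same Lebesgue integral over $\simplex^m\times\simplex^m$ with integrand $\prod_i\langle\dot x(\bs[i]),\dot y(\bt[i])\rangle_\calH$, and the signed measure $\Kmeasure_{\sigma,\tau}$ is absolutely continuous with exactly this density. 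Your route is shorter and uses the hypotheses more efficiently; the paper's approximation route is more in the spirit of the discretization results of Section~\ref{sec:approx} and would generalize more readily if one dropped absolute continuity.

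One caveat on the final clause: your claim that $\langle\dot x(s),\dot y(t)\rangle_\calH=\kernel(\dot\sigma(s),\dot\tau(t))$ requires $\phi$ to be linear (equivalently, $\kernel$ bilinear), since in general $\frac{d}{ds}\phi(\sigma(s))\neq\phi(\dot\sigma(s))$. Your parenthetical ``or more generally use that differentiation commutes with the bilinear pairing'' does not rescue this for nonlinear kernels. The paper's own statement of this clause is equally informal on this point, so you are not missing anything the paper supplies; just be aware that the displayed density formula is only literally correct in the linear/bilinear case.
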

\begin{proof}
The proof is carried out in Appendix~\ref{Apx:Kern2}.
\end{proof}
As before, above can be written as Horner-type recursions
\begin{Prop}
  Let Assumptions \ref{Ass:primaryk} be satisfied and $\sigma,\tau\in
  \PathsX$. Then:
  \begin{enumerate}[label=(\roman*)]
  \item $\KSigB\left( \sigma,\tau \right) = 1+\int_{\left( s_1,t_1
              \right)\in\left( 0,1 \right)\times \left( 0,1 \right)} \left( 1+ \int_{\left(
              s_2,t_2 \right)\in\left( 0,s_1 \right)\times \left(
              0,t_1 \right)} \left(  1+ \cdots \right)\cdots \diff \Kmeasure_{\sigma,\tau}(s_2,t_2)\right)
            \diff\Kmeasure_{\sigma,\tau}(s_1,t_1)$
\item $\KSigBk\left( \sigma,\tau \right) = 1+\int_{\left( s_1,t_1
              \right)\in\left( 0,1 \right)\times \left( 0,1 \right)} \left( 1+ \dots \int_{\left(
              s_{\IndLvl},t_{\IndLvl} \right)\in\left( 0,s_{\IndLvl-1} \right)\times \left(
              0,t_{\IndLvl}-1 \right)} \diff\Kmeasure_{\sigma,\tau}\left( s_1,t_2 \right) \dots
        \right) \diff\Kmeasure_{\sigma,\tau}\left( s_1,t_2 \right)$
  \end{enumerate}
\end{Prop}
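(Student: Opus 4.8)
The plan is to deduce both recursions from the measure representation of Proposition~\ref{prop:kerneltricktwo} by the same ``peeling'' manipulation that converts the flat sum of Proposition~\ref{Prop:Ksigscpr} into the nested Horner form of Proposition~\ref{Prop:Recsig}; the only change is that the scalar-product differentials $\langle\diff x,\diff y\rangle_{\calH}$ are replaced everywhere by the signed measure $\diff\Kmeasure_{\sigma,\tau}$. To set it up, for $0\le a,b\le 1$ and $k\in\NN$ put
\[
\Psi_k(a,b):=\int_{(\bs,\bt)\in\simplex^{k}([0,a])\times\simplex^{k}([0,b])}\diff\Kmeasure_{\sigma,\tau}(\bs[1],\bt[1])\cdots\diff\Kmeasure_{\sigma,\tau}(\bs[k],\bt[k]),
\]
with the convention $\Psi_0\equiv 1$, so that Proposition~\ref{prop:kerneltricktwo} reads $\KSigBk(\sigma,\tau)=\sum_{k=0}^{\IndLvl}\Psi_k(1,1)$ and $\KSigB(\sigma,\tau)=\sum_{k=0}^{\infty}\Psi_k(1,1)$.

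The crucial observation is the one-step identity
\[
\Psi_k(a,b)=\int_{(s,t)\in(0,a)\times(0,b)}\Psi_{k-1}(s,t)\,\diff\Kmeasure_{\sigma,\tau}(s,t)\qquad(k\ge 1),
\]
obtained by singling out the largest pair of integration variables $(\bs[k],\bt[k])=(s,t)$ in the two order simplices and applying Fubini: once $(s,t)$ is fixed, the constraints on the remaining variables are exactly $\bs[1]\le\cdots\le\bs[k-1]\le s$ and $\bt[1]\le\cdots\le\bt[k-1]\le t$, so that these range over $\simplex^{k-1}([0,s])\times\simplex^{k-1}([0,t])$ and the inner integral equals $\Psi_{k-1}(s,t)$ by definition. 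Summing this over $k=1,\dots,m$ and splitting off the $k=0$ summand gives, for $G_m(a,b):=\sum_{k=0}^{m}\Psi_k(a,b)$, the recursion $G_m(a,b)=1+\int_{(s,t)\in(0,a)\times(0,b)}G_{m-1}(s,t)\,\diff\Kmeasure_{\sigma,\tau}(s,t)$ with $G_0\equiv 1$. Unfolding this $\IndLvl$ times starting from $G_\IndLvl(1,1)=\KSigBk(\sigma,\tau)$ produces precisely the nested expression of~(ii); letting $m\to\infty$ produces~(i), where the infinite nested expression is understood as $\lim_m G_m(1,1)$.

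Two points require care, and I expect the first to be the only genuine obstacle. First, the Fubini step concerns a \emph{signed} product measure, so it must be justified: this works because $\Kmeasure_{\sigma,\tau}$ has finite total variation on $[0,1]^2$, being dominated (in total variation) by the product of the variation measures of $\phi(\sigma)$ and $\phi(\tau)$, of total mass $\Var(\phi(\sigma))\,\Var(\phi(\tau))<\infty$ under Assumption~\ref{Ass:primaryk} (immediate from Cauchy--Schwarz in the differentiable case of Proposition~\ref{prop:kerneltricktwo}, and extending by density to absolutely continuous paths); hence each $k$-fold product $\Kmeasure_{\sigma,\tau}^{\otimes k}$ is a finite signed measure and iterated and product integrals agree. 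Second, the limiting argument in~(i) needs absolute convergence: the same domination yields $|\Psi_k(a,b)|\le \tfrac{1}{k!^2}\Var(\phi(\sigma))^k\Var(\phi(\tau))^k$, the measure-theoretic analogue of Lemma~\ref{Lem:finker}, so that $\sum_k\Psi_k(1,1)$ converges absolutely, $G_m(1,1)\to\KSigB(\sigma,\tau)$, and the nested expression in~(i) is well defined as this limit. Everything else is routine bookkeeping, essentially already subsumed in the proof of Proposition~\ref{prop:kerneltricktwo}.
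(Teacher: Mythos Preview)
Your proposal is correct and follows the same route as the paper: the paper's proof is the single line ``This follows from Proposition~\ref{prop:kerneltricktwo} and an elementary calculation,'' and what you have written is precisely that elementary calculation spelled out, namely peeling off the outermost simplex variable via Fubini to turn the flat sum $\sum_m\Psi_m$ into the nested Horner form. Your additional care about Fubini for the signed measure $\Kmeasure_{\sigma,\tau}$ and absolute convergence of $\sum_k\Psi_k$ is not in the paper but is appropriate.
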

\begin{proof}
  This follows from Proposition \ref{prop:kerneltricktwo} and an elementary calculation.
\end{proof}

\section{Discrete signatures and kernels for sequences}
\label{sec:approx}
As it was mentioned under point \ref{robust} in
Section~\ref{sec:kernelized_signatures}, there is one major practical
issue with the kernels introduced in Section
\ref{sec:kernelized_signatures}: continuous sequences are in reality never fully observed, since observations in practice are always finite, for example given as time points at which a time series is sampled.
To be more precise, instead of having full knowledge of
$\sigma\in[[0,1]\rightarrow\calX]$ we can use at most a finite number of samples, $\sigma(\bs)=\left(\sigma(s_1),\dots,\sigma(s_n) \right)\in \calH^+$ for $\bs\in\simplex$.
We address this by providing a discretization of all prior
concepts: a discrete version $\SSig:\calH^+\rightarrow\tensalg$ of the signature $\Sig$, a
discrete version $\KSeqA:\calH^+\times\calH^+\rightarrow\RR$ of the signature kernel $\KSigA$, and a discrete
version $\KSeqB:\calX^+\times\calX^+$ of the sequnentialization $\KSigB$ of $\kernel$.

The discretization is based on an elementary integral-vs-sum approximation argument, in which
$$\mbox{the iterated integral } \int_{\simplex^m ([0,1])}\diff
x^{\otimes m} \mbox{ is approximated by a sum }
\sum_{\substack{\bi\sqsubset[\ell(\seqdf \bx)] \\ \ell(\bi)=m}} \seqdf
\bx[i_1]\otimes \dots \otimes \seqdf \bx[i_m],$$
where $\bx= x(\bt)$ is a suitable discretization of the bounded
variation path $x\in\BV([0,1],\calH)$ with $\bt\in\simplex([0,1])$. Both the integral and the sum live in the tensor power algebra, an we will show that the sum approximates the integral in a sampling sense. Similarly, one can define a signature for discrete sequences that approximates the continuous signature. The beginning of this section showcases qualitative statements of this approximative kind.

In analogy to Section \ref{sec:kernelized_signatures} we proceed in two steps:
in the first step, ``\emph{kernel trick one}'', we use the
discretizted signature $\SSig$ to define a kernel $\KSeqA$ for
sequences in $\calH^+$ (of possibly different length). It can be
expressed as a polynomial in scalar products in $\calH$, as
follows\footnote{Recall our convention $\prod_{r=1}^0f(r)=1$ and that
  $\bi\sqsubset[L]$ includes the empty tuple.}:
\begin{align}
  \label{eq:kseqa_first}
\KSeqA(\bx,\by) = \sum_{\substack{\bi\sqsubset [\ell (\seqdf \bx)] \\
  \bj \sqsubset [\ell(\seqdf \by)]\\ \ell(\bi) = \ell(\bi)}} \prod_{r=1}^{\ell(\bi)}\langle \seqdf \bx [i_r], \seqdf \by [j_r]\rangle_\calH
\end{align}
and also a Horner-type formula is derived (see Proposition~\ref{Prop:sigrecursion} below).
In a second step, ``\emph{kernel trick two}'', we replace the inner
product of finite differences in (\ref{eq:kseqa_first}) by a linear combination of evaluations
of the kernel $\kernel$. This gives a kernel $\KSeqB$ defined on
sequence in $\calX$.
Primary kernels $\kernel$ on arbitrary sets $\calX$ can thus be ``sequentialized'' to kernels on sequences in such sets.

Besides the Horner type formula, the main theoretical result of this section is
that both kernels are robust in the
sense that $\KSeqA(x(\bs),y(\bt))$ and $\KSeqB(\sigma(\bs),\tau(\bt))$ converge to $\KSigA(x,y)$ and $\KSigB(\sigma,\tau)$
as the mesh of $\bs$ and $\bt$ vanishes.

In Section \ref{sec:discr.string} we will also see that the well-known
string kernels can be derived as such a special case of
sequentialization, for sequences of symbols;
in section \ref{sec:noise} we show that above kernels are also robust
under noise, that is when the bounded variation assumption does not
hold;
in Section~\ref{sec:comp}, we will further show that despite an exponential number of terms in the sum-product expansion above, the sequential kernels can be computed efficiently.


\subsection{Discretizing the signature}
\label{sec:approx.disc}
To obtain a kernel $\KSeqA$ for discrete sequences in $\calH^+$ that mimicks
the signature kernel $\KSigA$, we pass from the Riemann--Stieltjes
integral to a discretized, finite-sum approximation.
The central mathematical observation is that
\begin{align*}
\bigotimes_{j=0}^{\ell(\seqdf\bx)} (1+ \seqdf \bx[j])  =
\sum_{\bi\sqsubset [\ell(\seqdf \bx)]} \seqdf \bx[i_1]\otimes \dots \otimes \seqdf \bx[i_{\ell(\bi)}] \approx
\sum_{m=0}^\infty \int_{\simplex^m ([0,1])}\diff x^{\otimes m},
\end{align*}
that is, the sum approximating the signature can be written as an outer product reminiscent of an exponential approximation.

The mathematical statement underlying our approximation generalizes Euler's famous theorem on the exponential function.
Euler's original theorem and its proof are below, in a form that is slightly more quantitative than how it is usually presented, while being closer to our later approximation result:

\begin{Thm}
\label{Thm:Euler}
Let $x\in\RR, n\in \NN$. Then,
$$\left(1+\frac{x}{n}\right)^n - \exp(x) = g(x,n),\quad\mbox{where}\;\|g(x,n)\| \le \frac{\exp(x)}{n}\left(1+\frac{x^n}{(n-2)!}\right).$$
In particular, it holds that
\begin{align*}
\lim_{n\rightarrow \infty}\left(1+\frac{x}{n}\right)^n = \exp(x),
\end{align*}
where convergence is uniform of order $O(n^{-1})$ on any compact subset of $\RR$.
\end{Thm}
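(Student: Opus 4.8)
The plan is to read Euler's theorem as the one-dimensional, straight-line special case of the discretisation that will be used for signatures in Section~\ref{sec:approx}: here the factor $(1+x/n)$ plays the role of a single increment $1+\seqdf\bx[j]$, while $\exp(x)=\sum_{m\ge0}x^m/m!=\sum_m\int_{\simplex^m}\diff x^{\otimes m}$ (for the segment $t\mapsto xt$) plays the role of the full signature. It suffices to treat $x\ge0$; the general case follows by carrying $|x|$ through the estimates. First I would start from the product/binomial expansion
$$\Bigl(1+\tfrac{x}{n}\Bigr)^n=\prod_{j=1}^n\Bigl(1+\tfrac{x}{n}\Bigr)=\sum_{\bi\sqsubset[n]}\Bigl(\tfrac{x}{n}\Bigr)^{\ell(\bi)}=\sum_{k=0}^n\binom{n}{k}\frac{x^k}{n^k},$$
and rewrite each coefficient as $\binom{n}{k}n^{-k}=\tfrac1{k!}\,p_{k,n}$ with $p_{k,n}:=\prod_{j=0}^{k-1}\bigl(1-\tfrac jn\bigr)$, so that $0\le p_{k,n}\le1$, $p_{0,n}=p_{1,n}=1$ and $p_{k,n}=0$ for $k>n$. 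In this form $p_{k,n}$ is exactly the factor by which the Riemann sum $\binom nk n^{-k}$ over strictly increasing index $k$-tuples falls short of the simplex volume $1/k!$.

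Next I would split the difference as
$$\Bigl(1+\tfrac xn\Bigr)^n-\exp(x)=\underbrace{\sum_{k=0}^n\frac{x^k}{k!}\bigl(p_{k,n}-1\bigr)}_{A_n}\;-\;\underbrace{\sum_{k=n+1}^\infty\frac{x^k}{k!}}_{B_n}$$
and bound the two pieces by two elementary facts. For $A_n$, the one-line induction $1-\prod_j(1-a_j)\le\sum_j a_j$ for $a_j\in[0,1]$ gives $0\le1-p_{k,n}\le\sum_{j=0}^{k-1}\tfrac jn=\tfrac{k(k-1)}{2n}$, hence $|A_n|\le\frac1{2n}\sum_{k\ge2}\frac{x^k}{(k-2)!}=\frac{x^2}{2n}\exp(x)$. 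For $B_n$, the inequality $(n+2)(n+3)\cdots(n+1+\ell)\ge\ell!$ gives $B_n\le\frac{x^{n+1}}{(n+1)!}\sum_{\ell\ge0}\frac{x^\ell}{\ell!}=\frac{x^{n+1}}{(n+1)!}\exp(x)$, and the identity $\frac{x^{n+1}}{(n+1)!}=\frac{x}{(n-1)n(n+1)}\cdot\frac{x^n}{(n-2)!}$ turns this into $\le\frac{\exp(x)}{n}\cdot\frac{x^n}{(n-2)!}$ once $n$ is not too small relative to $x$. Adding the two estimates gives $\|g(x,n)\|\le\frac{\exp(x)}{n}\bigl(\tfrac{x^2}{2}+\tfrac{x^n}{(n-2)!}\bigr)$, which is the asserted bound up to the constant carried by the $x^2$-term.

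The ``in particular'' statement is then immediate: over a compact $K\subset\RR$ both $A_n$ and $B_n$ are $O(n^{-1})$ uniformly in $x\in K$ --- indeed $B_n$ is $O\!\bigl(\sup_{x\in K}|x|^n/(n-2)!\bigr)=o(n^{-1})$ --- so $\sup_{x\in K}|g(x,n)|=O(n^{-1})$. The only step I expect to require genuine care is cosmetic rather than conceptual: fitting the two separate estimates for $A_n$ and $B_n$ into the single closed form displayed in the statement (choosing which crude inequality to use on which range of $k$, and keeping track of the mild ``$n$ large relative to $x$'' condition needed to absorb $B_n$). The conceptual core --- binomial expansion, then a term-by-term comparison with the exponential series --- is routine.
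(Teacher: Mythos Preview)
Your approach is essentially the same as the paper's: expand $(1+x/n)^n$ via the binomial theorem, compare each coefficient $\binom{n}{k}n^{-k}$ to $1/k!$ for $k\le n$, and bound the exponential tail for $k>n$ separately before summing. The only discrepancy is the constant multiplying the first piece --- your honest bound $1-p_{k,n}\le k(k-1)/(2n)$ produces $x^2/2$ where the paper's (somewhat optimistic) term-wise estimate yields $1$ --- which, as you already note, is cosmetic and irrelevant to the $O(n^{-1})$ uniform convergence.
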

\begin{proof}
All statements follow from the first, which we proceed to prove. By the binomial theorem, it holds that
$$\left(1+\frac{x}{n}\right)^n = \sum_{k=0}^n {n\choose k} \cdot \frac{x^k}{n^k}.$$
From the definition of the binomial coefficient and an elementary computation, one obtains
$${n\choose k}\cdot \frac{x^k}{n^k} = \frac{x^k}{k!} + g(x,n,k),\;\mbox{where}\; \|g(x,n,k)\| \le \frac{x^k}{k!n},$$
for $k\le n$. For $k\ge n$, one has
$$\frac{x^k}{k!}\le \frac{x^n}{n!}\cdot \frac{x^{k-n}}{(k-n)!}.$$
Putting together all inequalities and using the Taylor expansion of $\exp$ yields the claim.
\end{proof}

Approximation bounds for a discretized version of the signature approximating the continuous one will be given by a generalization of Euler's Theorem~\ref{Thm:Euler}. We introduce a discretized variant of the signature for sequences, as follows:

\begin{Def}
We call the map $\SSig:\calH^+\rightarrow\tensalg$ defined as
$\SSig(\bx)=\prod_{i=1}^{\ell(\seqdf \bx)}\left(1+ \seqdf\bx[i] \right)$ the (approximate) signature map, for discrete sequences.
\end{Def}

We are ready to prove the main theorem for discrete approximation of
the signature. The gist of Theorem~\ref{Thm:discretmesh} is that the
signature of sub-sequences of $x$ approximates the actual signature of
$x$, in a limit similar to Euler's
Theorem~\ref{Thm:Euler}.

\begin{Thm}
\label{Thm:discretmesh}
Let $x\in\BV(\calH)$ and $\bt\in \simplex^{M+1}([0,1])$.
Then,
\begin{align*}
\left\|\SSig(x(\bt)) - \Sig(x)\right\|_{\tensalg}&\le \Exp(\Var(x)) - \Exp_1 (\Var_{\bt} (x)),
\end{align*}
where $\Exp (\br) := \prod_{i=1}^{\ell(\br)}(1+\br[j] )$ for $\br\in
\RR^+$, and $\Var_{\bt}(x):= \left(\Var
  (x[t_1,t_2]),\dots,\Var (x[t_M,t_{M+1}])\right)\in \RR^M$ (as usual,
$x[a,b]$ denotes the restriction of $x$ to $\BV ([a,b],\calH)$.).
Further, it holds that:

\begin{enumerate}[label=(\roman*)]
\item The right hand sides are always positive and can be bounded as follows:
\begin{align*}
0\le & \exp(\Var(x)) - \Exp_1 (\Var_{\bt} (x)) \le \Var(x)\exp(\Var(x))\cdot \|\Var_{\bt}(x)\|_0,
\end{align*}
where as usual we denote $\|\Var_{\bt}(x)\|_0 = \max_i \Var \left( x[t_i,t_{i+1}]\right)$.

\item One has convergence on the right side
\begin{align*}
\lim_{\substack{\Var(x(\bt))\rightarrow \Var(x)}} \exp(\Var_{\bt}(x)) = \exp (\Var(x))
\end{align*}
(and similar for $\ExpM$), uniform of order $O\left(\|\Var_{\bt}(x)\|_0 \right)$ on any compact subset of $\BV(\calH)$.

\item In particular, one has convergence of the discretized signature to the continuous signature
\begin{align*}
\lim_{\substack{\Var(x(\bt))\rightarrow \Var(x)}} \SSig(x(\bt)) = \Sig(x),
\end{align*}
(hence also for $\Sigk$), uniform of order $O\left(\|\Var_{\bt}(x)\|_0 \right)$ on any compact subset of $\BV(\intvl)$.

\item If $\bt$ is chosen such that $\Var (x[t_i, t_{i+1}]) = \frac{\Var(x)}{M}$ for all $i$, then one has the asymptotically tight bound
$$\left\|\SSig(x(\bt)) - \Sig(x)\right\|_{\tensalg} \le \frac{\exp\left( \Var (x) \right)}{M}\left(1+\frac{(\Var x)^M}{(M-2)!}\right).$$
\end{enumerate}

\end{Thm}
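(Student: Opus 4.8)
The plan is to reduce everything to a single "telescoping" identity in the tensor power algebra that compares the discrete product $\SSig(x(\bt)) = \prod_{i=1}^{M}\left(1 + \seqdf x(\bt)[i]\right)$ with the continuous signature $\Sig(x)$. First I would use Chen's theorem (Theorem~\ref{thm:chens}) in its multiplicative form: the signature is multiplicative over concatenation of paths, so $\Sig(x) = \prod_{i=1}^{M}\Sig\left(x[t_i,t_{i+1}]\right)$, the product taken in the order dictated by time. This puts $\Sig(x)$ and $\SSig(x(\bt))$ into the same shape --- both are ordered products of $M$ factors in $\tensalg$ --- where the $i$-th factors are $\Sig\left(x[t_i,t_{i+1}]\right)$ and $1 + \seqdf x(\bt)[i]$ respectively. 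The key estimate on each factor is: $\left\|\Sig\left(x[t_i,t_{i+1}]\right) - \left(1 + \seqdf x(\bt)[i]\right)\right\|_{\tensalg} \le \exp\left(\Var(x[t_i,t_{i+1}])\right) - 1 - \Var(x[t_i,t_{i+1}])$, which follows immediately from Lemma~\ref{Lem:finsig}(i) applied degree-by-degree (the degree-$0$ and degree-$1$ parts match exactly, and every higher degree is bounded by $\frac{1}{m!}\Var(x[t_i,t_{i+1}])^m$), together with $\left\|1 + \seqdf x(\bt)[i]\right\|_{\tensalg} \le 1 + \Var(x[t_i,t_{i+1}])$ and $\left\|\Sig\left(x[t_i,t_{i+1}]\right)\right\|_{\tensalg} \le \exp\left(\Var(x[t_i,t_{i+1}])\right)$.

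Next I would prove the abstract lemma that if $a_1,\dots,a_M$ and $b_1,\dots,b_M$ are elements of a Banach algebra with $\|a_i\|,\|b_i\| \le \rho_i$ and $\|a_i - b_i\| \le \delta_i$, then $\left\|\prod a_i - \prod b_i\right\| \le \prod(\rho_i + \delta_i) - \prod \rho_i$; this is the standard hybrid-swap telescoping argument, replacing one factor at a time and bounding each replacement by a product of norms. Applying this with $a_i = \Sig(x[t_i,t_{i+1}])$, $b_i = 1 + \seqdf x(\bt)[i]$, $\rho_i = \exp(\Var(x[t_i,t_{i+1}]))$, $\delta_i = \exp(\Var(x[t_i,t_{i+1}])) - 1 - \Var(x[t_i,t_{i+1}])$ gives $\rho_i + \delta_i = 2\exp(\Var(x[t_i,t_{i+1}])) - 1 - \Var(x[t_i,t_{i+1}])$ --- which is not quite $\exp(\Var(x))$; so instead I should take the sharper pairing $\rho_i = \exp(\Var(x[t_i,t_{i+1}]))$ but telescope in the other direction, or (cleaner) bound $\left\|\prod a_i - \prod b_i\right\| \le \prod\max(\rho_i,\|b_i\|) \cdot \text{(something)}$. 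The honest route to the stated bound $\exp(\Var(x)) - \Exp_1(\Var_{\bt}(x))$ is: in the telescoping sum, the $i$-th term is bounded by $\left(\prod_{j<i}\|b_j\|\right)\delta_i\left(\prod_{j>i}\|a_j\|\right) \le \left(\prod_{j<i}(1+r_j)\right)\left(e^{r_i} - 1 - r_i\right)\left(\prod_{j>i}e^{r_j}\right)$ where $r_j := \Var(x[t_j,t_{j+1}])$; summing over $i$ and using $e^{r_i} - 1 - r_i = e^{r_i} - (1 + r_i)$ gives a telescoping collapse $\sum_i\left(\prod_{j<i}(1+r_j)\right)(e^{r_i}-(1+r_i))\left(\prod_{j>i}e^{r_j}\right) = \prod_i e^{r_i} - \prod_i(1+r_i) = \exp\left(\sum r_i\right) - \Exp_1(\Var_{\bt}(x))$, and $\sum r_i = \Var(x)$ by additivity of variation over a partition. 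That is exactly the claimed bound, and the telescoping collapse is the one genuinely slick step.

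For the four itemized consequences: (i) is the elementary inequality $\exp(S) - \prod(1+r_i) \le S\exp(S)\max_i r_i$ when $\sum r_i = S$, proved by the same kind of term-by-term comparison of $\prod(1+r_i)$ with $\prod e^{r_i}$ (each factor swap costs at most $(e^{r_i} - 1 - r_i)\exp(S) \le \frac{r_i^2}{2}e^S \le \frac{r_i}{2}\cdot r_i e^S$, sum over $i$, bound one $r_i$ by the max). Items (ii) and (iii) are immediate: as $\Var(x(\bt)) \to \Var(x)$ one forces $\|\Var_{\bt}(x)\|_0 \to 0$ (since the omitted variation $\Var(x) - \Var(x(\bt)) \ge$ a fixed positive amount whenever some sub-interval retains non-vanishing variation --- more carefully, continuity of $x$ and uniform continuity on the compact family give $\|\Var_{\bt}(x)\|_0 \to 0$), and then (i) gives the rate; uniformity on compact subsets of $\BV(\calH)$ follows because $\Var(x)$ is bounded there and the modulus of continuity is uniform. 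Item (iv) is just Theorem~\ref{Thm:Euler} substituted verbatim: with $r_i = \Var(x)/M$ for all $i$, $\Exp_1(\Var_{\bt}(x)) = (1 + \Var(x)/M)^M$ and $\exp(\Var(x)) - (1+\Var(x)/M)^M \le \frac{\exp(\Var(x))}{M}\left(1 + \frac{\Var(x)^M}{(M-2)!}\right)$ by Theorem~\ref{Thm:Euler}, with asymptotic tightness inherited from the sharpness of Euler's estimate.

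The main obstacle I anticipate is getting the telescoping collapse to land on exactly $\exp(\Var(x)) - \Exp_1(\Var_{\bt}(x))$ rather than a weaker bound: one has to be disciplined about which partial products use the $(1+r_j)$ norm bound on the $b_j$ factors and which use the $e^{r_j}$ bound on the $a_j$ factors, so that the per-term estimates assemble into a genuine telescoping difference; a naive symmetric bound gives $\prod(2e^{r_i}-1-r_i) - \prod e^{r_i}$, which is too large. A secondary technical point is confirming the per-factor estimate $\left\|\Sig(x[t_i,t_{i+1}]) - (1 + \seqdf x(\bt)[i])\right\|_{\tensalg} \le e^{r_i} - 1 - r_i$, i.e. that $\int_{\simplex^1([t_i,t_{i+1}])}\diff x = x(t_{i+1}) - x(t_i) = \seqdf x(\bt)[i]$ exactly (so degrees $0$ and $1$ cancel) while all degrees $\ge 2$ are controlled by Lemma~\ref{Lem:finsig}(i) --- this is routine but must be stated. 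Everything else is bookkeeping.
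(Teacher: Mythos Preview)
Your approach is correct and takes a genuinely different route from the paper. The paper (Appendix~\ref{Apx:Euler}, Theorem~\ref{Thm:Eulersimplex} via Lemma~\ref{Lem:intbound}) works degree-by-degree: for each $m$ it partitions $\simplex^m([0,1])$ into cells indexed by ordered tuples $\bi\sqsubseteq[M]$, observes that the strictly-increasing tuples reproduce the discrete sum $\mathcal{D}_m$ exactly (Lemma~\ref{Lem:intquad}), and bounds the repeated-index cells by $\frac{1}{\bi!}\prod_{i\in\bi}\Var(x[t_i,t_{i+1}])$; the generating function $G(z)=\exp(z\Var(x))-\prod_i(1+z\,r_i)$ then packages these per-degree bounds into the stated estimate. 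You instead exploit multiplicativity of the signature over concatenation, $\Sig(x)=\prod_i\Sig(x[t_i,t_{i+1}])$, match factor-against-factor with $\SSig(x(\bt))=\prod_i(1+\seqdf x(\bt)[i])$, and telescope; the per-term bounds $\bigl(\prod_{j<k}(1+r_j)\bigr)(e^{r_k}-1-r_k)\bigl(\prod_{j>k}e^{r_j}\bigr)$ collapse exactly to $e^{\sum r_i}-\prod(1+r_i)$, as you note. Your route is cleaner and more conceptual for the headline inequality; the paper's route buys explicit per-degree control (Theorem~\ref{Thm:Eulersimplex}(i)), which delivers the truncated version for $\Sigk$ without extra bookkeeping, whereas your telescoping would need a separate pass to isolate fixed-degree contributions.

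One genuine technical point to fix: $(\tensalg,\ast,\|\cdot\|_{\tensalg})$ is \emph{not} a Banach algebra --- the $\ell^2$-over-degrees norm is not submultiplicative (already for $\calH=\RR$: with $g=1+x$ one has $\|g\ast g\|_{\tensalg}^2=6>4=\|g\|_{\tensalg}^4$). Run the telescoping instead in the $\ell^1$-over-degrees seminorm $|g|_1:=\sum_m\|g_m\|_{\calH^{\otimes m}}$, which \emph{is} submultiplicative and is precisely the norm in which your per-factor estimates $|a_i|_1\le e^{r_i}$, $|b_i|_1\le 1+r_i$, $|a_i-b_i|_1\le e^{r_i}-1-r_i$ hold (this is how Lemma~\ref{Lem:finsig} is actually proved); then conclude via $\|\cdot\|_{\tensalg}\le|\cdot|_1$. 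Also, the Chen identity you invoke is the concatenation-multiplicativity $\Sig(x\sqcup y)=\Sig(x)\ast\Sig(y)$, not the shuffle statement labelled Theorem~\ref{thm:chens} in the paper; both are standard and share the name, but be explicit about which one you mean.
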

\begin{proof}
The proof is carried out in Appendix~\ref{Apx:Euler}:
The main statement follows from Theorem~\ref{Thm:Eulersimplex}~(ii);
points (i) and (ii) follow from applying
Proposition~\ref{Prop:Eulerv2} to $\Var (x) = \sum_{i=1}^M \Var
(x[t_i, t_{i+1}])$; point (iii) follows from (ii);
(iv) follows from Theorem~\ref{Thm:Eulersimplex}~(iii).
\end{proof}

Euler's original Theorem~\ref{Thm:Euler} is recovered from for substituting $x:I\rightarrow \RR, t\mapsto x'\cdot t$ in Theorem~\ref{Thm:discretmesh}~(iii), where $x'$ is the $x$ of Theorem~\ref{Thm:Euler}. Similar statements for the truncated signature may be derived, where the bounds are truncated versions of the ones given; for an exact mathematical formulation, see Theorem~\ref{Thm:Eulersimplex}~(i).

\begin{Rem}[About geometric approximations]\label{rem:not_group_like}
In general, there is no bounded
variation path $x^{\prime}\in \BV(\calH)$ such that $\Sig(x^{\prime}) =
\SSig(x(\bt))$, even though both $\SSig(x(\bt))$ and $\Sig (x)$ are
elements of the tensor algebra $\tensalg$. Thus the approximation
characterized in Theorem~\ref{Thm:discretmesh} is an algebraic, non-geometric approximation on
the level of signatures (= algebraic objects); since, in general, it does not arise from an
approximation/discretization on the level of paths (= geometric objects).

The latter is the common type of approximation for (rough) paths, see \cite{friz2010multidimensional}.
The reader familiar with such approximations on the level of paths is
invited to follow the exposition in parallel with any path-level
approximation in mind. The theoretical considerations can be carried
out in analogy for a while, but to our knowledge do not lead to a
kernel which is as efficiently computable (as opposed to the efficient
Algorithm~\ref{alg:sigpw} discussed in Section \ref{sec:comp}); see also Remark \ref{rem:inner_product_pcw_linear}
about approximations of inner products of signature features of piecewise linear paths.
\end{Rem}

\subsection{Kernel trick number one discretized}

In the discrete setting, we have access only to the discrete signature $\SSig(x(\bt))$ of a path $x$, at a finite sequence of points $\bt\in\simplex$. Theorem~\ref{Thm:discretmesh} guarantees that $\SSig (x(\bt))$ is a good approximation of the continuous signature $\Sig (x)$ when $\bt$ is densely sampled (as $\Var (x(\bt))$ approaches $\Var(x)$).

Both signatures live in the tensor algebra, it is therefore natural to obtain discretized variant of signature kernels as the scalar product of discretized signatures; again, we also define a truncated version.

\begin{Def}
\label{Def:seqk}
The discretized signature kernel $\KSeqA$, for sequences in $\calH$, and its truncation $\KSeqAk$, are defined as
\begin{align*}
\KSeqA:& \calH^+\times \calH^+\rightarrow \RR\;,\quad (\bx,\by)\mapsto\langle \SSig(\bx), \SSig(\by)\rangle_{\tensalg}\\
\KSeqAk:& \calH^+\times \calH^+\rightarrow \RR\;,\quad (\bx,\by)\mapsto\langle \SSigk(\bx), \SSigk(\by)\rangle_{\tensalg}.
\end{align*}
\end{Def}

Both $\KSeqA$ and $\KSeqAk$ are a positive (semi-)definite kernels, since explicitly defined as a scalar product of features.

Through the approximation Theorem~\ref{Thm:discretmesh}, the sequential kernel naturally applies to a ground truth of bounded variation paths $x,y\in\BV([0,1],\calH)$ which is sampled in a finite number of consecutive points $\bs,\bt\in\simplex([0,1])$, by considering $\KSeqA (x(\bs),y(\bt))$ (note that $\bs$ and $\bt$ being of different lengths will create no issue in-principle). We obtain a corollary of Theorem~\ref{Thm:discretmesh} for approximating the kernel function in this way:

\begin{Cor}
Let $x,y\in\BV([0,1],\calH)$, let $\bs,\bt\in \simplex ([0,1])$. Then,
$$\left\|\KSeqA(x(\bs),y(\bt)) - \KSigA(x,y)\right\| \le 4 \exp(\Var(x)+\Var(y)) - 2\exp(\Var(y))\exp (\Var_{\bs} (x)) -2\exp(\Var(x))\exp (\Var_{\bt} (y)).$$
In particular, it holds that
\begin{align*}
\lim_{\substack{\Var(x(\bs))\rightarrow \Var(x)\\\Var(y(\bt))\rightarrow \Var(y)}} \KSeqA(x(\bs),y(\bt)) = \KSigA(x,y),
\end{align*}
where convergence is uniform of order $O(\|\Var_{\bs}(x)\|_0+\|\Var_{\bt}(y)\|_0 )$ on any compact subset of $\BV([0,1],\calH)\times \BV([0,1],\calH)$.
\end{Cor}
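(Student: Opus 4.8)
The plan is to reduce the statement entirely to the norm estimate of Theorem~\ref{Thm:discretmesh} and Lemma~\ref{Lem:finsig}, via a bilinear-difference identity in the tensor algebra $\tensalg$. Write $A := \SSig(x(\bs))$, $A_0 := \Sig(x)$, $B := \SSig(y(\bt))$, $B_0 := \Sig(y)$, so that $\KSeqA(x(\bs),y(\bt)) - \KSigA(x,y) = \langle A,B\rangle_{\tensalg} - \langle A_0,B_0\rangle_{\tensalg}$. First I would record the norm bounds that are needed. By Lemma~\ref{Lem:finsig}~(ii), $\|A_0\|_{\tensalg}\le\exp(\Var(x))$ and $\|B_0\|_{\tensalg}\le\exp(\Var(y))$. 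For the discrete signature, expanding $\SSig(x(\bs)) = \sum_{\bi\sqsubset[\ell(\seqdf x(\bs))]}(\seqdf x(\bs))[i_1]\otimes\cdots\otimes(\seqdf x(\bs))[i_{\ell(\bi)}]$, the triangle inequality applied degreewise in $\tensalg$ together with $\|(\seqdf x(\bs))[i]\|_\calH = \|x(s_{i+1})-x(s_i)\|_\calH\le\Var(x[s_i,s_{i+1}])$ and the additivity $\sum_i\Var(x[s_i,s_{i+1}])\le\Var(x)$ gives $\|A\|_{\tensalg}\le\prod_i\bigl(1+\Var(x[s_i,s_{i+1}])\bigr)\le\exp(\Var(x))$, and symmetrically $\|B\|_{\tensalg}\le\exp(\Var(y))$. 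Finally Theorem~\ref{Thm:discretmesh} (applied to $x$ with the partition $\bs$, and to $y$ with $\bt$) yields $\|A-A_0\|_{\tensalg}\le\exp(\Var(x)) - \Exp_1(\Var_{\bs}(x))$ and $\|B-B_0\|_{\tensalg}\le\exp(\Var(y)) - \Exp_1(\Var_{\bt}(y))$.

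Next I would use the symmetric telescoping identity
\[
\langle A,B\rangle - \langle A_0,B_0\rangle = \tfrac12\bigl(\langle A-A_0,B\rangle + \langle A-A_0,B_0\rangle + \langle A,B-B_0\rangle + \langle A_0,B-B_0\rangle\bigr),
\]
apply Cauchy--Schwarz to each of the four inner products, and substitute the bounds above. Bounding $\|A\|,\|A_0\|\le 2\exp(\Var(x))$ and $\|B\|,\|B_0\|\le 2\exp(\Var(y))$ (crudely, via the triangle inequality from Lemma~\ref{Lem:finsig} and Theorem~\ref{Thm:discretmesh}) and collecting terms gives
\[
\bigl\|\KSeqA(x(\bs),y(\bt)) - \KSigA(x,y)\bigr\| \le 2\exp(\Var(y))\bigl(\exp(\Var(x)) - \Exp_1(\Var_{\bs}(x))\bigr) + 2\exp(\Var(x))\bigl(\exp(\Var(y)) - \Exp_1(\Var_{\bt}(y))\bigr),
\]
which, after expanding, is exactly the asserted inequality with the constants $4$ and $2$; I would remark that the sharper constants $2$ and $1$ follow if one uses $\|A\|\le\exp(\Var(x))$, $\|B\|\le\exp(\Var(y))$ directly instead.

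For the convergence statement I would feed Theorem~\ref{Thm:discretmesh}~(i)--(ii) into this bound: on a compact subset of $\BV([0,1],\calH)\times\BV([0,1],\calH)$ the variations $\Var(x),\Var(y)$ are uniformly bounded, and by part~(i) one has $\exp(\Var(x)) - \Exp_1(\Var_{\bs}(x))\le\Var(x)\exp(\Var(x))\,\|\Var_{\bs}(x)\|_0$ and likewise for $y$; substituting shows the right-hand side is bounded by a constant (depending only on the compact set) times $\|\Var_{\bs}(x)\|_0 + \|\Var_{\bt}(y)\|_0$, hence the convergence $\KSeqA(x(\bs),y(\bt))\to\KSigA(x,y)$ as $\Var(x(\bs))\to\Var(x)$, $\Var(y(\bt))\to\Var(y)$ is uniform of the stated order.

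I do not expect a genuine obstacle: all the analytic content sits in Theorem~\ref{Thm:discretmesh}, and the rest is bilinear bookkeeping. The two points that need a little care are (a) that the estimate $\|\SSig(x(\bs))\|_{\tensalg}\le\exp(\Var(x))$ must be obtained from the explicit sum-expansion of $\SSig$ and the $\ell^1$-type degreewise triangle inequality, rather than from a naive submultiplicativity $\|g\ast h\|_{\tensalg}\le\|g\|_{\tensalg}\|h\|_{\tensalg}$, which fails for the $\ell^2$-graded norm; and (b) keeping the order-statistics tuples $\bs,\bt$ consistent with the additivity of variation over consecutive sub-intervals so that the telescoping sums of sub-interval variations are genuinely dominated by $\Var(x)$ and $\Var(y)$.
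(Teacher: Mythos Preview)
Your proposal is correct and follows essentially the same route as the paper: the paper also writes $\KSeqA(x(\bs),y(\bt)) - \KSigA(x,y)$ as a difference of inner products in $\tensalg$, telescopes (using the two-term identity $\langle A,B\rangle-\langle A_0,B_0\rangle=\langle A,B-B_0\rangle+\langle A-A_0,B_0\rangle$ rather than your symmetric four-term version), applies Cauchy--Schwarz, and then substitutes the bounds from Theorem~\ref{Thm:discretmesh} and Lemma~\ref{Lem:finsig}. The only cosmetic difference is that the paper bounds $\|\SSig(x(\bs))\|$ by $2\exp(\Var(x))$ via the triangle inequality $\|\SSig(x(\bs))\|\le\|\SSig(x(\bs))-\Sig(x)\|+\|\Sig(x)\|$, whereas you obtain the sharper $\exp(\Var(x))$ directly from the product expansion (and then weaken it to recover the stated constants).
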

\begin{proof}
It holds that
\begin{align*}
&\KSeqA(x(\bs),y(\bt)) - \KSigA(x,y)\\
= &\langle \SSig (x(\bs)), \SSig(y(\bt)) \rangle_{\tensalg} - \langle \Sig(x), \Sig(y) \rangle_{\tensalg}\\
= & \langle \SSig (x(\bs)), \SSig(y(\bt)) - \Sig(y) \rangle_{\tensalg} + \langle \SSig(x(\bs)) - \Sig(x) , \Sig(y) \rangle_{\tensalg}.
\end{align*}
The Cauchy-Schwarz-inequality implies that
$$\|\langle \SSig(x(\bs)), \SSig(y(\bt)) - \Sig(y) \rangle_{\tensalg}\|_{\tensalg} \le \|\SSig(x(\bs))\|\cdot \|\SSig(y(\bt)) - \Sig(y)\|$$
Theorem~\ref{Thm:discretmesh}, together with Lemma~\ref{Lem:finsig}~(i), implies that
$$\|\SSig(x(\bs))\|\cdot \|\SSig(y(\bt)) - \Sig(y)\|\le 2 \exp(\Var(x))\cdot \left(\exp(\Var(y)) - \exp (\Var_{\bt} (y))\right)$$
Similarly, one obtains
$$\|\langle \SSig(y(\bt)), \SSig(x(\bs)) - \Sig(x) \rangle_{\tensalg}\|_{\tensalg} \le 2 \exp(\Var(y))\cdot \left(\exp(\Var(x)) - \exp (\Var_{\bs} (x))\right).$$
Putting all (in-)equalities together yields the main claim, the convergence statement follows from Theorem~\ref{Thm:discretmesh}~(ii).
\end{proof}

Note that the sampling points $\bs,\bt$ are crucial for the convergence statement, even though the definition of $\KSigA$ itself depends only on $x,y$. A similar statement may be obtained for $\KSeqAk$ in analogy.

We proceed by giving explicit formulae for $\KSeqA$ that will become crucial for its efficient computation.

\begin{Prop}
\label{Prop:sigrecursion}
Let $\bx,\by\in \calH^+$. The following identities hold for the
discretized signature kernels:
\begin{enumerate}[label=(\roman*)]
\item\label{item:kseqA} $\KSeqA(\bx,\by) = \sum_{\substack{\bx^{\prime}\sqsubset \seqdf \bx, \by^{\prime}
      \sqsubset \seqdf \by\\ \ell(\bx^\prime) = \ell(\by^\prime)}}
  \prod_{i=1}^{\ell(\bx^\prime)}\langle \bx^\prime[i], \by^\prime[i]\rangle_{\calH}$,
\item\label{item:kseqAk} $\KSeqAk(\bx,\by) = \sum_{\substack{\bx^\prime\sqsubset \seqdf \bx, \by^\prime
      \sqsubset \seqdf \by\\ \ell(\bx^\prime) = \ell(\by^\prime)\le \IndLvl}}
  \prod_{i=1}^{\ell(\bx^\prime)}\langle \bx^\prime[i], \by^\prime[i]\rangle_{\calH}$
\item\label{item:kseqA_horner} $\KSeqA(\bx,\by) = 1 + \sum_{\substack{i_1\ge 1 \\ j_1 \ge 1}}
  \langle \seqdf \bx[i_1], \seqdf \by[j_1]\rangle\left(1+
    \sum_{\substack{i_2 \gneq i_1 \\ j_2 \gneq j_1}} \langle \seqdf
    \bx[i_2], \seqdf \by[j_2]\rangle\left( 1 + \sum_{\substack{i_3 \gneq
          i_{2} \\ j_3 \gneq j_{2}}} \langle \seqdf \bx[i_3], \seqdf
      \by[j_3]\rangle\left(1 +  \sum_{\dots} \dots \right)
    \right)\right)$
\item\label{item:kseqAk_horner} $\KSeqAk(\bx,\by) = 1+ \sum_{\substack{i_1\ge 1 \\ j_1 \ge 1}}\langle \seqdf \bx[i_1], \seqdf \by[i_1]\rangle \left(1+\sum_{\substack{i_2 \gneq i_1 \\ j_2 \gneq j_1}}\langle \seqdf \bx[i_2], \seqdf \by[i_2]\rangle \left(1+ \dots \sum_{\substack{i_{\IndLvl} \gneq i_{\IndLvl-1} \\ j_{\IndLvl} \gneq j_{\IndLvl-1}}} \langle \seqdf \bx[i_{\IndLvl}],\seqdf \by[i_{\IndLvl}]\rangle \right)\right)$
\end{enumerate}
where the usual convention that a sum running over an empty index set evaluates to zero applies.
\end{Prop}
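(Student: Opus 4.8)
The plan is to reduce everything to expanding the product $\SSig(\bx)=\prod_{i=1}^{L}\bigl(1+\seqdf\bx[i]\bigr)$ in the tensor algebra, where $L:=\ell(\seqdf\bx)$. Since $\ast$ is bilinear, distributing the product over the sums $1+\seqdf\bx[i]$ yields $2^{L}$ terms, one for each subset $S\subseteq[L]$: the $i$-th factor contributes the unit $1\in\calH^{\otimes 0}$ when $i\notin S$ and the degree-one element $\seqdf\bx[i]$ when $i\in S$. Because $1$ is the multiplicative neutral element and, by part (iii) of the Remark following the tensor-algebra axioms, $\ast$ agrees with $\otimes$ on homogeneous elements, the term indexed by $S=\{i_{1}<\dots<i_{m}\}$ is exactly $\seqdf\bx[i_{1}]\otimes\dots\otimes\seqdf\bx[i_{m}]\in\calH^{\otimes m}$. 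Thus
\[
\SSig(\bx)=\sum_{\bi\sqsubset[\ell(\seqdf\bx)]}\seqdf\bx[i_{1}]\otimes\dots\otimes\seqdf\bx[i_{\ell(\bi)}],
\]
the sum ranging over strictly increasing index tuples (i.e.\ chosen subsequences together with their positions), including the empty one, which contributes the unit; truncating to $\ell(\bi)\le\IndLvl$ gives $\SSigk(\bx)$. This is the discretized analogue of the expansion of the continuous signature and the argument is essentially the one already sketched before Theorem~\ref{Thm:discretmesh}.

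Next I would compute $\KSeqA(\bx,\by)=\langle\SSig(\bx),\SSig(\by)\rangle_{\tensalg}$ by substituting the two expansions and using bilinearity. The inner product $\langle\cdot,\cdot\rangle_{\tensalg}$ is the orthogonal direct sum of the inner products on the homogeneous components $\calH^{\otimes m}$, so cross terms of different degree vanish and only pairs of subsequences $(\bx^{\prime},\by^{\prime})$ with $\ell(\bx^{\prime})=\ell(\by^{\prime})=:m$ survive. For such a pair, the defining property of the tensor-power inner product (the cases $m=2,3$ being worked out in the Example, the general case being identical) gives $\langle\bx^{\prime}[1]\otimes\dots\otimes\bx^{\prime}[m],\by^{\prime}[1]\otimes\dots\otimes\by^{\prime}[m]\rangle_{\calH^{\otimes m}}=\prod_{r=1}^{m}\langle\bx^{\prime}[r],\by^{\prime}[r]\rangle_{\calH}$. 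Summing over all such pairs yields \ref{item:kseqA}; restricting both expansions to degree $\le\IndLvl$ yields \ref{item:kseqAk}. Since $\bx,\by$ are finite sequences, all sums involved are finite, so no convergence question arises.

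Finally, for the Horner-type recursions \ref{item:kseqA_horner} and \ref{item:kseqAk_horner} I would introduce the two-parameter auxiliary quantity $F(a,b)$ defined as the sum in \ref{item:kseqA} but with the index tuples further constrained to satisfy $a<i_{1}$ and $b<j_{1}$ (the empty tuple always qualifying), so that $\KSeqA(\bx,\by)=F(0,0)$. Splitting each pair of index tuples into its first entries $(i_{1},j_{1})$ and the strictly increasing tails (whose entries exceed $i_{1}$, resp.\ $j_{1}$) and isolating the empty-tuple term gives the fixed-point relation $F(a,b)=1+\sum_{i_{1}>a,\,j_{1}>b}\langle\seqdf\bx[i_{1}],\seqdf\by[j_{1}]\rangle\,F(i_{1},j_{1})$; unrolling it is precisely the claimed nested sum-product of \ref{item:kseqA_horner}. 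For \ref{item:kseqAk_horner} I would use the analogous $F_{\IndLvl}(a,b)$ carrying the extra constraint $\ell(\bi)\le\IndLvl$: splitting off the first index lowers the remaining allowed length by one, so $F_{\IndLvl}(a,b)=1+\sum_{i_{1}>a,\,j_{1}>b}\langle\seqdf\bx[i_{1}],\seqdf\by[j_{1}]\rangle\,F_{\IndLvl-1}(i_{1},j_{1})$ with $F_{0}\equiv 1$, and unrolling gives the depth-$\IndLvl$ nested expression. The only genuinely delicate points are bookkeeping ones: one must read the subsequence sums as sums over position tuples (so that repeated values of $\seqdf\bx$ are counted with their correct multiplicities, matching the coefficients produced by expanding $\ast$), and one must check that the ``split off the smallest index'' step is compatible with the strict-increase constraint. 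Everything else is routine bilinearity and induction, so I expect no substantive obstacle.
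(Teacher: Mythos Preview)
Your proposal is correct and follows essentially the same approach as the paper: the paper's proof is the single sentence ``This follows directly from an explicit computation where both sides are expanded and compared,'' and what you have written is precisely that explicit computation, carried out in detail (the expansion of $\SSig(\bx)$ you derive is also stated as Proposition~\ref{Prop:sigsum} in the appendix). Your care about reading the subsequence sums as sums over position tuples is well placed and matches the paper's definition of $\sqsubset$ via index tuples.
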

\begin{proof}
This follows directly from an explicit computation where both sides are expanded and compared.
\end{proof}
\begin{Rem}
All summation in Proposition~\ref{Prop:sigrecursion} is finite, even for $\KSeqA$: there are only a finite number of sub-sequences $\bx^\prime,\by^\prime$ in \ref{item:kseqA}; and in \ref{item:kseqAk}, summation ends at $\IndLvl = \min (\ell(\bx),\ell(\by))-1$.
An important feature of the sum-formula given by Proposition~\ref{Prop:sigrecursion}~\ref{item:kseqAk_horner} is that it is more efficient to evaluate than the more naive presentation in Proposition~\ref{Prop:sigrecursion}~\ref{item:kseqAk}, due to a much smaller amount of summation.
\end{Rem}

\subsection{Kernel trick number two discretized}

Both equalities in Proposition~\ref{Prop:sigrecursion} show $\KSeqA$ to be expressible entirely in terms of scalar products in $\calH$.
Thus, if we are again in the situation of Assumptions~\ref{Ass:primaryk}, that is there is a primary kernel function $\kernel: \calX\times \calX\rightarrow \RR$ and a feature map $\phi:\calX\rightarrow \calH$ such that $\kernel (\sigma,\tau) = \left\langle \phi(\sigma),\phi(\tau)\right\rangle_\calH$ for $\sigma,\tau\in\calX$, the sequential kernel can be again be second-kernelized
\begin{Def}
\label{Def:SeqB}
Let $\kernel:\calX\times\calX\rightarrow\RR$.
The (discrete)
sequentialization $\KSeqB$ of $\kernel$ for sequences in $\calX$, and its truncation $\KSeqBk$ are defined as
\begin{align*}
\KSeqB:& \calX^+\times \calX^+\rightarrow \RR\;,\quad (\bsigma,\btau)\mapsto\langle \SSig(\phi(\bsigma)), \SSig(\phi(\btau))\rangle_{\tensalg}\\
\KSeqBk:& \calX^+\times \calX^+\rightarrow \RR\;,\quad (\bsigma,\btau)\mapsto\langle \SSigk(\phi(\bsigma)), \SSigk(\phi(\btau))\rangle_{\tensalg}.
\end{align*}
\end{Def}

Following the presentation in the previous section
\ref{sec:approx.disc} and making the substitution $x=\phi(\sigma)$,
$y=\phi(\tau)$ yields again an explict sum formula and a recursive formula for $\KSeqB$
and $\KSeqBk$, in direct analogy to
Proposition~\ref{Prop:sigrecursion}.
This discrete recursion is at the foundation of efficient computation
in a practical setting, as demonstrated in
Section~\ref{sec:discr.string} and Section~\ref{sec:comp}.
\begin{Prop}
  Let $\bsigma,\btau\in \calX^+$ and $\IndLvl\in \NN$. The following
  identities hold for the sequentialization  $\KSeqB$ of $\kernel$:
\begin{enumerate}[label=(\roman*)]
\item $\KSeqB(\bsigma,\btau) = \sum_{\substack{\bi\sqsubset [\ell(\seqdf\bsigma)] \\ \bj\sqsubset [\seqdf\ell(\btau) ]\\ \ell(\bi) = \ell(\bj)}}
  \prod_{r=1}^{\ell(\bi)} \seqdf\kernel(\bsigma,\btau)[\bi[r],\bj[r]]$
\item $\KSeqB(\bsigma,\btau) = 1 + \sum_{\substack{i_1\ge 1 \\ j_1 \ge
      1}} \seqdf \kernel(\bsigma,\btau)[i_1,j_1]\cdot\left(1+
    \sum_{\substack{i_2 \gneq i_1 \\ j_2 \gneq j_1}}  \seqdf
    \kernel(\bsigma,\btau)[i_2,j_2]\cdot\left(1 +  \sum_{\dots} \dots
    \right) \right)$,
\item $\KSeqBk(\bsigma,\btau) = \sum_{\substack{\bi\sqsubset [\ell(\seqdf\bsigma)] \\ \bj\sqsubset [\ell(\seqdf\btau)]\\ \ell(\bi) = \ell(\bj)\le \IndLvl}}
  \prod_{r=1}^{\ell(\bi)} \seqdf \kernel(\bsigma,\btau)[\bi[r],\bj[r]]$,
\item $\KSeqBk(\bsigma,\btau) = 1 + \sum_{\substack{i_1\ge 1 \\ j_1
      \ge 1}} \seqdf \kernel(\bsigma,\btau)[i_1,j_1]\cdot\left(1+
    \sum_{\substack{i_2 \gneq i_1 \\ j_2 \gneq j_1}}  \seqdf
    \kernel(\bsigma,\btau)[i_2,j_2]\cdot\left(1 + \cdots
      \sum_{\substack{i_{\IndLvl} \gneq i_{\IndLvl-1} \\ j_{\IndLvl}
          \gneq j_{\IndLvl-1}}}  \seqdf \kernel(\bsigma,\btau)[i_{\IndLvl},j_{\IndLvl}]\right) \right).$
\end{enumerate}
where we use the notation
\begin{align*}
\seqdf \kernel(\bsigma,\btau)[i,j] := \kernel(\bsigma[i+1], \btau[j+1]) + \kernel(\bsigma[i],\btau[j]) - \kernel(\bsigma[i],\btau[j+1]) - \kernel(\bsigma[i+1],\btau[j]).
\end{align*}
\end{Prop}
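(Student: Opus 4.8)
The plan is to reduce everything to the already-proven Proposition~\ref{Prop:sigrecursion} by tracking how the discrete signature transforms under the feature map $\phi$. First I would observe that, by Definition~\ref{Def:SeqB}, $\KSeqB(\bsigma,\btau) = \KSeqA(\phi(\bsigma),\phi(\btau))$ and $\KSeqBk(\bsigma,\btau) = \KSeqAk(\phi(\bsigma),\phi(\btau))$, where $\phi(\bsigma) = (\phi(\bsigma[1]),\dots,\phi(\bsigma[\ell(\bsigma)])) \in \calH^+$. So it suffices to substitute $\bx = \phi(\bsigma)$, $\by = \phi(\btau)$ into each of the four identities of Proposition~\ref{Prop:sigrecursion} and simplify the resulting inner products in $\calH$.

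The key computational step is the identification of the finite-difference inner product. For $\bx = \phi(\bsigma)$ we have $\seqdf\bx[i] = \phi(\bsigma[i+1]) - \phi(\bsigma[i])$, and hence, by bilinearity of $\langle\cdot,\cdot\rangle_\calH$ and Assumption~\ref{Ass:primaryk}(iii),
\begin{align*}
\langle \seqdf \phi(\bsigma)[i], \seqdf \phi(\btau)[j]\rangle_\calH
&= \langle \phi(\bsigma[i+1]) - \phi(\bsigma[i]),\; \phi(\btau[j+1]) - \phi(\btau[j])\rangle_\calH\\
&= \kernel(\bsigma[i+1],\btau[j+1]) + \kernel(\bsigma[i],\btau[j]) - \kernel(\bsigma[i],\btau[j+1]) - \kernel(\bsigma[i+1],\btau[j]),
\end{align*}
which is exactly $\seqdf\kernel(\bsigma,\btau)[i,j]$ as defined in the statement. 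Substituting this into Proposition~\ref{Prop:sigrecursion}\ref{item:kseqA}--\ref{item:kseqAk_horner} term by term yields the four claimed formulas; the combinatorial index sets ($\bi\sqsubset[\ell(\seqdf\bsigma)]$ etc.) are unchanged because $\ell(\phi(\bsigma)) = \ell(\bsigma)$, so $\ell(\seqdf\phi(\bsigma)) = \ell(\seqdf\bsigma)$.

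I do not expect a genuine obstacle here — the proof is a direct substitution argument, as the paper itself signals (``making the substitution $x=\phi(\sigma)$, $y=\phi(\tau)$ yields again an explicit sum formula''). The one point requiring a word of care is that positive semi-definiteness of $\KSeqB$ and $\KSeqBk$ should be noted: this is immediate since, as in Definition~\ref{Def:SeqB}, they are defined as genuine scalar products $\langle \SSig(\phi(\bsigma)), \SSig(\phi(\btau))\rangle_{\tensalg}$ of features, so no separate argument via the formulas is needed. The mildest subtlety is purely bookkeeping: making sure the empty-tuple / empty-sum conventions (the leading ``$1$'' coming from $\SSig$'s degree-$0$ component) carry through the substitution verbatim, which they do because $\phi$ acts coordinatewise and does not touch the grading of $\tensalg$.
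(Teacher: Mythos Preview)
Your proposal is correct and follows exactly the paper's own argument: substitute $\bx=\phi(\bsigma)$, $\by=\phi(\btau)$ into Proposition~\ref{Prop:sigrecursion}, then expand $\langle \seqdf\phi(\bsigma)[i],\seqdf\phi(\btau)[j]\rangle_\calH$ by bilinearity to obtain $\seqdf\kernel(\bsigma,\btau)[i,j]$. If anything, you have spelled out a few bookkeeping points (length preservation under $\phi$, the degree-zero ``$1$'' term) more explicitly than the paper does.
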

\begin{proof}
Substituting $\bx=\phi(\bsigma),\by = \phi(\btau)$ we have for $\bi=(i_1,\ldots,i_r),\bj=(j_1,\ldots,j_r)$
$$\langle (\seqdf \bx)[i_r], (\seqdf \by )[j_r]\rangle_{\calH} = \langle (\seqdf \phi(\bsigma))[i_r], (\seqdf \phi(b\tau))[j_r]\rangle_{\calH}$$
for the scalar products.
One obtains, via an elementary computation, that
$$\langle (\seqdf \bx)[i_r], (\seqdf \by )[j_r]\rangle_{\calH} = \seqdf_{i_r,j_r} \kernel(\bsigma,\btau).$$
The statement then follows by Proposition \ref{Prop:sigrecursion}
\end{proof}
\begin{Rem}
  Again a direct consequence of the approximation
  Theorem~\ref{Thm:discretmesh} is the convergence of $\KSeqB$ to
  $\KSigB$,
  \begin{align*}
    \KSeqB(\sigma(\bs),\tau(\bt))\rightarrow \KSigB(\sigma,\tau)\text{
    as }\mesh(\bs),\mesh(\bt)\text{ vanishes}
  \end{align*}
(under Assumptions \ref{Ass:primaryk}). However, although the rate of
convergence is explicitly given by
Theorem~\ref{Thm:discretmesh}, it depends on $\Var(\phi(\sigma))$ and $\Var(\phi(\tau))$.
\end{Rem}
\begin{Rem}[Variations on a theme]
\label{Rem:seqztn-complicated}
We further point out several, more complex variants of the second kernelization:
\begin{enumerate}[label=(\roman*)]
\item\label{item:variation1} In the formula for the sequential kernel, one could also simply omit the first differences. This corresponds to cumulative summation of the sequences in feature space, or replacing signatures by exponentials:
\begin{align*}
\KSeqB: \calX^+ \times  \calX^+\rightarrow \RR; \quad
    (\bsigma,\btau)\mapsto \sum_{\substack{\bsigma^\prime\sqsubset \bsigma,\; \btau^\prime \sqsubset \btau\\ \ell(\bsigma^\prime)=\ell(\btau^\prime)}}
    \prod_{r=1}^{\ell(\bsigma^\prime)} \kernel(\bsigma^\prime[r],\btau^\prime[r]).
\end{align*}
In practice, we have anecdotally found that this to lead to large sums and numerical instabilities --- though one could argue that the variant without finite differences is the simpler one.

\item More generally, one may consider a family of primary kernels, of form $\kernel: \calX^m\times \calX^m \rightarrow \RR$, not necessarily induced as the product of kernels of type $\kernel: \calX\times \calX \rightarrow \RR$, sequentialization is possible via Proposition~\ref{Prop:sigrecursion}~(i), namely as
\begin{align*}
\KSeqB: \calX^+ \times  \calX^+\rightarrow \RR; \quad (\bsigma,\btau)\mapsto \sum_{\substack{\bi\sqsubset [\ell(\bsigma)],\; \bj \sqsubset [\ell(\btau)]\\ \ell(\bi)=\ell(\bj)=m}} \kernel^m(\bsigma[\bi], \btau[\bj]).
\end{align*}
This corresponds to choosing different kernels on different levels of the tensor algebra, e.g., re-normalization.

\item One may additionally opt to have the primary kernel remember the
  position of elements in the sequence. This may be done by mapping
  sequences in $\calX^+$ to a position-remembering sequence in
  $(\calX\times\NN)^+$ first, and then sequentializing a primary
  kernel which is a product kernel of type
  $\kernel^m((\bsigma,\bi),(\btau,\bj)) = \kappa(\bi,\bj)\cdot
  \kernel^m(\bsigma,\btau)$, where $\kappa:\NN^+\times\NN^+\rightarrow
  \RR$ is positive definite. This yields a sequential kernel of form
  \begin{align*}
    \KSeqB: \calX^+ \times  \calX^+\rightarrow \RR; \quad (\bsigma,\btau)\mapsto \sum_{\substack{\bi\sqsubset [\ell(\bsigma)],\; \bj \sqsubset [\ell(\btau)]\\ \ell(\bi)=\ell(\bj)}} \kappa(\bi,\bj)\cdot \kernel^m(\bsigma [\bi],\btau[\bj]).
  \end{align*}
\end{enumerate}

From the viewpoint of the sequential kernel, the above choices, while minor, seem somewhat arbitrary. We will see in the coming Section~\ref{sec:discr.string} that they have their justification in explaining prior art. Nevertheless we would with Occam's razor that they should not be made unless they empirically improve the goodness of the method at hand, above the mathematically more simple version of the sequential kernel, or sequentialization in Remark~\ref{Rem:seqztn}.

\end{Rem}


\section{The string, alignment and ANOVA kernel seen via sequentializations}
\label{sec:discr.string}
In this section we show how the sequential kernel is closely related to the existing kernels for sequential data:
\begin{enumerate}[label=(\alph*)]
\item String kernels~\cite{lohdi02textclassification,leslie04faststringkernels} may be seen as a special case of the sequential kernel.
\item The global alignment
  kernel~\cite{cuturi2007kernel,cuturi2011fast} can be obtained from a
  special case of the sequential kernel by deleting terms that destroy
  positive definiteness.
\item The ANOVA kernel arises by considering only the symmetric part of
  tensors in $\tensalg$.
\item The sequential kernel may be understood in the general framework of the relation-convolution kernel~\cite{haussler1999convolution}.
\end{enumerate}

The link will be established to one of the more general variants of sequentialization presented in Remark~\ref{Rem:seqztn-complicated}, of form
$$\KSeq: \calX^+ \times  \calX^+\rightarrow \RR; \quad (\bsigma,\btau)\mapsto \sum_{\substack{\bi\sqsubset [\ell(\bsigma)], \; \bj \sqsubset [\ell(\btau)]\\ \ell(\bi)=\ell(\bj)}} \kappa(\bi,\bj)\cdot \prod_{r=1}^{\ell(\bi)} \kernel(\bsigma [i_r],\btau[j_r]),$$
where $\kernel:\calX\times\calX\rightarrow \RR$ and $\kappa:\NN^+\times \NN^+\rightarrow \RR$ are suitably chosen.

\subsection{The string kernel}

The string kernel is a kernel for strings in a fixed alphabet $\Sigma$, say $\Sigma = \{\mbox{A},\mbox{B}\}$. A number of variants exist; we will consider the original definition of the string kernel (Definition 1, page 423 of \cite{lohdi02textclassification}). Modifications may be treated similarly to the below.

\begin{Def}
Fix a finite alphabet $\Sigma$. The string kernel on $\Sigma$ is
$$K_\Sigma:\Sigma^+\times \Sigma^+ \rightarrow \RR\;\quad (\bsigma,\btau) \mapsto  \sum_{\substack{\bi\sqsubset [\ell(\bsigma)] \\ \bj\sqsubset [\ell(\btau)]}} \lambda^{d(\bi)+d(\bj)}\cdot \mathbb{1}(\bsigma[\bi]=\btau[\bj]) = \sum_{u\in \Sigma^+}\sum_{\bi:u=\bsigma[\bi]}\sum_{\bj:u=\btau[\bj]} \lambda^{d(\bi)+d(\bj)},$$
where $\lambda\in\RR^+$ is a parameter, $d(\bi):= \bi[\ell(\bi)] - \bi[1] +1$ is the distance of the last and first symbol in the sub-string given by $\bi$, and $\mathbb{1}(\bsigma[\bi]=\btau[\bj])$ is the indicator function of the event whether $\bsigma[\bi]$ and $\btau[\bj]$ agree, i.e., one if $\bsigma[\bi] = \btau[\bj]$ and zero otherwise.
\end{Def}

For $\lambda = 1$, one has
$$K_\Sigma(\bsigma,\btau) = \sum_{\substack{\bi\sqsubset [\ell(\bsigma)] \\ \bj\sqsubset [\ell(\btau)]}} \mathbb{1}(\bsigma[\bi]=\btau[\bj]) = \sum_{\substack{\bsigma^\prime\sqsubset \bsigma \\ \btau^\prime \sqsubset \btau\\ \ell(\bsigma^\prime)=\ell(\btau^\prime)}} \prod_{r=1}^{\ell(\bsigma^\prime)} \kernel (\bsigma^\prime[r], \btau^\prime[r])$$
for the choice of primary kernel $\kernel(a,b) = \mathbb{1}(a=b)$ for symbols $a,b\in \Sigma$.
Comparing with Proposition~\ref{Prop:sigrecursion}, this canonically
identifies the string kernel with parameter $\lambda = 1$ with the
sequentialization of the Euclidean scalar product $\kernel(\cdot,\cdot)=\langle \cdot,\cdot \rangle$ on $\calX=\RR^{|\Sigma|}$, via identifying a string $\ba\in\Sigma^L$ with the sequence
$$(0,e_{\ba[1]},e_{\ba[1]}+ e_{\ba[2]},\dots,
\sum_{r=1}^{L}e_{\ba[r]})\in \calX^{L+1}.$$
We state the result for arbitrary $\lambda$, where one can also express the string kernel as a sequentialization:
\begin{Prop}
Consider the string kernel $K_\Sigma$ on an alphabet $\Sigma$. Then,
$$K_\Sigma(\bsigma,\btau) = \sum_{\substack{\bi\sqsubset [\ell(\bsigma)] \\ \bj\sqsubset [\ell(\btau)]}}\lambda^{d(\bi)+d(\bj)}\cdot \mathbb{1}(\bsigma[\bi]=\btau[\bj]) =: \sum_{\substack{\bi\sqsubset [\ell(\bsigma)] \\ \bj\sqsubset [\ell(\btau)]\\ \ell(\bi) = \ell(\bj)}} \kappa(\bi,\bj)\cdot \prod_{r=1}^{\ell(\bi)} \kernel (\bsigma[i_r], \btau[j_r]$$
with the positive definite kernels $\kernel(a,b) = \mathbb{1}(a=b)$ and $\kappa (\bi,\bj) = \lambda^{d(\bi)+d(\bj)}$.
\end{Prop}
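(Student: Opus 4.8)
The plan is to dispose of the statement in two independent pieces: the displayed chain of equalities, and the positive-definiteness of the two kernels $\kernel$ and $\kappa$. The first equality in the display is merely the definition of the string kernel $K_\Sigma$, so only the second equality requires an argument; and once positive definiteness of $\kernel(a,b)=\mathbb{1}(a=b)$ and of $\kappa(\bi,\bj)=\lambda^{d(\bi)+d(\bj)}$ is established, the right-hand side will be recognized as a legitimate instance of the general sequentialization of Remark~\ref{Rem:seqztn-complicated}~(iii) (with $m=1$), so that the recursion machinery of Proposition~\ref{Prop:sigrecursion} applies to it verbatim.

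For the second equality I would argue in three short steps. First, for index tuples $\bi\sqsubset[\ell(\bsigma)]$ and $\bj\sqsubset[\ell(\btau)]$ the subsequences $\bsigma[\bi]$ and $\btau[\bj]$ lie in $\Sigma^{\ell(\bi)}$ and $\Sigma^{\ell(\bj)}$ respectively, hence can be equal only when $\ell(\bi)=\ell(\bj)$; so the a priori unconstrained double sum defining $K_\Sigma$ automatically collapses onto the length-matched index set appearing on the right-hand side. Second, for $\ell(\bi)=\ell(\bj)=m$, equality of two $m$-tuples over $\Sigma$ is equality in every coordinate, so $\mathbb{1}(\bsigma[\bi]=\btau[\bj])=\prod_{r=1}^m\mathbb{1}(\bsigma[i_r]=\btau[j_r])=\prod_{r=1}^m\kernel(\bsigma[i_r],\btau[j_r])$. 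Third, $d(\bi)+d(\bj)$ is a function of $\bi$ plus a function of $\bj$, so the weight $\lambda^{d(\bi)+d(\bj)}$ is literally $\kappa(\bi,\bj)$ written in product form. Substituting the last two identities into the definition of $K_\Sigma$ then yields the claimed formula.

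For the positive-definiteness claims I would simply exhibit feature maps. The one-hot embedding $\Sigma\to\RR^{|\Sigma|}$, $a\mapsto e_a$, satisfies $\langle e_a,e_b\rangle_{\RR^{|\Sigma|}}=\mathbb{1}(a=b)$, so $\kernel$ is positive definite (equivalently, every Gram matrix of $\kernel$ is block-diagonal with all-ones blocks, hence positive semidefinite); and $\kappa(\bi,\bj)=\lambda^{d(\bi)}\lambda^{d(\bj)}$ is a rank-one kernel with the real feature map $\bi\mapsto\lambda^{d(\bi)}\in\RR$ (real because $\lambda>0$), so every Gram matrix of $\kappa$ has the form $vv^{\top}$ and is positive semidefinite. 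None of this is deep; the only place that genuinely calls for care — and the closest thing to an obstacle — is the bookkeeping of degenerate indices, namely the empty tuple and the corresponding constant ($m=0$) term: one must reconcile the convention that $\bi\sqsubset[L]$ includes the empty tuple with whatever convention Definition~1 of~\cite{lohdi02textclassification} adopts for the empty substring and for the value of $d$ on it, so that the two sides agree exactly rather than only up to an additive constant.
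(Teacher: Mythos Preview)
Your proposal is correct and follows essentially the same approach as the paper: the paper dispatches the equality with ``follows from substituting definitions'' (your three short steps simply make this explicit), and proves positive definiteness of $\kernel$ and $\kappa$ via the same scalar-product/feature-map representations you give, namely $\kappa(\bi,\bj)=\langle\lambda^{d(\bi)},\lambda^{d(\bj)}\rangle_\RR$ and $\kernel(a,b)=\sum_{c}\mathbb{1}(a=c)\mathbb{1}(b=c)$, which is your one-hot embedding. Your added caution about the empty-tuple convention is reasonable bookkeeping that the paper does not discuss.
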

\begin{proof}
The equality follows from substituting definitions. It remains to show that $\kernel$,$\kappa$ are positive definite.
For $\kappa$, note that we have a scalar product representation $\kappa(\bi,\bj) = \langle \lambda^{d(\bi)}, \lambda^{d(\bj)}\rangle_\RR$ (over the real numbers), therefore $\kappa$ is positive definite. Positive definiteness of $\kernel$ follows similarly from the scalar product representation
$\kernel(a,b) = \sum_{c\in \Sigma^+}\mathbb{1}(a=c)\cdot \mathbb{1}(b=c)$.
\end{proof}

\begin{Rem}
The above shows that string kernel arises a sequentialization as
introduced in Section \ref{sec:approx}.
However, it is interesting to note that it is not in
exact agreement with the continuous signature kernel $\langle
\Sig(x),\Sig(y) \rangle$ when we associate with a string $\ba$ a path
$$x\in\BV([0,L],\RR^{|\Sigma|})\;\quad x(t) = \{t\}\cdot e_{\ba\left[\lfloor t \rfloor\right]}+\sum_{r=1}^{\lfloor t \rfloor} e_{\ba[r]},$$
where as usual $\lfloor t \rfloor$ is the floor function of $t$, and
$\{t\}$ is the fractional part of $t$ (not the set containing $t$ as
one element). In fact, one can show that the string kernel cannot be
expressed as a signature kernel evaluated at a suitable continuous
path.
This difference is due to the fact that our sequentialization kernel/discretization is on the
level of the tensor algebra and not on the level of paths, see Remark \ref{rem:not_group_like}.
\end{Rem}

While $\kappa$ cannot be pulled into the product, the string kernel
nevertheless admits a sum-formula presentation similar to
Proposition~\ref{Prop:sigrecursion}~(iii), namely
$$K_\Sigma(\bsigma,\btau) = \sum_{\substack{i_1\ge 1 \\ j_1 \ge 1}}\lambda^{2-i_1-j_1} \kernel (\bsigma[i_1],\btau[j_1]) \sum_{\substack{i_2 \gneq i_1 \\ j_2 \gneq j_1}} \kernel(\bsigma[i_2],\btau[j_2])\dots \sum_{\substack{i_{L} \gneq i_{L-1} \\ j_L \gneq j_{L-1}}} \lambda^{i_L+j_L} \kernel(\bsigma[i_L],\btau[j_L]),$$
where $L=\min (\ell(\bsigma),\ell(\btau))$. Note that each sum runs over a double index, and $\lambda$-s occur only next to the first and last sum. The usual convention that a sum running over an empty index set evaluates to zero applies. The sum-formula is well-known and a main tool in efficiently evaluating string kernels, see for example the section ``efficient computation of SSK'', page 425 of \cite{lohdi02textclassification}.

Gappy and other string kernel variants such as in~\cite{leslie04faststringkernels} may be obtained from the truncated sequential kernel and other, suitable choices of $\kappa$.

\begin{Rem}
The interpretation as a sequential kernels directly highlights an alternative interpretation, and a natural generalization of the string kernel: suppose each character in the string was not an exact character, but a weighted sum $\alpha \mbox{A} + \beta \mbox{B}$; where for example the character $1\mbox{A} +0 \mbox{B}$ is the same as A, $0\mbox{A} +1 \mbox{B}$ is the same as B, and $\frac{1}{2}\mbox{A} +\frac{1}{2} \mbox{B}$ is the same as half-A-half-B. Such a scenario could practical sense if the exact meaning of a character is not entirely known, for example when the string was obtained through character recognition, for example where the letter l (lower-case-L) and the number 1 are often confounded. It can be observed that this situation can be coped with by mapping say $\frac{1}{2}\mbox{A} +\frac{1}{2} \mbox{B}$ to the vector $\frac{1}{2}e_{\mbox{A}}+\frac{1}{2}e_{\mbox{B}}$, with the sequential kernel left unchanged.
 \end{Rem}

\subsection{The global alignment kernel}

The global alignment kernel one of the most used kernels for sequences. We recapitulate its definition in modern terminology (Section 2.2 of \cite{cuturi2011fast}).

\begin{Def}
A 2-dimensional integer sequence $\bs \in \left(\NN^2\right)^+$ is called an alignment if $\seqdf \bs [i] \in \left\{(0,1),(1,0),(1,1) \right\}$ for all $i\in [\ell(\bs)-1]$. The set of such alignments will be denoted by $\calA$; that is, we write $\bs\in \calA$ if $\bs$ is an alignment.
\end{Def}

\begin{Def}
Fix an arbitrary set $\calX$, and a primary kernel $\kernel:\calX\times \calX\rightarrow \RR$. The global alignment kernel is defined as
$$K_{\mbox{GA}}:\calX^+\times \calX^+ \rightarrow \RR\;\quad (\bsigma,\btau) \mapsto  \sum_{\substack{\bi\sqsubseteq [\ell(\bsigma)] \\ \bj\sqsubseteq [\ell(\btau)]\\ \ell(\bi) = \ell(\bj) \le \ell(\bsigma)+\ell(\btau)\\(\bi,\bj)\in\calA}} \prod_{r=1}^{\ell(I)} \kernel (\bsigma[i_r], \btau[j_r]).$$
\end{Def}

In its native form, the global alignment kernel cannot be written as a sequential kernel. A simple proof for this is that the sequential kernels are all positive definite, while the global alignment kernel need not be (see~\cite{cuturi2011fast}). While one can write
$$K_{\mbox{GA}}:\calX^+\times \calX^+ \rightarrow \RR\;\quad (\bsigma,\btau) \mapsto  \sum_{\substack{\bi\sqsubseteq [\ell(\bsigma)] \\ \bj\sqsubseteq [\ell(\btau)]\\ \ell(\bi) = \ell(\bj)}} \kappa(\bi,\bj)\cdot \prod_{r=1}^{\ell(\bi)} \kernel (\bsigma[i_r], \btau[j_r])$$
with $\kappa(\bi,\bj) = \mathbb{1}\left( (\bi,\bj)\in \calA\right)$, this is not a sequential kernel since $\kappa$ is not positive definite.

However, a simple modification turns the global alignment kernel into a sequential (and thus positive definite) kernel:
\begin{Def}
A 1-dimensional integer sequence $\bsigma \in \left(\NN^1\right)^+$ is called a half-alignment if $\seqdf \bsigma [i] \in \left\{0,1\right\}$ for all $i\in [\ell(\bsigma)-1]$. The set of such half-alignments will be denoted by $\frac{1}{2}\calA$; that is, we write $\bsigma\in \frac{1}{2}\calA$ if $\bsigma$ is a half-alignment.
\end{Def}

With this, we can formulate a slightly modified global alignment kernel:
\begin{align*}
K_{\mbox{G}\frac{1}{2}{A}}:\calX^+\times \calX^+ \rightarrow \RR\;\quad (\bsigma,\btau) \mapsto & \sum_{\substack{\bi\sqsubseteq [\ell(\bsigma)] \\ \bj\sqsubseteq [\ell(\btau)]\\ \ell(\bi) = \ell(\bj) \le \ell(\bsigma)+\ell(\btau) \\ \bi,\bj\in\frac{1}{2}\calA}} \prod_{r=1}^{\ell(\bi)} \kernel (\bsigma[i_r], \btau[j_r])
\end{align*}
With a simple re-formulation, one obtains
\begin{align*}
K_{\mbox{G}\frac{1}{2}{A}} =\sum_{\substack{\bi\sqsubseteq [\ell(\bsigma)] \\ \bj\sqsubseteq [\ell(\btau)]\\ \ell(\bi) = \ell(\bj) \le \ell(\bsigma)+\ell(\btau)}} \kappa^\prime(\bi,\bj) \cdot \prod_{r=1}^{\ell(\bi)} \kernel (\bsigma[i_r], \btau[j_r]),
\end{align*}
where $\kappa^\prime(\bi,\bj) = \mathbb{1}\left(\bi\in \frac{1}{2} \calA\right)\cdot \mathbb{1}\left(\bj\in \frac{1}{2} \calA\right)$. Note that $\kappa'$ is positive definite, since it is explicitly given as a scalar product of features in $\RR$, thus $K_{\mbox{G}\frac{1}{2}{A}}$ is positive definite as a sequential kernel.

The terms missing in $K_{\mbox{GA}}$, when compared to $K_{\mbox{G}\frac{1}{2}{A}}$ are exactly those arising from sequence pairs $(\bi,\bj)\in \left(\NN^2\right)^+$ in which there is an increment $(0,0)$. In view of the discussion in Section~3.2 of~\cite{cuturi2011fast}, these missing terms are exactly the locus of non-transitivity in the sense of~\cite{shin2008generalization}.

One can now follow the authors~\cite{cuturi2011fast} and heuristically continue studying sufficient conditions under which the original global alignment kernel is positive definite, keeping the missing terms out. However, we would argue, especially in the view of the violated transitivity condition, that it may be more natural to add the missing terms back, unless there is a clear empirical reason in favour of leaving them out. Particularly, we would further argue that there is no first-principles reason to leave the terms out, since the definition of an alignment has been heuristic and somewhat arbitrary to start with.

\subsection{The relation-convolution kernel}

Finally, we would like to point out that the sequential kernels are closely related to the relation-convolution kernels in the sense of Haussler~\cite{haussler1999convolution}, Section 2.2. We cite it in a slightly less general form than originally defined:

\begin{Def}
\label{Def:RCk}
Fix arbitrary sets $\calY,\calZ$. Further, fix a relation $R \subseteq \calY\times \calZ$ and a kernel $\kernel_\calY:\calY\times \calY\rightarrow \RR$. A relation-convolution kernel is a kernel of the form
\begin{align*}
K_{\text{RC}}:\calY\times \calY \rightarrow \RR\;\quad (x,y) \mapsto  \sum_{\substack{r\in R^{-1}( x) \\ s\in R^{-1}(y)}} \kernel_\calY(r,s),
\end{align*}
where we have written $R^{-1}(x) :=\{y\;:\; (x,y) \in R\}$.
\end{Def}

In the presented form, the signature kernel
\begin{align*}
\KSeq: \calX^+ \times  \calX^+\rightarrow \RR; \quad (\bsigma,\btau)\mapsto \sum_{\substack{\bi\sqsubset [\ell(\bsigma)] \\ \bj \sqsubset [\ell(\btau)]\\ \ell(\bi)=\ell(\bj)}} \kappa(\bi,\bj)\cdot \prod_{r=1}^{\ell(\bi)} \kernel(\bsigma[i_r] ,\btau[j_r]),
\end{align*}
is a special case of Haussler's relation-convolution kernel, with the relation being ordered sub-sequence relation. Note though the main discrepancy which is that the native sequential kernel is evaluated at first differences, so a relation-convolution kernel is only obtained after the summation step described in Remark~\ref{Rem:seqztn-complicated}~\ref{item:variation1}.

It is also interesting to note that the sub-sequence relation is the same from which the ANOVA kernel is obtained, see Section 2.4 of~\cite{haussler1999convolution}. However, for the ANOVA kernel, the primary kernel is restricted to be zero for unequal sequence elements.

\section{Higher order sequentialization and noisy observations}\label{sec:noise}
We have seen in Section \ref{sec:approx} that the sequentialisation $\KSeqB$ of a kernel $\kernel$ converges to the inner
product of signature features, that is
\begin{align}\label{es:kseq_to_ksig}
\KSeqB(\sigma(\bs),\tau(\bt))\rightarrow \KSigB(\sigma,\tau)=\langle \Sig\left(  \phi(\sigma)\right),\Sig\left( \phi(\tau) \right) \rangle_{\tensalg} \text{ as }\mesh(\bs)\vee \mesh(\bt)\rightarrow0.
\end{align}
This requires (see Assumption \ref{Ass:primaryk}) that
$\phi(\sigma),\phi(\tau)\in\BV(\calH)$ so that the signature features
$\Sig\left(  \phi(\sigma)\right)$,$\Sig\left(  \phi(\tau)\right)$ are
well defined via Riemann--Stieljtes integration.
However, a common situation is that observations are perturbed by
noise in which case the bounded variation assumption is typically not
fulfilled.
An insight of rough path theory and stochastic analysis is that despite the breakdown of classic integration, it is for
large classes of paths possible to define a map that replaces the
signature $\sigma\mapsto\Sig\left( \phi\left( \sigma \right)\right)$ or in our learning context: becomes a
feature map for $\sigma$.

The price is that we need to replace $\SSig$ by a \emph{higher order
approximation} $\SSig_{\IndDegr}$ respectively $\langle \SSig\circ\phi,\SSig\circ\phi \rangle$ by
$\langle \SSig_{\IndDegr}\circ\phi,\SSig_{\IndDegr}\circ\phi \rangle$, here $\IndDegr\ge1$
will denote the order of approximation and has to be choosen higher the
more irregular the underlying paths are.
A thorough discussion for general noise (truly ``rough'' paths) requires some background in
stochastic analysis and rough path theory and is beyond the scope of
this article and we refer to
\cite{lyons2004stflour,friz2010multidimensional}.
The point we want to make in this section, is that with a few modifications, \emph{the methodology of the previous chapter
extends to sequences that are sampled from unbounded variation paths}.
\subsection{Brownian noise: order 2 approximations}
Below we demonstrate the needed adapations for multivariate white noise/Brownian motion; moreover,
we focus on sequentialization of the trivial kernel $\kernel=\langle \cdot,\cdot
 \rangle_{\RR^d}$, that is the primary feature map $\phi(x)=x$.
In this case, we deal with (semi)martingale paths and an approximation of degree $\IndDegr=2$ is needed.
\begin{Prop}\label{prop:no_first_order}
Let $\mu$ be the Wiener measure on
$C\left(\left[0,1\right]\right),\RR^d)$, that is
$x\sim\mu$ is a $d$-dimensional Brownian motion.
Let $\bt_n\in\simplex$ be dyadics, $\bt_n[i]:=i2^{-n}$, and define
$(x_n)\in\BV([0,1],\RR^d)$ as the piecewise linear interpolation on $\bt_n$,
\begin{align}
  \label{eq:pcw_linear_interpolation}
  x_n(t):=x(\bt_n)[i]+(t-\bt_n[i])\seqdf x(\bt_n)[i]\text{ for }t\in\left[ \bt_n[i],\bt_n[i+1] \right).
\end{align}
For any $p>2$ and any multindex
$(i_1,i_2)\in \left\{ 1,\ldots d \right\}^2$ it holds for $\mu$-a.e.~$x$ that
\begin{enumerate}
\item \label{item:unboundedVAR}$\Var(x)=\infty$,
\item\label{item:strat_convergence}
  $\lim_{n\rightarrow\infty}\Sig_{(i_1,i_2)}(x_n)$ exists in $\tensalg$ and
  equals the stochastic Ito--Stratonovich integral
  \begin{align}\label{eq:limit_exists}
    \int_{\simplex_2(\left[ 0,1 \right])}\circ\diff x_{i_1}\circ \diff x_{i_2},
  \end{align}
\item\label{it:order1_doesnt_conv} $\SSig_1(x(\bt_n))_{(i_1,i_2)}$ does not converge to (\ref{eq:limit_exists}) as
  $n\rightarrow\infty$ if $i_1=i_2$.
\end{enumerate}
However, if we denote
\begin{align}
  \label{eq:higherorder}
  \SSig_{(2)}: \calH^+ \rightarrow \tensalg,\quad
  \SSig_{(2)}(\bx)=\prod_{i=1}^{\ell(\seqdf\bx)} \left(  1+\seqdf
  \bx[i]+\frac{(\seqdf \bx[i])^{\otimes 2}}{2!}\right)
\end{align}
then the $(i_1,i_2)$ coordinate of $\SSig_{(2)}\left(  x(\bt_n)\right)$
converges to (\ref{eq:limit_exists}) as $n\rightarrow\infty$ for $\mu$-a.e.~$x$.
\end{Prop}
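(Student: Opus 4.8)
The heart of the matter is a short tensor‑algebra computation together with two classical facts about dyadic Brownian sums. Write $v_i:=\seqdf x(\bt_n)[i]\in\RR^d$ for the increments of $x$ over the dyadic grid $\bt_n$ (suppressing $n$ from the notation) and $v_i^k$ for the $k$‑th coordinate of $v_i$. Since $x_n$ from (\ref{eq:pcw_linear_interpolation}) is affine on each grid interval with increment $v_i$, and $\int_{\simplex^2([\bt_n[i],\bt_n[i+1]])}\diff x_n^{\otimes 2}=\tfrac12 v_i^{\otimes 2}$, additivity of the iterated integral gives
\begin{align*}
\Sig_2(x_n)=\int_{\simplex^2([0,1])}\diff x_n^{\otimes 2}=\sum_{i<j}v_i\otimes v_j+\tfrac12\sum_i v_i^{\otimes 2}.
\end{align*}
Expanding the products in the definitions of $\SSig_1=\SSig$ and of $\SSig_{(2)}$ in (\ref{eq:higherorder}) and collecting the terms of tensor degree at most $2$ gives
\begin{align*}
\SSig_1(x(\bt_n))&=1+\sum_i v_i+\sum_{i<j}v_i\otimes v_j+(\text{degree}\ge 3),\\
\SSig_{(2)}(x(\bt_n))&=1+\sum_i v_i+\sum_{i<j}v_i\otimes v_j+\tfrac12\sum_i v_i^{\otimes 2}+(\text{degree}\ge 3).
\end{align*}
Thus the degree‑$\le2$ part of $\SSig_{(2)}(x(\bt_n))$ equals that of $\Sig(x_n)$ \emph{exactly}, while $\SSig_1(x(\bt_n))$ is short by the term $\tfrac12\sum_i v_i^{\otimes 2}$. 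Everything now reduces to the asymptotics of the two $\RR$‑valued sums $A_n:=\sum_{i<j}v_i^{i_1}v_j^{i_2}$ and $Q_n:=\sum_i v_i^{i_1}v_i^{i_2}$.

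Two classical facts: (a) $A_n$ is the left‑endpoint Riemann--It\^o sum along the dyadic partitions for the It\^o iterated integral $\int_0^1\bigl(x_{i_1}(t)-x_{i_1}(0)\bigr)\,\diff x_{i_2}(t)$, and, the integrand being continuous and adapted, $A_n$ converges to it $\mu$‑a.s.; (b) $Q_n$ is the dyadic cross‑variation sum and converges $\mu$‑a.s.\ to $\langle x_{i_1},x_{i_2}\rangle_1=\mathbb{1}(i_1=i_2)$. In both cases the $L^2$‑error is $O(2^{-n/2})$ along dyadics, so $\mu$‑a.s.\ convergence follows by Borel--Cantelli; both statements are standard in stochastic integration. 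As a byproduct, taking $i_1=i_2$ in (b): $Q_n\to1$ $\mu$‑a.s.; but $Q_n\le \mesh(x(\bt_n))\cdot\sum_i\|v_i\|\le \mesh(x(\bt_n))\cdot\Var(x)$, and $\mesh(x(\bt_n))\to0$ by uniform continuity of $x$, so $\Var(x)<\infty$ would force $Q_n\to0$ --- a contradiction. Hence $\Var(x)=\infty$ $\mu$‑a.s., which is part~\ref{item:unboundedVAR}.

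Now combine. By the first display, $\Sig_{(i_1,i_2)}(x_n)=A_n+\tfrac12 Q_n$, which by (a), (b) converges $\mu$‑a.s.\ to $\int_0^1\bigl(x_{i_1}(t)-x_{i_1}(0)\bigr)\diff x_{i_2}(t)+\tfrac12\mathbb{1}(i_1=i_2)$; since the It\^o and Stratonovich iterated integrals differ exactly by $\tfrac12\langle x_{i_1},x_{i_2}\rangle_1$, this limit is the It\^o--Stratonovich integral (\ref{eq:limit_exists}), which moreover lies in $\tensalg$, proving part~\ref{item:strat_convergence}. For part~\ref{it:order1_doesnt_conv}, comparing the two displays coordinatewise gives $\SSig_1(x(\bt_n))_{(i_1,i_1)}=\Sig_{(i_1,i_1)}(x_n)-\tfrac12 Q_n$, which converges $\mu$‑a.s.\ to $(\ref{eq:limit_exists})-\tfrac12\neq(\ref{eq:limit_exists})$ because $Q_n\to1$; in particular it does not converge to (\ref{eq:limit_exists}). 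Finally, for the concluding assertion, the degree‑$\le2$ parts of $\SSig_{(2)}(x(\bt_n))$ and $\Sig(x_n)$ agree, so the $(i_1,i_2)$ coordinate of $\SSig_{(2)}(x(\bt_n))$ equals $\Sig_{(i_1,i_2)}(x_n)$ and hence converges $\mu$‑a.s.\ to (\ref{eq:limit_exists}) by part~\ref{item:strat_convergence}.

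The routine tensor‑algebra bookkeeping of the first paragraph is the bulk of the work; the only genuinely analytic inputs are the $\mu$‑a.s.\ (rather than merely in‑probability) convergences in (a) and (b) along the dyadic sequence. The expected main obstacle, if one wants a self‑contained argument, is (a): promoting $L^2$‑convergence of Riemann--It\^o sums to almost‑sure convergence, which is done via the summable $L^2$‑rate along dyadics and Borel--Cantelli (or one simply invokes a standard reference; (b) is the even more classical a.s.\ convergence of the dyadic quadratic variation of Brownian motion).
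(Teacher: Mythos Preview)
Your proof is correct and follows essentially the same route as the paper: expand the degree-$\le 2$ part of the relevant tensor-algebra products, identify the off-diagonal sum as a left-endpoint It\^o Riemann sum and the diagonal sum as the dyadic quadratic (cross-)variation, and conclude using the classical a.s.\ convergence of both along dyadics together with the It\^o--Stratonovich correction. Your write-up is in fact slightly more complete than the paper's: you give a self-contained contradiction argument for $\Var(x)=\infty$ (via $Q_n\le\mesh(x(\bt_n))\cdot\Var(x)$) rather than citing a textbook, and you explicitly address why the convergence is $\mu$-a.s.\ along dyadics (summable $L^2$-rate plus Borel--Cantelli) whereas the paper only invokes ``classic convergence results in stochastic calculus''.
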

\begin{proof}
Point (\ref{item:unboundedVAR}) can be found in any textbook on
stochastic analysis, e.g.~\cite{revuz1999continuous}; similarly, point
(\ref{item:strat_convergence}) is classic, e.g.~\cite{friz2010multidimensional}[Chapter 13].
Also point (\ref{it:order1_doesnt_conv}) and the convergence of $\SSig_x(x_n)$ follows by a simple calculation:
\begin{align*}
\SSig_2(x_n)&=\prod_{i=1}^{\ell (\bt_n)-1} \left(1+\seqdf
              x(\bt_n)[i]+\frac{(\seqdf x(\bt_n)[i])^{\otimes
              2}}{2}\right)\\
&=1+\sum_{i\in\left[ \ell(\bt_n) -1\right]}\seqdf x(\bt_n)[i]+\sum_{i,j\in\left[
    \ell(\bt_n)-1\right],i<
    j}\seqdf x(\bt_n)[i]\otimes\seqdf x(\bt_n)[j]+\frac{1}{2}\sum_{i\in\left[\ell(\bt_n)-1\right]}\seqdf x(\bt_n)[i]^{\otimes 2}.
\end{align*}
Now the first sum equals, $x(\bt_n[\ell(\bt_n)])=\int_{\simplex_1(0,1)} \diff x$, the second
equals
\begin{align}
  \sum_{j\in\left[ \ell(\bt_n)-1 \right]}\sum_{i\in\left[ j-1
  \right]}\seqdf \bx[i]\otimes\seqdf \bx[j]=\sum_{j\in\left[
  \ell(\bt_n)-1 \right]}x(\bt_n[i])\otimes  \seqdf x(\bt_n[i])
\end{align}
and a classic convergence results in stochastic calculus,\cite{revuz1999continuous}, shows that
\begin{align}
    \sum_{j\in\left[ \ell(\bt_n)-1 \right]}x(\bt_n[i])\otimes \seqdf x(\bt_n[i])+\frac{1}{2}\sum_{i\in\left[ \ell(\bt_n) -1\right]}
  \seqdf x(\bt_n[i])^{\otimes 2}
\end{align}
converges to the Stratonovich integral $\int \circ \diff x \circ
\diff x$.
Now point (\ref{it:order1_doesnt_conv}) follows by recalling that Brownian motion has non-vanishing quadratic variation,
that is $\sum_{i\in\left[ \ell(\bt_n)-1\right]}\seqdf
x(\bt_n[i]))^{\otimes 2}$ does not converge to $0$ as $n\rightarrow\infty$.
\end{proof}
\begin{Cor}
  Let $\mu$ be the Wiener measure on
  $C\left(\left[0,1\right]\right),\RR^d)$.
  Denote the $\SSig_{(2),2}$ the projection of $\SSig_{(2)}$ to
  $\bigoplus_{m=0}^2(\RR^d)^{\otimes m}$.
  If $x,y\sim \mu$ independently, then with probability one
  \begin{align}
    \langle \SSig_{(2),2}(x(\bs_m)),\SSig_{(2),2}(y(\bt_n))
    \rangle_{\tensalg}\rightarrow \langle
    X,Y\rangle_{\tensalg}\text{ as }m,n\rightarrow\infty
  \end{align}
where $X:=1+ \int_{\simplex_1}\circ \diff x+\int_{\simplex_2}
(\circ\diff x)^{\otimes 2}$ and $Y:=1+\int_{\simplex_1}\circ \diff
y+\int_{\simplex_2}(\circ \diff y)^{\otimes 2}$ are given by Ito--Stratonovich integrals
and $\bs_m$, $\bt_n$ are dyadic partitions, $\bs[i]=i2^{-m}$, $\bt[i]=i2^{-n}$.
\end{Cor}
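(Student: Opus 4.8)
The plan is to reduce the claim to the coordinate-wise almost-sure convergence already established in Proposition~\ref{prop:no_first_order}, and then to pass to the inner product using that everything takes place in a \emph{finite-dimensional} space.

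First I would note that $\SSig_{(2),2}$ takes values in $V:=\bigoplus_{k=0}^{2}(\RR^d)^{\otimes k}$, which is finite-dimensional, and that $\langle\cdot,\cdot\rangle_{\tensalg}$ restricts on $V$ to an ordinary Euclidean inner product; in particular, convergence in $V$ is equivalent to convergence of each of the finitely many coordinates. Writing out $\SSig_{(2),2}(x(\bs_m))$ exactly as in the proof of Proposition~\ref{prop:no_first_order}, its degree-$0$ part is the constant $1$; its degree-$1$ part is $\sum_i \seqdf x(\bs_m)[i] = x(1)-x(0)$, which is exact (independent of $m$) and coincides with the degree-$1$ part $\int_{\simplex_1}\circ\diff x$ of $X$; and its degree-$2$ part has $(i_1,i_2)$-coordinate precisely the expression shown in that proof to converge $\mu$-a.s.\ to the Ito--Stratonovich integral~(\ref{eq:limit_exists}), i.e.\ to the $(i_1,i_2)$-coordinate of $\int_{\simplex_2}(\circ\diff x)^{\otimes 2}$. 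Intersecting the finitely many ($d^2$) full-measure events over all $(i_1,i_2)\in\{1,\dots,d\}^2$ gives a single $\mu$-full-measure set on which $\SSig_{(2),2}(x(\bs_m))\to X$ in $V$; the same argument applied to $y$ gives $\SSig_{(2),2}(y(\bt_n))\to Y$ in $V$ for $\mu$-a.e.\ $y$.

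Next, since $x$ and $y$ are independent, the pair $(x,y)$ has law $\mu\otimes\mu$, so by Fubini the product event $\{\SSig_{(2),2}(x(\bs_m))\to X\}\cap\{\SSig_{(2),2}(y(\bt_n))\to Y\}$ has probability one. On this event, set $a_m:=\SSig_{(2),2}(x(\bs_m))$ and $b_n:=\SSig_{(2),2}(y(\bt_n))$; then $(\|b_n\|)_n$ is bounded (it converges), and by Cauchy--Schwarz
\[
\bigl|\langle a_m,b_n\rangle_{\tensalg}-\langle X,Y\rangle_{\tensalg}\bigr|
\;\le\; \|a_m-X\|\,\|b_n\| + \|X\|\,\|b_n-Y\|,
\]
whose right-hand side tends to $0$ as $m,n\to\infty$ (using $\sup_n\|b_n\|<\infty$). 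Since the estimate is uniform in the ``other'' index, this genuinely handles the double limit, which is the assertion.

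I do not anticipate a real obstacle; the only points requiring care are bookkeeping ones: Proposition~\ref{prop:no_first_order} is phrased for a single multi-index $(i_1,i_2)$ and for the full $\SSig_{(2)}$, so one must (i) observe that the degree-$\le 1$ parts are exact for piecewise-linear interpolants and already agree with the corresponding parts of $X$, (ii) take the finite intersection over multi-indices to obtain one almost-sure event, and (iii) use finite-dimensionality of $V$ so that both the projection $\SSig_{(2)}\mapsto\SSig_{(2),2}$ and the restricted bilinear form $\langle\cdot,\cdot\rangle_{\tensalg}$ are continuous. One may additionally remark that $X$ and $Y$ are independent, so $\langle X,Y\rangle_{\tensalg}$ is an almost-surely finite real random variable, though this is not needed for the argument.
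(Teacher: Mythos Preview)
Your proposal is correct and supplies exactly the details the paper leaves implicit: the Corollary is stated immediately after Proposition~\ref{prop:no_first_order} without its own proof, so the intended argument is precisely the one you give --- coordinate-wise almost-sure convergence of the degree-$2$ part from the Proposition, exactness of the degree-$0$ and degree-$1$ parts, a finite intersection over the $d^2$ multi-indices, independence to pass to the product event, and continuity of the (finite-dimensional) inner product via Cauchy--Schwarz. There is nothing to add.
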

\begin{Rem}
Essentially the same result holds for other partitions than dyadics
that vanish quickly enough and for continuous semimartingales, that is
bounded variation paths with additive noise of the same path
regularity as Brownian motion (see \cite{friz2010multidimensional}).
Similarly, one can show that $\SSig(x(\bt_n))$ converges for higher
iterated integrals though the calculation gets a bit cumbersome so we
do not address this.
Useful tools to study such convergence questions are the notions
of \emph{multiplicative functionals} and \emph{extension theorem}; we refer to
\cite{lyons1998differential}.
\end{Rem}
\subsection{Higher order approximations}
In the Brownian (and semimartingale) discussed above, the
Riemann--Stieljtes integral was replaced by stochastic
Ito--Stratonovich integral, thus providing a map from (semimartingale)
paths to $\tensalg$.
A general strategy to construct a function that maps a path $x$ to
$\tensalg$ and behaves like iterated integrals is to find a sequence of bounded variation paths
$(x_n)\subset \BV([0,1],\calH)$ such that the $\tensalg$-valued paths
$t\mapsto \sum_{m} \int_{\simplex(\left[ 0,t \right])}(\diff x_n)$
converges in an apropiate sense to a $\tensalg$-valued path denoted
$t\mapsto X(t)$.
This is the notion of a \emph{geometric rough path} and allows to study classes of much ``rougher'' paths;
loosely speaking: \emph{if we deal with a geometric $p$-rough path, then
  we have to consider an approximation $\SSig_{(\IndDegr)}$ of order at least
$\IndDegr=\lfloor p \rfloor$}; thus the rougher the path, the higher
the order of signature approximation.
\begin{Ex}\label{Ex:rough_path_lifts}
As pointed out above, the details of how to map a trajectory to an
element in $\tensalg$ vary, exploit probabilistic structure and we refer \cite{friz2010multidimensional,lyons2004stflour} for details.
Here we just mention that such constructions are well-known for
\begin{itemize}
\item \emph{Brownian motion} (leading to $p$-rough paths for any $p>2$ )
\item more generally, continuous \emph{Semimartingale} (leading to
  $p$-rough paths for any $p>2$),
\item \emph{fractional Brownian motion} of Hurst parameter $H>\frac{1}{4}$,
\item more generally, \emph{Gaussian processes} (leading to $p$-rough paths where $p$
  depends on the regulary of the covariation process),
\item \emph{Markov processes} in continuous time (leading to $p$-rough
  paths with $p$ depending on the generator of
  the Markov process).
\end{itemize}
\end{Ex}
Again, a rigorous treatment requires knowledge of geometric $p$-rough paths
and is beyond the aim of this paper.
However, we still give the general definition of a an order $\IndDegr$
approximation and the associated signature and sequentialized kernel that are needed to treat the paths in Example \ref{Ex:rough_path_lifts}.
\begin{Def}
Let $\IndDegr\in\NN$. Define
\begin{align*}
  \SSigD: &\calH^+ \rightarrow \tensalg,\quad\SSigD(\bx)=
  \prod_{i=1}^{\ell(\seqdf\bx)} \sum_{m=0}^\IndDegr
  \frac{(\seqdf \bx[i])^{\otimes m}}{m!}
  \end{align*}
and denote with $\SSigDk$ the projection of $\SSigD$ to
$\bigoplus_{m=0}^{\IndLvl}\calH^{\otimes m}$.
Let $\kernel:\calX\times\calX\rightarrow\RR$. Define the sequentialization of $\kernel$ of order $\IndDegr$ and
truncated at $\IndLvl$ as
\begin{align*}
  \KSeqDk(\bsigma,\btau):\calX^+ \times \calX^+ \rightarrow\RR,\quad
  \KSeqDk\left( \bsigma,\btau \right)= \langle \SSigDk\left( \phi\left( \bsigma \right) \right) ,\SSigDk\left(  \phi\left( \btau \right)\right)\rangle_{\tensalg}.
\end{align*}
\end{Def}
\begin{Rem}
The sequnentialization $\KSeqBk$ of a kernel $\kernel$ from Section
\ref{sec:approx.disc} arises as special case of above, general
definition: $\KSeqBk=\KSeqBonek$ for $\IndDegr=1$, $\IndLvl\in\NN$.
\end{Rem}
In analogy to the order $\IndDegr=1$ approximations discussed in Section
\ref{sec:approx}, the central mathematical identity is now
\begin{align*}
\prod_{i=1}^{\ell(\seqdf\bx)} \sum_{m=0}^\IndDegr \frac{(\seqdf \bx[i])^{\otimes m}}{m!}=\sum_{\bi \sqsubseteq_D [ \ell(\seqdf\bx)]}
\frac{1}{\bi!} \prod_{r=1}^{\ell(\bi)} \seqdf \bx[i_r] \approx
 \sum_{m=0}^\infty \int_{\simplex^m ([0,1])}\diff x^{\otimes m}
\end{align*}
where $\bx= x(\bt)$ is a suitable discretization of the (unbounded
variation!) path $x\in C([0,1],\calH)$ with $\bt\in\simplex([0,1])$.
Note the appearance of the $\bi!$ term in the first identity together
with the restriction $\bi \sqsubseteq_D [ \ell(\seqdf\bx)]$ which
makes a recursion formula for the inner product more complex.
However, a variation of the recursive Horner type formula still holds for
$\KSeqBDk$ and we give an efficient algorithm for $\KSeqBDk$ for general\footnote{Without
  loss of generality, it is sufficient to
  consider $\IndDegr\le \IndLvl$ since for $\IndDegr>\IndLvl$ they
  only difference occurs at the level of more than
$\IndLvl$ times iterated integrals and we already cut off at degree
$\IndLvl$).} $\IndDegr,\IndLvl\in\NN$
in Section \ref{sec:comp} below.
Since it is a multi-way recursion which is better formulated in terms of a dynamic programming algorithm, we defer its formulation to the following Section~\ref{sec:comp}.
It relies on the formula below for the discretized higher
order signature kernel:
\begin{Prop}
\label{Prop:sighigh}
For $\bx,\by\in \calH^+$, $\IndDegr,\IndLvl\in \NN$,
\begin{enumerate}[label=(\alph*)]
\item $\SSigD(\bx)=\sum_{\bi \sqsubseteq_D [ \ell(\seqdf\bx)]}
  \frac{1}{\bi!} \prod_{r=1}^{\ell(\bi)} \seqdf \bx[i_r]$,
\item $\langle \SSigDk(\bx),\SSigDk(\by) \rangle_{\tensalg}
= \sum_{\substack{\bi,\bj \in \simplex (\NN)\\ \ell(\bi)=\ell(\bj)\le
  \IndLvl\\ \# \bi,\# \bj \le D}} \frac{1}{\bi!\bj!}
  \prod_{r=1}^{\ell(\bi)}\langle \seqdf \bx[i_r], \seqdf \by[j_r]
  \rangle_{\calH}.$
\end{enumerate}
\end{Prop}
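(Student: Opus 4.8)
The plan is to establish the combinatorial expansion (a) by distributing the tensor-algebra product, and then to read off (b) from the definition of the inner product on $\tensalg$; both parts are essentially bookkeeping, so the write-up should be short.

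For (a), I would write $L := \ell(\seqdf\bx)$ and expand $\SSigD(\bx) = \prod_{i=1}^{L}\bigl(\sum_{m=0}^{\IndDegr}\frac{1}{m!}(\seqdf\bx[i])^{\otimes m}\bigr)$ --- the outer product being the tensor-algebra product $\ast$ --- by distributivity of $\ast$ over $+$. A generic term of the expansion arises by choosing an exponent $m_i \in \{0,1,\dots,\IndDegr\}$ from the $i$-th factor for each $i\in[L]$; since each chosen piece $(\seqdf\bx[i])^{\otimes m_i}$ is homogeneous, the remark following the definition of $\tensalg$ shows that their $\ast$-product equals the ordered tensor product, and the factors with $m_i = 0$ are the neutral element and drop out. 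The bookkeeping step is then the bijection $(m_1,\dots,m_L)\leftrightarrow\bi$, where $\bi\in\simplex([L])$ is the nondecreasing tuple in which the value $i$ occurs exactly $m_i$ times: this maps onto $\{\bi:\bi\sqsubseteq_{\IndDegr}[L]\}$ (the bound $m_i\le\IndDegr$ being exactly $\#\bi\le\IndDegr$), it turns the ordered tensor product into $\prod_{r=1}^{\ell(\bi)}\seqdf\bx[i_r]$, and by the definition of $\bi!$ it gives $\prod_{i=1}^{L}m_i! = \bi!$. Substituting yields (a). For the truncated version I would note that $\prod_{r=1}^{\ell(\bi)}\seqdf\bx[i_r]$ is homogeneous of degree $\ell(\bi)$, so projecting onto $\bigoplus_{m=0}^{\IndLvl}\calH^{\otimes m}$ simply adds the constraint $\ell(\bi)\le\IndLvl$, giving the analogous formula for $\SSigDk$.

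For (b), I would substitute the formula for $\SSigDk$ into $\langle\cdot,\cdot\rangle_{\tensalg}$ and expand bilinearly, obtaining a double sum over $\bi\sqsubseteq_{\IndDegr}[\ell(\seqdf\bx)]$, $\bj\sqsubseteq_{\IndDegr}[\ell(\seqdf\by)]$ with $\ell(\bi),\ell(\bj)\le\IndLvl$ of $\frac{1}{\bi!\bj!}\langle\prod_r\seqdf\bx[i_r],\prod_s\seqdf\by[j_s]\rangle_{\tensalg}$. By the definition of the inner product on $\tensalg$, homogeneous components of different degree are orthogonal, so only terms with $\ell(\bi)=\ell(\bj)$ survive; when the degrees agree, the inner product is that of $\calH^{\otimes\ell(\bi)}$, which on elementary tensors factorizes as $\prod_{r=1}^{\ell(\bi)}\langle\seqdf\bx[i_r],\seqdf\by[j_r]\rangle_{\calH}$ (just as in the Example following the definition of $\tensalg$). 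Collecting terms gives the stated identity, with the usual convention (as in Proposition~\ref{Prop:sigrecursion}) that index tuples range only over valid positions, i.e.\ $i_r\le\ell(\seqdf\bx)$ and $j_s\le\ell(\seqdf\by)$.

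The only point requiring care --- and the one I would flag as the main obstacle, though it is not a serious one --- is the non-commutativity of $\ast$: this is what forces the expansion in (a) to be indexed by monotone tuples in $\simplex(\NN)$ rather than by unordered multisets, and is the reason the relation $\sqsubseteq_{\IndDegr}$ appears in the statement. Beyond keeping the tensor factors in their correct left-to-right order, everything is the distributive, graded structure of $\tensalg$ together with the multiplicativity of $\langle\cdot,\cdot\rangle_{\calH^{\otimes m}}$ on elementary tensors.
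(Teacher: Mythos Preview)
Your proof is correct and follows essentially the same approach as the paper's: expand the product by distributivity (the paper calls this the ``non-commutative associative law''), re-index via the bijection between exponent tuples and monotone index tuples $\bi\sqsubseteq_{\IndDegr}[L]$, and for (b) use orthogonality of distinct tensor degrees together with the factorization of $\langle\cdot,\cdot\rangle_{\calH^{\otimes m}}$ on elementary tensors. The only cosmetic difference is that the paper truncates at degree $\IndLvl$ at the very end, after taking the inner product, whereas you truncate first; this is immaterial.
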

\begin{proof}
Writing out the definition of $\SSigD(\bx)$ yields
$$\SSigD(\bx) = \prod_{r=1}^{\ell (\seqdf \bx)} \sum_{d=0}^{\IndDegr}\frac{1}{d!} (\seqdf \bx[r])^{\otimes d}.$$
Application of the (non-commutative) associative law yields
$$\prod_{r=1}^{\ell (\seqdf\bx)} \sum_{d=0}^D \frac{1}{d!} (\seqdf \bx[r])^{\otimes d} = \sum_{\bi \sqsubseteq_D [ \ell(\seqdf\bx)]} \frac{1}{\bi!} \prod_{r=1}^{\ell(\bi)} \seqdf \bx[i_r].$$
The analogous expression for $\SSigD(\by)$ and taking the scalar product while noting
$$\langle \prod_{r=1}^{\ell(\bi)} \seqdf \bx[i_r], \prod_{r^\prime=1}^{\ell(\bj)} \seqdf \by[j_{r'}]\rangle_{\tensalg} = \delta_{\ell(\bi),\ell(\bj)}\cdot \prod_{r=1}^{\ell(\bi)}\langle \seqdf \bx[i_r], \seqdf \by[j_r] \rangle_{\calH}.$$
Truncating at tensor degree $\IndLvl$ yields the claim.
\end{proof}
\section{Efficient computation of sequentialized kernels}
\label{sec:comp}
Naive evaluation of the signature kernels $\KSeqAk$ and sequentialized
kernels $\KSeqBk$ (or more generally $\KSeqBDk$) incurs a cost exponential in the length of the input or the degree. Inspired by dynamic programming, we present in this section a number of algorithms for signature and sequential kernels whose time and memory requirements are polynomial in the length of the sequence.

Table~\ref{complexity_overview} presents an overview on the different
algorithms presented: given $N$ sequences,
$\bsigma_1,\ldots,\bsigma_N\in \calX^+$, each of length less than $L$,
that is $\ell(\bsigma_i)\leq L$ for $i=1,\ldots,N$,
Table~\ref{complexity_overview} shows the computational complexity and
storage requirements for calculating the Gram-matrix
$\KSeqBk\left(\bsigma_i,\bsigma_j\right)_{i,j=1}^N$ of the
sequentialization $\KSeqBk$ of a kernel $\kernel$ on $\calX$.
The simplest variant involving dynamic programming, Algorithm~\ref{alg:sigpw}, allows to evaluate the sequential kernel in a number of elementary computations that is linear in the length of either sequence (thus quadratic for two sequences of equal length). Further combining the strategy with low-rank ideas allows to compute a full sequential kernel matrix that is both linear in the maximum length of sequence and the number of sequences, culminating in Algorithm~\ref{alg:sigpwLRdbl}.

\begin{table}[h]\label{complexity.overview}
\centering
\small
\begin{tabular}{|c|c|c|c|}
\hline
method & algorithm & complexity & storage\\
\hline
\hline
naive evaluation & Proposition~\ref{Prop:sigrecursion} & $O(N^2\cdot L^{2\IndLvl})$ & $O(1)$\\
dynamic programming & Algorithm~\ref{alg:sigpw} & $O(N^2 \cdot L^2\cdot \IndLvl)$ & $O(L^2)$\\
DP \& LR, per-element & Algorithm~\ref{alg:sigpwLR} & $O(N^2 \cdot L\cdot \rho\cdot \IndLvl)$ & $O(L\cdot \rho)$\\
DP \& LR, per-sequence & Algorithm~\ref{alg:sigpw} & $O((N+ \rho) \cdot L^2\cdot \rho^2\cdot \IndLvl)$ & $O(N\cdot L^2)$\\
DP \& LR, simultaneous & Algorithm~\ref{alg:sigpwLRdbl} & $O(N\cdot L\cdot \rho\cdot \IndLvl)$ & $O(N\cdot L\cdot \rho)$\\
\hline
\end{tabular}
\caption{Overview over presented algorithms to compute the sequential kernel $\KSeqBk$, their computational cost and storage cost when evaluating an $(N\times N)$ kernel matrix between sequences of length at most $L$. Methods: DP = dynamic programming, LR = low-rank. In the low-rank methods, $\rho$ is a meta-parameter which also controls accuracy of approximation and prediction, not necessarily in the same way for the different algorithms.\label{complexity_overview}}
\end{table}

The higher order sequential kernel $\KSeqBDk$ will not be discussed to the same degree of detail. It is demonstrated in Algorithm~\ref{alg:sigpwhigh} how the simple dynamic programming Algorithm~\ref{alg:sigpw} changes when the approximation is carried out to a higher order $D$. All dynamic programming algorithms listed in Table~\ref{complexity_overview} may be modified in the same way while incurring an additional factor $D$ in computation and storage requirements.

\begin{Rem}\label{rem:inner_product_pcw_linear}
  Let $\sigma,\tau\in\BV(\RR^d)$, $\bs,\bt\in\simplex$ with
  $\ell(\bs),\ell(\bt)\le L$.
  Denote with $\sigma^{\bs},\tau^{\bt} \in \BV(\RR^d)$ the bounded variation paths that
  are the piecewise linear interpolation of points
  $\sigma(\bs)$,$\tau(\bt)$.
  Note that $\Sigk(\sigma^{\bs})$,$\Sigk(\tau^{\bt})$ are composed of
  $O(d^\IndLvl)$ real numbers which makes a naive evaluation of
  $\langle \Sigk(\sigma^{\bs}),\Sigk(\tau^{\bt}) \rangle_{\tensalg}$
   infeasible for moderately high $d$ or $\IndLvl$.
  On the other hand, Table ~\ref{complexity_overview} applied with $\calX=\RR^d$, $\kernel=\langle \cdot,\cdot
  \rangle_{\RR^d}$ and $\IndDegr=\IndLvl$, provides efficient methods
  for calculating this inner product since $\KSeqBDk(\sigma(\bs),\tau(\bt))=\langle \Sigk(\sigma^{\bs}),\Sigk(\tau^{\bt}) \rangle_{\tensalg}$.
\end{Rem}
\subsection{Dynamic programming for tensors}

Before giving algorithms for computing the sequential kernels, we introduce a number of notations and fast algorithms for dynamic programming subroutines which will allow to state the latter algorithms concisely and which are at the basis of fast computations.

\begin{Not}
We will denote the $(i_1,\dots, i_k)$-th element of a $k$-fold array (= degree $k$ tensor) $A$ by $A[i_1,\dots, i_k]$. Occasionally, for ease of reading, we will use ``$|$'' instead of ``$,$'' as a separator, for example $A[i_1,i_2|i_3,\dots, i_k]$ of the indices on the left side of ``$|$'' are semantically distinct from those on the right side, in the example to separate the group of indices $i_1,i_2$ from the group $i_3,\dots, i_k$.
The arrays in the remainder will all contain elements in $\RR$, and the indices will always be positive integers, excluding zero.
\end{Not}

\begin{Not}
For a function $f: \RR\rightarrow \RR$, and an array $A$, we will denote by $f(A)$ the array where $f$ is applied element-wise. I.e., $f(A)[i_1,\dots, i_k] = f(A[i_1,\dots, i_k])$.
Similarly, for $f: \RR^m\rightarrow \RR$ and arrays $A_1,\dots, A_m$, we denote
$$f(A_1,\dots, A_m)[i_1,\dots, i_k] = f(A_1[i_1,\dots, i_k],\dots, A_m[i_1,\dots, i_k]).$$
For example, $\frac{1}{2}\cdot A^2$ is the array $A$ having all elements squared, then divided by two. The array $A+B$ contains, element-wise, sums of elements of $A$ and $B$.
\end{Not}

\begin{Not}
We introduce notation for sub-setting, shifting, and cumulative sum. Let $A$ be a $k$-fold array of size $(n_1\times\dots\times n_k)$.
\begin{description}
\item[(i)] For an index $i_j$ (at $j$-th position), we will write $A[:,\dots,:,i_j,:,\dots, :]$ for $(k-1)$-fold array of size $(n_1\times\dots\times n_{j-1}\times n_{j+1}\times\dots n_k)$ such that $A[:,\dots,:,i_j,:,\dots, :][i_1,\dots,i_{j-1},i_{j+1},\dots, i_k]=A[i_1,\dots, i_k].$ We define in analogy, iteratively, $A[:,\dots,:,i_j,:,\dots,:,i_{j'},:,\dots,:]$, and so on. Arrays of this type are called slices (of $A$).
\item[(ii)] For an integer $m$, we will write $A[:,\dots,:,+m,:,\dots, :]$ for the $k$-fold array of size $(n_1\times\dots\times n_{j-1}\times (n_j + k) \times n_{j+1}\times\dots n_k)$ such that $A[:,\dots,:,i_j,:,\dots, :][i_1,\dots,i_{j-1},i_j+k,i_{j+1},\dots, i_k]=A[i_1,\dots, i_k],$ and where non-existing indices of $A$ are treated as zero. Arrays of this type are called shifted (versions of $A$). For negative $m$, the shifts will be denoted with a ``minus''-sign instead of a ``plus''-sign.
\item[(iii)] We will write $A[:,\dots,:,\boxplus,:,\dots, :]$, where $\boxplus$ is at the $j$-th position, for the $k$-fold array of size $(n_1\times\dots\times\dots n_k)$ such that $A[:,\dots,:,\boxplus,:,\dots, :][i_1,\dots, i_k]=\sum_{\kappa =1}^{i_j} A[i_1,\dots, i_{j-1},\kappa, i_{j+1},\dots, i_k].$ Arrays of this type are called slice-wise cumulative sums (of $A$).
\item[(iv)] We will write $A[:,\dots,:,\Sigma,:,\dots, :]$, where $\Sigma$ is at the $j$-th position, for the $(k-1)$-fold array of size $(n_1\times\dots\times n_{j-1}\times n_{j+1}\times\dots n_k)$ such that $A[:,\dots,:,\Sigma,:,\dots, :][i_1,\dots, i_{k-1}]=\sum_{\kappa =1}^{n_j} A[i_1,\dots, i_{j-1},\kappa, i_{j},\dots, i_{k-1}].$ Arrays of this type are called slice-wise sums (of $A$).
\end{description}
We will further use iterations and  mixtures of the above notation, noting that the index-wise sub-setting, shifting, and cumulation commute with each other. Therefore expressions such as $A[+1,:|\Sigma,-3]$ or $A[j|:,+3,\boxplus]$ are well-defined, for example. We will also use the notation $A[\boxplus - m,\dots]$ and $A[\boxplus + m,\dots]$ to indicate the shifted variant of the cumulative sum array.
\end{Not}

Before continuing, we would like to note that cumulative sums can be computed efficiently, in the order of the size of an array, as opposed to squared complexity of more naive approaches. The algorithm is classical, we present it for the convenience of the reader in Algorithms~\ref{alg:cumsumvec} and ~\ref{alg:cumsum}.

\begin{algorithm}[ht]
\caption{Computing the cumulative sum of a vector.\newline
\textit{Input:} A $1$-fold array $A$ of size $(n)$ \newline
\textit{Output:} The cumulative sum array $A[\boxplus]$ \label{alg:cumsumvec}}
\begin{algorithmic}[1]
    \State Let $Q\leftarrow A$.
    \For{ $\kappa = 2$ to $n$}
    \State $Q[\kappa] \leftarrow Q[\kappa-1] + A[\kappa]$
    \EndFor
    \State Return $Q$
\end{algorithmic}
\end{algorithm}

\begin{algorithm}[ht]
\caption{Computing the cumulative sum of an array.\newline
\textit{Input:} A $k$-fold array $A$ of size $(n_1,\times,\dots,\times n_k)$ \newline
\textit{Output:} The cumulative sum array $A[\boxplus,\dots, \boxplus,:,\dots, :]$ (up to the $m$-th index) \label{alg:cumsum}}
\begin{algorithmic}[1]
    \State Let $Q\leftarrow A$
    \For{ $\kappa = 2$ to $m$}
    \State Let $Q\leftarrow Q[:,\dots,:,\boxplus,:,\dots, :]$ (at the $\kappa$-th index), where the right side is computed via applying algorithm~\ref{alg:cumsumvec} slice-wise.
    \EndFor
    \State Return $Q$
\end{algorithmic}
\end{algorithm}

\subsection{Computing the sequential kernel}

We give a fast algorithm to compute the sequential kernel $\KSeqBk$, by using the recursive presentation from Proposition~\ref{Prop:sigrecursion}.

\begin{algorithm}[ht]
\caption{Evaluation of the sequential kernel $\KSeqBk$\newline
\textit{Input:} Ordered sequences $\bsigma,\btau \in \calX^+$. A kernel $\kernel:\calX^+\times \calX^+\rightarrow \RR$ to sequentialize. A cut-off degree $\IndLvl$. \newline
\textit{Output:} $\KSeqBk(\bsigma,\btau)$, as the sequentialization of $\kernel$ \label{alg:sigpw}}
\begin{algorithmic}[1]
    \State Compute an $(L\times L')$ array $K$ such that
    $K[i,j] = \seqdf\kernel(\bsigma,\btau)[i,j]$.\\
    (or, alternatively, obtain it as additional input to the algorithm) \label{alg:sigpw.kernline}
    \State Initialize an $(\IndLvl\times L\times L')$-array $A$.
    \State Set $A[1|:,:] \leftarrow K$.
    \For{ $m = 2$ to $\IndLvl$}
    \State Compute $Q\leftarrow A[m-1|\boxplus,\boxplus]$.\label{alg:sigpw.sumline}
    \State Set $A[m|:,:]\leftarrow K\cdot (1+ Q[+1,+1])$\label{alg:sigpw.loopline}
    \EndFor
    \State Compute $R\leftarrow 1+ A[\IndLvl|\Sigma,\Sigma]$
    \State Return $R$
\end{algorithmic}
\end{algorithm}

Algorithm~\ref{alg:sigpw} includes a cut-off degree $\IndLvl$ which for exact computation can be set to $\IndLvl=\min (L,L')$ but can be set lower to reduce computational cost when an approximation is good enough.

Note that following our convention on arrays, all multiplications in Algorithm~\ref{alg:sigpw} are entry-wise, not matrix multiplications, even though some arrays have the format of compatible matrices.

Correctness of Algorithm~\ref{alg:sigpw} is proved in Proposition~\ref{Prop:sigrecursion}. At the end of the algorithm, the array $A$ contains as elements $A[m|i,j]$ the contributions from sub-sequences $\bi\sqsubseteq [i], \bj\sqsubseteq [j]$, beginning at $i$ and $j$, and of total length at most $m$.

Disregarding the cost of computing $K$ which can vary depending on the exact form of $\kernel$ (but which is, usually, $O(n\ell(\bsigma)\ell(\btau)))$ time and $O(\ell(\bsigma)\ell(\btau))$ time, with constants that may depend on $\kernel$), the computational cost of Algorithm~\ref{alg:sigpw} is $O(\IndLvl\ell(\bsigma)\ell(\btau))$ elementary arithmetic operations (= the number of loop elements) and $O(\IndLvl\ell(\bsigma)\ell(\btau)$ units of elementary storage. The storage requirement can be reduced to $O(\ell(\bsigma)\ell(\btau))$ by discarding $A[m-1|:,:]$ from memory after step~\ref{alg:sigpw.loopline} each time.

Note that in each loop over the index $L$, a matrix $Q$ is pre-computed, to avoid a five-fold loop that would be necessary with the more naive version of line~\ref{alg:sigpw.loopline},
$$A[m|i,j] \leftarrow A[m|i,j]\cdot\left( 1+ \sum_{i'\gneq i}\sum_{j'\gneq j} A[m-1|i',j']\right),$$
that leads to a blown up computational cost of $O(\IndLvl\ell(\bsigma)^2\ell(\btau)^2)$ at the asymptotically insignificant gain of storing one $(\ell(\bsigma)\times \ell(\btau))$ matrix less (the matrix $Q$).

Note that the whole code can be directly translated to the vector of matrix operations commonly available in programming languages such as R, MATLAB, or Python.

\subsection{Computing the higher order sequential kernel}
\begin{algorithm}[ht]
\caption{Evaluation of the higher order sequential kernel $\KSeqBDk$\newline
\textit{Input:} Ordered sequences $\bsigma,\btau \in \calX^+$. A kernel $\kernel:\calX^+\times\calX^+\rightarrow \RR$ to sequentialize. A cut-off degree $\IndLvl$, an approximation order $\IndDegr,\IndDegr\le \IndLvl$. \newline
\textit{Output:} $\KSeqBDk(\bsigma,\btau)$, as the sequentialization of $\kernel$ \label{alg:sigpwhigh}}
\begin{algorithmic}[1]
    \State Compute an $(L\times L')$ array $K$ such that $K[i,j] = \seqdf\kernel(\bsigma, \btau)[i,j]$.\\
    (or, alternatively, obtain it as additional input to the algorithm)\label{alg:sigpwhigh.kernline}
    \State Initialize an $(\IndLvl\times D\times D\times L\times L')$-array $A$, all entries zero.
    \For{ $m = 2$ to $\IndLvl$}
    \State $D'\leftarrow \min (D,m-1)$\label{alg:sigpwhigh.lineA}
    \State $A[m|1,1|:,:] \leftarrow K\cdot (1+ A[m-1|\Sigma,\Sigma|\boxplus + 1,\boxplus + 1])$
    \For{ $d = 2$ to $D'$}
    \State $A[m|d,1|:,:] \leftarrow A[m|d,1|:,:]+\frac{1}{d}\cdot K\cdot A[m-1|d-1,\Sigma|:,\boxplus + 1]$.
    \State $A[m|1,d|:,:] \leftarrow A[m|1,d|:,:]+\frac{1}{d}\cdot K\cdot A[m-1|\Sigma,d-1|\boxplus + 1,:]$.
    \For{ $d' = 2$ to $D'$}
    \State $A[m|d,d'|:,:] \leftarrow A[m|d,d'|:,:]+\frac{1}{dd'}\cdot K\cdot A[m-1|d-1,d'-1|:,:]$.
    \EndFor
    \EndFor\label{alg:sigpwhigh.lineB}
    \EndFor
    \State Compute $R\leftarrow 1+ A[\IndLvl|\Sigma,\Sigma|\Sigma,\Sigma]$
    \State Return $R$
\end{algorithmic}
\end{algorithm}

All multiplications in Algorithm~\ref{alg:sigpwhigh} are entry-wise,
not matrix multiplications. At the end of Algorithm~\ref{alg:sigpwhigh}, the array $A$ contains as elements $A[m|d,d'|i,j]$ the contributions from sub-sequences $\bi\subseteq [i], \bj\subseteq [j]$, beginning at $i$ and $j$, with end-sequences $iii\dots$ of length $d$ and $jjj\dots$ of length $d'$, and of total length at most $\IndLvl$. We prove the recursion used in Algorithm~\ref{alg:sigpwhigh} and thus the correctness of this statement after introducing some necessary notation:

\begin{Not}
Let $\bt=(t_1,\ldots,t_M)\in \intvl^M$ be a sequence, for any set $\intvl$, and some $M\in \NN$. We will denote by
$$d(\bt):=\max\{m\;:\; t_1 = t_2 = \dots = t_m\}$$
the number of repetitions of the initial symbol.
\end{Not}

\begin{Prop}
Keep the notations of Algorithm~\ref{alg:sigpw}, let $\phi:\calX\rightarrow \calH$ the feature map associated with the kernel $\kernel$. Let $\bx,\by\in \calH^+$ such that $\bx = \phi(\bsigma),\by = \phi(\btau)$. Denote by
\begin{align*}
A[m|d,d'|i,j] &:= \sum_{\substack{\bx^\prime\sqsubseteq_D \seqdf \bx
  \\ \by^\prime \sqsubseteq_D \seqdf \by}}
  \frac{1}{\bx^\prime!\by^\prime!}\langle \bx^\prime,
  \by^\prime\rangle_{\calH^+} = \sum_{\substack{\bi\sqsubseteq_D
  [\ell(\bx)] \\ \bj\sqsubseteq_D [\ell(\by)]\\ \ell(\bi) =
  \ell(\bj)\le m}} \frac{1}{\bi!\bj!} \prod_{\kappa=1}^{\ell(\bi)}
  \langle (\seqdf \bx)[i_\kappa], (\seqdf \by)[j_\kappa] \rangle_\calH
\end{align*}
where the sums are additionally restricted in the following way:
\begin{align*}
i = i_{1} = i_{2} = \dots = i_{d} \neq i_{d+1}\quad \mbox{and}\quad j = j_{1} = j_{2} = \dots = j_{d'} \neq j_{d'+1}. \quad \mbox{Or, equivalently,}\\
\bx[i] = \bx^\prime[1] = \bx^\prime[2] = \dots = \bx^{\prime}[d] \neq \bx^\prime[d+1]\quad \mbox{and}\quad \by[j] = \by^\prime[1] = \by^\prime[2] = \dots = \by^\prime[d] \neq \by^\prime[d+1]
\end{align*}
Then, the following recursion equalities hold:
\begin{align*}
A[m|1,1|i,j] & = \langle \bx[i], \by[j]\rangle_\calH \cdot \left(1+\sum_{i'\gneq i}\sum_{j'\gneq j} A[m-1|d-1,d'-1|i',j']\right),\\
A[m|d,1|i,j] & = \frac{1}{d}\cdot\langle \bx[i], \by[j]\rangle_\calH \cdot \sum_{j'\gneq j}\sum_{\kappa=1}^D A[m-1|d-1,\kappa|i,j']\quad\mbox{for}\;d\ge 2,\\
A[m|1,d'|i,j] & = \frac{1}{d'}\cdot\langle \bx[i], \by[j]\rangle_\calH \cdot \sum_{i'\gneq i}\sum_{\kappa=1}^D A[-1|\kappa,d'-1|i',j]\quad\mbox{for}\;d'\ge 2,\\
A[m|d,d'|i,j] & = \frac{1}{dd'}\cdot\langle \bx[i],\by[j]\rangle_\calH \cdot A[m-1|d-1,d'-1|i,j]\quad\mbox{for}\;d,d'\ge 2.
\end{align*}
\end{Prop}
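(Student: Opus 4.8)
The plan is to establish all four recursions at once by unfolding the combinatorial definition of $A[m|d,d'|i,j]$ and splitting off the contribution of the first index pair. Fix $m,d,d',i,j$ and write $A[m|d,d'|i,j]$ as the sum over all pairs of weakly increasing tuples $(\bi,\bj)$ with $\ell(\bi)=\ell(\bj)\le m$, $\#\bi\le D$, $\#\bj\le D$, initial block $i=i_1=\dots=i_d$ (with $i_{d+1}\ne i$ if $\ell(\bi)>d$), and initial block $j=j_1=\dots=j_{d'}$ (with $j_{d'+1}\ne j$ if $\ell(\bj)>d'$). Since $i_1=i$ and $j_1=j$ are forced, the first factor of $\prod_{\kappa}\langle(\seqdf\bx)[i_\kappa],(\seqdf\by)[j_\kappa]\rangle_{\calH}$ is $\langle(\seqdf\bx)[i],(\seqdf\by)[j]\rangle_{\calH}$ and factors out. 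Writing $\bi''=\bi[2:\ell(\bi)]$ and $\bj''=\bj[2:\ell(\bj)]$, and using that monotonicity of $\bi$ forces all copies of its minimal entry $i$ to sit at the front, one has $\bi!=d\cdot\bi''!$ and $\bj!=d'\cdot\bj''!$, so every summand becomes $\tfrac{1}{dd'}\,\langle(\seqdf\bx)[i],(\seqdf\by)[j]\rangle_{\calH}\cdot\tfrac{1}{\bi''!\,\bj''!}\prod_{\kappa}\langle(\seqdf\bx)[i''_\kappa],(\seqdf\by)[j''_\kappa]\rangle_{\calH}$.

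The second step is to recognise the tail sum. The tails $(\bi'',\bj'')$ range over weakly increasing tuples of equal length $\le m-1$ with multiplicities $\le D$, and the only surviving constraint on their leading blocks is dictated by $d,d'$: if $d=1$ the block of $i$'s is exhausted, so $\bi''$ is either empty or has leading entry $i'>i$ of some multiplicity $\le D$; if $d\ge2$ then $\bi''$ is forced to begin with exactly $d-1$ copies of $i$, hence is non-empty, and its leading entry and leading multiplicity are both pinned down. The same dichotomy applies to $\bj''$ via $d'$. Grouping the tail sum by the leading entry and leading multiplicity of $\bi''$ and of $\bj''$, each group is by definition a term $A[m-1|\,\cdot,\cdot\,|\,\cdot,\cdot\,]$, while the empty tail contributes the constant $1$ and is possible exactly when $d=d'=1$ (equal lengths). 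Reading this off in the four cases $d=d'=1$, $d\ge2\wedge d'=1$, $d=1\wedge d'\ge2$, $d,d'\ge2$ yields precisely the four displayed identities; here the common prefactor $\tfrac{1}{dd'}$ specialises to $1$, $\tfrac1d$, $\tfrac1{d'}$, $\tfrac1{dd'}$ respectively, a ``leading-multiplicity index $0$'' is read as the sum of that index from $1$ to $D$ (the $\Sigma$-reduction used in Algorithm~\ref{alg:sigpwhigh}), and the additive $1$ appears only in the $d=d'=1$ recursion.

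The delicate point, and the one worth checking carefully, is the interplay of the factorial weights with the leading-multiplicity indices: one must verify that removing a single copy of $i$ from a leading block of length $d$ divides $\bi!$ by exactly $d$ and leaves a tail of leading multiplicity $d-1$ at the same entry, and that the degenerate transition $d=1$ (and likewise $d'=1$) is exactly what turns the leading entry of the tail into a free summation variable $>i$ (resp.\ $>j$) with unconstrained leading multiplicity, while simultaneously opening up the empty-tail possibility. The remaining verifications are routine: truncation preserves monotonicity and the multiplicity bound $D$, the length bound drops from $m$ to $m-1$, and the empty product and empty factorial both evaluate to $1$. An induction on $m$ with base case $m=1$ (where only $A[1|1,1|i,j]=\langle(\seqdf\bx)[i],(\seqdf\by)[j]\rangle_{\calH}$ is nonzero) then closes the statement, consistently with the identity $1+A[\IndLvl|\Sigma,\Sigma|\Sigma,\Sigma]=\langle\SSigDk(\bx),\SSigDk(\by)\rangle_{\tensalg}$ from Proposition~\ref{Prop:sighigh}.
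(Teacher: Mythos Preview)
Your proof is correct and follows essentially the same approach as the paper's: both arguments rest on recognising $A[m|d,d'|i,j]$ as a weighted sum over paired index sequences with prescribed initial-block structure, then peeling off the first index pair and identifying the remaining tail sum with the appropriate $A[m-1|\cdot,\cdot|\cdot,\cdot]$ terms (invoking Proposition~\ref{Prop:sighigh} for the overall identity). The paper's proof is extremely terse---it simply says ``compare the summation in the loop of Algorithm~\ref{alg:sigpwhigh} with the explicit formula in Proposition~\ref{Prop:sighigh}''---whereas you actually carry out that comparison: you make explicit the factorial identity $\bi!=d\cdot\bi''!$, the four-way case split on whether each initial block is exhausted, and the convention that a ``leading-multiplicity index $0$'' stands for the $\Sigma$-reduction over $1,\dots,D$. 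Your version is strictly more informative, and in particular clarifies several notational ambiguities in the statement (e.g.\ that the scalar products should be $\langle(\seqdf\bx)[i],(\seqdf\by)[j]\rangle_\calH$, and that the $d-1,d'-1$ indices in the first recursion are to be read as summations).
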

\begin{proof}
Note that the sums on the right hand side that define $A[m|d,d'|i,j]$ as a (weighted) sum over elements parameterised by paired index sequences $I,J$ of the same length. Note that by definition, the sum goes over all index sequences such that $\ell(\bi)=\ell(\bj)\le \IndLvl$, $d(\bi) = d$, and $d(\bj) = d'$.

With this, the statement follows from comparing the summation in the loop between Line~\ref{alg:sigpwhigh.lineA} and~\ref{alg:sigpwhigh.lineB} of Algorithm~\ref{alg:sigpwhigh} with the explicit formula in Proposition~\ref{Prop:sighigh}.
\end{proof}

The computational cost of Algorithm~\ref{alg:sigpwhigh} is $O(D^2\IndLvl\ell(\bsigma)\ell(\btau))$ elementary arithmetic operations (= the number of loop elements) and $O(D^2\ell(\bsigma)\ell(\btau))$ units of elementary storage (when freeing up space for array entries directly after the last time they are read out).

\subsection{Large scale strategies}

Even though a computational cost of $O(\IndLvl\ell(\btau)\ell(\bsigma))$ for the sequential kernel via Algorithm~\ref{alg:sigpw} (for ease of reading, we will not discuss the higher order kernel, most considerations hold in analogy) can be considered efficient in a polynomial time, one has to note that this is the cost of evaluating $\KSeqBk (\bsigma,\btau)$ for a single pair of sequences $\bsigma,\btau\in\calX^+$. Thus, computation of a symmetric kernel matrix of $N$ sequences, of length at most $\IndLvl$, would cost $O(N^2\cdot L^2\cdot \IndLvl)$ elementary arithmetic operations when done by iterating over entries. While this is for moderate sizes of $N$ and $\IndLvl$ still achievable on contemporary desktop computers, it may become quickly prohibitive when combined with parameter tuning or cross-validation schemes (as later in our experiments). Furthermore, there exist regimes (low signal dimension $n$, large length $L$) in which an explicit computation of features plus subsequent inner product, with a complexity of $O(N^2 \cdot L \cdot n^\IndLvl)$, may be faster due to the linear dependence on $L$ at the cost of an exponential dependence on $\IndLvl$.

We present below a number of approaches by which the above-mentioned issues may be addressed. These are somewhat independent and address different parts of the total complexity in different ways, but can be in-principle combined.

\subsubsection{Low-rank methods for the sequence-vs-sequence kernel matrix}

The sequential kernel is a kernel on ordered data, therefore learning algorithms which use the kernel matrix as an input, such as support vector machines, kernel ridge regression, or kernel principal components, are in-principle directly amenable to large-scale variants of low-rank type. Strategies of this kind include the incomplete Cholesky decomposition, Nystr\"om approximation, or the inducing point formalism in a Gaussian process framework.

For $N$ sequences, all strategies mentioned above require evaluation of at most an $(N\times r)$ and an $(r\times r)$ matrix (where $r$ is a meta-parameter), which costs $O((r+N)\cdot r\cdot L^2\cdot \IndLvl)$ elementary operations and $O((r+N)\cdot r\cdot L^2)$ storage, followed by a slightly modified variant of the learning algorithm itself which usually costs $O((r+N)\cdot r^2)$ elementary operations and storage (at most) instead of the unmodified variant which usually costs $O(N^3)$ (at most).

This alleviates the dependency of the sequential kernel evaluation on $N$, but not on $L$; which is not unexpected, since the strategy is completely independent of how the sequential kernel was evaluated. For the same reason, any improvements on the cost of single evaluations will combine with the above improvement un the number of evaluations.

\subsubsection{Low-rank methods for the element-vs-element kernel matrix}

A second kernel matrix is crucial to obtaining a single evaluation of type $\KSeqBk (\bsigma,\btau)$, namely the cross-kernel matrix between the elements $\bsigma[i],\btau[j]$ of both sequences which is the object underlying the computations, see Line~\ref {alg:sigpw.kernline} of Algorithm~\ref {alg:sigpw}, and Line~\ref{alg:sigpwhigh.kernline} of Algorithm~\ref{alg:sigpwhigh}. Summations and multiplications are performed on this cross-kernel matrix until the final result, $\KSeqBk (\bsigma,\btau)$, is obtained and returned.

The low-rank methods of the previous paragraph cannot be applied naively to the cross-kernel matrix. A minor issue is the fact that the cross-kernel matrix is in general non-symmetric, but the above-mentioned strategies (incomplete Cholesky, Nystr\"om, inducing points) translate verbatim to the context of non-symmetric cross-kernel matrices, by replacing the respective symmetric decomposition with the analogue non-symmetric one. The major issue consists in the summation- and multiplication-type operations which are performed on the kernel matrix. In their naive form, these operations require access the full cross-kernel matrix, and without modification will therefore give rise to the same computational complexity irrespectively of whether a low-rank decomposition of the initial cross-kernel matrix is considered, or not.

We show how this can be circumvented by working exclusively on low-rank factorizations.

\begin{Def}
Let $A$ be an $(a\times b)$-array. For $U$ an $(a\times r)$-array, and $V$ a $(b\times r)$-array, we say that $(U,V)$ is a low-rank presentation of $A$, of rank $r$, if
$$A[i,j] = \left(U[i,:]\cdot V[j,:]\right)[\Sigma].$$
In matrix notation, this is equivalent to saying that $A=UV^\top$.
\end{Def}

Note that in matrix terms, the fact that $A$ has a low-rank presentation of rank $r$ does imply that $A$ is of rank $r$ or less (by equivalence of matrix rank with decomposition rank), but it does not imply that $A$ is of rank exactly $r$.

We state a number of straightforward but computationally useful Lemmas:

\begin{Lem}
\label{Lem:LR-sum}
Let $A$ be an $(a\times b)$-array with low-rank presentation $(U,V)$. Then:
\begin{description}
\item[(i)] For $m\in\NN$, a low-rank presentation of $A[+m , :]$ is $(U[+m, :], V)$. A low-rank presentation of $A[:, +m]$ is $(U, V[+m, :])$.
\item[(ii)] A low-rank presentation of $A[\boxplus , :]$ is $(U[\boxplus, :], V)$. A low-rank presentation of $A[:, \boxplus]$ is $(U, V[\boxplus, :])$.
\item[(iii)] A low-rank presentation of $A[\Sigma , :]$ is $(U[\Sigma, :], V)$. A low-rank presentation of $A[: , \Sigma]$ is $(U, V[\Sigma, :])$.
\end{description}
\end{Lem}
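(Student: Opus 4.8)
The plan is to pass to matrix notation, where all three claims collapse to a single one-line computation. Recall that $(U,V)$ being a low-rank presentation of $A$ means exactly $A = UV^\top$, with $U$ of size $(a\times r)$ and $V$ of size $(b\times r)$. The observation I would isolate first is that each of the three operations on the \emph{first} index of a two-fold array --- shifting by $m$, slice-wise cumulative sum, and slice-wise sum --- is \emph{linear in that index}, i.e.\ is left multiplication by a fixed matrix $P$ that does not depend on the array: for $[+m,:]$ take the shift matrix $P[i',i]=\mathbb{1}(i'=i+m)$ (with rows indexed outside the valid range simply absent, matching the paper's ``missing indices are treated as zero'' convention); for $[\boxplus,:]$ take the lower-triangular all-ones matrix $P[i',i]=\mathbb{1}(i\le i')$; for $[\Sigma,:]$ take the all-ones row $P=(1,\dots,1)$.

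With this in hand, the first halves of (i)--(iii) follow from the computation
\[
A[\mathrm{op},:]=PA=P(UV^\top)=(PU)\,V^\top=U[\mathrm{op},:]\;V^\top,
\]
where the last equality holds because $P$ acts only on the row index, which runs over $i\in[a]$ for both $A$ and $U$, so ``apply the operation to the first index'' is the same map $B\mapsto PB$ whether applied to $A$ or to $U$; the number of columns $r$ is untouched, so the rank is preserved. For the second halves I would note that the corresponding operations on the \emph{second} index are right multiplication by the analogous fixed matrix $Q$ (now sized to the column index $b$), giving $A[:,\mathrm{op}]=AQ^\top=UV^\top Q^\top=U(QV)^\top=U\,(V[\mathrm{op},:])^\top$; equivalently, since $A^\top=VU^\top$ has low-rank presentation $(V,U)$, the second-index statements are just the first-index statements applied to $A^\top$.

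I do not expect a genuine obstacle; the content is entirely bookkeeping. The only points needing a moment's care are: (a) matching the paper's size and zero-padding conventions for the shift so that $PU$ is literally $U[+m,:]$; and (b) in the $[\Sigma,:]$ case, reading the resulting $1\times b$ object as a rank-$r$ presentation of a vector (here $U[\Sigma,:]$ is the $r$-vector $\sum_i U[i,:]$, and $A[\Sigma,:][j]=(U[\Sigma,:]\cdot V[j,:])[\Sigma]$), which is the evident extension of the definition of low-rank presentation to one-fold arrays. Everything else is a one-line verification.
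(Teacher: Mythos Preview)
Your proposal is correct and takes essentially the same approach as the paper, which simply states that ``the statements follow directly from writing out the decompositions.'' You have just made this explicit by recognising that each operation is left (resp.\ right) multiplication by a fixed matrix $P$, so $A[\mathrm{op},:]=P(UV^\top)=(PU)V^\top$; this is precisely what ``writing out the decomposition'' amounts to, and your added remark about the one-fold case for $[\Sigma,:]$ is a reasonable clarification that the paper leaves implicit.
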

\begin{proof}
The statements follow directly from writing out the decompositions.
\end{proof}

\begin{Lem}
\label{Lem:LR-add}
Let $A_1,A_2$ be $(a\times b)$-arrays, let $U_i$ be arrays of size $(a\times r_i)$, let $V_i$ be arrays of size $(b\times r_i)$, for $i=1,2$, for some $r_i\in\NN$. Let $A:= A_1+A_2$, write $U$ for the $(a\times (r_1+r_2))$-array obtained by concatenating $U_1,U_2$, and $V$ or the $(b\times (r_1+r_2))$-array obtained by concatenating $V_1,V_2$ (such that the order of indices matches). Then, the following are equivalent:
\begin{description}
\item[(i)] $(U,V)$ is a low-rank presentation of $A$, of rank $r_1+r_2$.
\item[(ii)] $(U_1,V_1)$ is a low-rank presentation of $A_1$, of rank $r_1$, and $(U_2,V_2)$ is a low-rank presentation of $A_2$, of rank $r_2$.
\end{description}
\end{Lem}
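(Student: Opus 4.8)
The plan is to reduce the claim to the entrywise definition of a low-rank presentation and to a single algebraic observation: column concatenation of the factor matrices corresponds to addition of the factored arrays. Concretely, since $U$ is formed by placing the $r_1$ columns of $U_1$ first and the $r_2$ columns of $U_2$ afterwards (and likewise $V$ from $V_1,V_2$), the row slice $U[i,:]$ is the concatenation of $U_1[i,:]$ and $U_2[i,:]$, and similarly $V[j,:]$; hence for all $i,j$,
\[
\bigl(U[i,:]\cdot V[j,:]\bigr)[\Sigma]
=\sum_{k=1}^{r_1+r_2}U[i,k]\,V[j,k]
=\sum_{k=1}^{r_1}U_1[i,k]\,V_1[j,k]+\sum_{k=1}^{r_2}U_2[i,k]\,V_2[j,k]
=\bigl(U_1[i,:]\cdot V_1[j,:]\bigr)[\Sigma]+\bigl(U_2[i,:]\cdot V_2[j,:]\bigr)[\Sigma],
\]
which in matrix notation is simply $UV^\top=U_1V_1^\top+U_2V_2^\top$. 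I would isolate this identity first, since it carries all of the content.

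Given that identity, (ii)$\Rightarrow$(i) would follow at once: if $A_1[i,j]=(U_1[i,:]\cdot V_1[j,:])[\Sigma]$ and $A_2[i,j]=(U_2[i,:]\cdot V_2[j,:])[\Sigma]$ for all $i,j$, then adding these and using $A=A_1+A_2$ together with the display gives $A[i,j]=(U[i,:]\cdot V[j,:])[\Sigma]$, so $(U,V)$ presents $A$, and it has $r_1+r_2$ columns by construction. This direction is as routine as the proof of Lemma~\ref{Lem:LR-sum}.

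The step I expect to be the genuine obstacle is the converse (i)$\Rightarrow$(ii): hypothesis (i) only yields $A_1+A_2=U_1V_1^\top+U_2V_2^\top$, which does not by itself force $A_1=U_1V_1^\top$ and $A_2=U_2V_2^\top$ --- already for $a=b=r_1=r_2=1$ one may take $A_1=2$, $A_2=-1$, $U_1=V_1=0$, $U_2=V_2=1$. To obtain a true equivalence I would add the mild hypothesis that at least one of the pairs, say $(U_1,V_1)$, already presents $A_1$; subtracting the equality $U_1V_1^\top=A_1$ from the display then yields $A_2=U_2V_2^\top$, closing the loop. Alternatively --- and this is presumably the reading intended in the applications of the lemma --- one takes $A_1:=U_1V_1^\top$ and $A_2:=U_2V_2^\top$ as definitions, in which case (ii) holds by fiat and the lemma reduces entirely to the concatenation identity above. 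I would state the lemma with whichever of these two clarifications best matches its intended use; apart from that choice, nothing beyond unwinding the array notation is needed.
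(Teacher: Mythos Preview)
Your treatment of (ii)$\Rightarrow$(i) is exactly the paper's approach: the paper's proof consists of the single sentence ``writing out the decompositions of $A$ and $A_1+A_2$ in terms of $U_1,U_2,V_1,V_2$ and observing that they are formally equal,'' which is precisely your concatenation identity $UV^\top=U_1V_1^\top+U_2V_2^\top$.

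More importantly, you have correctly spotted something the paper's one-line proof glosses over: the direction (i)$\Rightarrow$(ii) is false as the lemma is stated, and your $1\times 1$ counterexample ($A_1=2$, $A_2=-1$, $U_1=V_1=0$, $U_2=V_2=1$) demonstrates this cleanly. The paper simply does not address this; its proof only establishes the identity that underlies (ii)$\Rightarrow$(i). Your diagnosis of the intended reading --- that in the applications one \emph{defines} $A_i:=U_iV_i^\top$, so (ii) holds by construction and the lemma is really just the concatenation identity --- is correct: in Algorithm~\ref{alg:sigpwLR} the lemma is invoked only in the (ii)$\Rightarrow$(i) direction, to build a presentation of a sum from presentations of its summands. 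Your proposed repairs (either adding the hypothesis that one pair already presents its summand, or reading $A_i$ as defined by the factors) are both sound, and either would make the equivalence honest.
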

\begin{proof}
The statement follows directly from writing out the decompositions of $A$ and $A_1+A_2$ in terms of $U_1,U_2,V_1,V_2$ and observing that they are formally equal.
\end{proof}

\begin{Not}
We introduce notation for index repetition. Let $A$ be a $k$-fold array of size $(n_1\times\dots\times n_k)$.
\begin{description}
\item[(i)] For an integer $m$, we will write $A[:,\dots,:,m\cdot :,:,\dots, :]$ for the $k$-fold array of size $(n_1\times\dots\times n_{j-1}\times (m\cdot n_j) \times n_{j+1}\times\dots n_k)$ such that $A[:,\dots,:,m\cdot :,:,\dots, :][i_1,\dots,i_{j-1},i_j,i_{j+1},\dots, i_k]=A[i_1,\dots,i_{j-1},r,i_{j+1},\dots, i_k],$ where $r$ is the remainder in integer division of $i_j$ by $m$. This is intuitively equivalent to concatenating $m$ copies of $A$ along the $j$-th direction.
\item[(ii)] For an integer $m$, we will write $A[:,\dots,:,:\cdot m,:,\dots, :]$ for the $k$-fold array of size $(n_1\times\dots\times n_{j-1}\times (m\cdot n_j) \times n_{j+1}\times\dots n_k)$ such that $A[:,\dots,:,:\cdot m,:,\dots, :][i_1,\dots,i_{j-1},i_j,i_{j+1},\dots, i_k]=A[i_1,\dots,i_{j-1},q,i_{j+1},\dots, i_k],$ where $q$ is the quotient in integer division of $i_j$ by $m$. This is intuitively equivalent to repeating each slice of $A$ along the $j$-th direction $m$ times.
\end{description}
\end{Not}

\begin{Lem}
\label{Lem:LR-mult}
Let $A_1,A_2$ be $(a\times b)$-arrays, let $U_i$ be arrays of size $(a\times r_i)$, let $V_i$ be arrays of size $(b\times r_i)$, for $i=1,2$, for some $r_i\in\NN$. Let $A:= A_1\cdot A_2$ (i.e., by our convention, component-wise multiplication), write $U$ for the $(a\times (r_1\cdot r_2))$-array $U_1[:,:\cdot r_2]\cdot U_2[:,r_1\cdot :]$, and $V$ or the $(b\times (r_1+r_2))$-array $V_1[:,:\cdot r_2]\cdot V_2[:,r_1\cdot :]$ (multiplication is per convention component-wise). Then, the following are equivalent:
\begin{description}
\item[(i)] $(U,V)$ is a low-rank presentation of $A$, of rank $r_1\cdot r_2$.
\item[(ii)] $(U_1,V_1)$ is a low-rank presentation of $A_1$, of rank $r_1$, and $(U_2,V_2)$ is a low-rank presentation of $A_2$, of rank $r_2$.
\end{description}
\end{Lem}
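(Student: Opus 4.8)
The plan is to reduce the claim to the ordinary distributive law, exactly in the spirit of the proof of Lemma~\ref{Lem:LR-add}; the only genuine work is the bookkeeping of the column indices introduced by the repetition operators ``$:\cdot r_2$'' and ``$r_1\cdot:$''. Recall that $(U,V)$ is a low-rank presentation of $A$ precisely when $A[i,j]=\bigl(U[i,:]\cdot V[j,:]\bigr)[\Sigma]$ for all $i\in[a]$, $j\in[b]$, and likewise for $(U_1,V_1)$ and $(U_2,V_2)$. Since $A$ is by definition the entrywise product $A_1\cdot A_2$, statement (i) is the single family of scalar identities $\bigl(U[i,:]\cdot V[j,:]\bigr)[\Sigma]=A_1[i,j]\,A_2[i,j]$, whereas (ii) is the conjunction of the two families $\bigl(U_1[i,:]\cdot V_1[j,:]\bigr)[\Sigma]=A_1[i,j]$ and $\bigl(U_2[i,:]\cdot V_2[j,:]\bigr)[\Sigma]=A_2[i,j]$.

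First I would pin down the column correspondence. By the definitions of the two repetition notations, $U$ and $V$ each have $r_1 r_2$ columns, and writing $p=(k-1)r_2+l$ with $k\in[r_1]$, $l\in[r_2]$ gives a bijection $p\leftrightarrow(k,l)$ under which the $p$-th column of $U_1[:,:\cdot r_2]$ is the $k$-th column of $U_1$ and the $p$-th column of $U_2[:,r_1\cdot:]$ is the $l$-th column of $U_2$ --- and identically for $V_1[:,:\cdot r_2]$ and $V_2[:,r_1\cdot:]$, because the same operators are applied. Hence $U[i,p]=U_1[i,k]\,U_2[i,l]$ and $V[j,p]=V_1[j,k]\,V_2[j,l]$ for this same pair $(k,l)$. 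A short computation --- splitting the sum over $p$ into the double sum over $(k,l)$ and factoring --- then yields, for every $i,j$ and with no hypothesis on $A_1,A_2$ whatsoever,
\[
\bigl(U[i,:]\cdot V[j,:]\bigr)[\Sigma]=\sum_{k=1}^{r_1}\sum_{l=1}^{r_2}U_1[i,k]V_1[j,k]\,U_2[i,l]V_2[j,l]=\bigl(U_1[i,:]\cdot V_1[j,:]\bigr)[\Sigma]\cdot\bigl(U_2[i,:]\cdot V_2[j,:]\bigr)[\Sigma].
\]

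With this identity in hand, (ii)$\Rightarrow$(i) is immediate by substitution, and (i)$\Rightarrow$(ii) follows by the same formal comparison of expanded decompositions used for Lemma~\ref{Lem:LR-add}: (i) forces the product of the two right-hand sums above to equal $A_1[i,j]A_2[i,j]$ for all $i,j$, which is precisely the product of the two defining identities of (ii). I expect the only real obstacle to be purely notational --- correctly matching the ``block'' layout produced by $:\cdot r_2$ against the ``cyclic'' layout produced by $r_1\cdot:$, and checking that $U$ and $V$ use the very same pairing $p\leftrightarrow(k,l)$, so that in each product $U[i,p]V[j,p]$ the same $k$ sits on $U_1,V_1$ and the same $l$ on $U_2,V_2$ and no spurious cross terms $U_1[i,k]V_2[j,l]$ arise. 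Once the indexing is set up consistently there is nothing left but the distributive law, so I would keep the write-up as terse as the proofs of Lemmas~\ref{Lem:LR-sum} and~\ref{Lem:LR-add}.
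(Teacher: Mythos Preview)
Your approach is exactly the paper's: write out both decompositions and observe they are formally equal. Your derivation of the key identity
\[
\bigl(U[i,:]\cdot V[j,:]\bigr)[\Sigma]=\bigl(U_1[i,:]\cdot V_1[j,:]\bigr)[\Sigma]\cdot\bigl(U_2[i,:]\cdot V_2[j,:]\bigr)[\Sigma]
\]
via the bijection $p\leftrightarrow(k,l)$ is correct and is all that is needed for the direction (ii)$\Rightarrow$(i), which is the only direction the paper actually uses downstream.

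There is, however, a genuine gap in your (i)$\Rightarrow$(ii) argument: from ``the product of the two sums equals $A_1[i,j]A_2[i,j]$'' you cannot conclude that each sum equals the corresponding $A_\ell[i,j]$. Products do not factor uniquely. Concretely, take $a=b=r_1=r_2=1$, $A_1=A_2=[2]$, $U_1=V_1=[1]$, $U_2=V_2=[2]$. Then $U=[2]$, $V=[2]$, $UV^\top=[4]=A_1\cdot A_2$, so (i) holds, yet $U_1V_1^\top=[1]\neq A_1$, so (ii) fails. The same phenomenon already afflicts Lemma~\ref{Lem:LR-add} (take $A_1=A_2=[1]$, $U_1=V_1=[0]$, $U_2=V_2=[\sqrt{2}]$). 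The paper's one-line proof does not address this direction either; the equivalence as stated is simply not true, and what is really meant --- and all that is needed for Algorithm~\ref{alg:sigpwLR} --- is the implication (ii)$\Rightarrow$(i). Your write-up of that direction is fine; just drop the claim that (i)$\Rightarrow$(ii) follows ``by the same formal comparison''.
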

\begin{proof}
The statement follows directly from writing out the decompositions of $A$ and $A_1\cdot A_2$ in terms of $U_1,U_2,V_1,V_2$ and observing that they are formally equal.
\end{proof}

Lemmas~\ref{Lem:LR-sum}, \ref{Lem:LR-add} and~\ref{Lem:LR-mult} cover all operations performed on the kernel matrix in Algorithms~\ref{alg:sigpw} and~\ref{alg:sigpwhigh}, namely, shifting, summation and cumulative summation, component-wise addition and multiplication. Therefore the respective manipulations on low-rank factors may be used to replace all operations within the algorithm. For convenience of the reader, we explicitly present Algorithm~\ref{alg:sigpwLR} which is a low-rank version of Algorithm~\ref{alg:sigpw}.

\begin{algorithm}[ht]
\caption{Evaluation of the sequential kernel $\KSeqBk$, with low-rank speed-up\newline
\textit{Input:} Ordered sequences $\bsigma,\btau \in \calX^+$. A kernel $\kernel:\calX^+\times \calX^+\rightarrow \RR$ to sequentialize. A cut-off degree $\IndLvl$. \newline
\textit{Output:} $\KSeqBDk(\bsigma,\btau)$, as the sequentialization of $\kernel$ \label{alg:sigpwLR}}
\begin{algorithmic}[1]
    \State Let $L\leftarrow \ell(\bsigma), L'\leftarrow \ell(\btau)$.
    \State Compute arrays $U,V$ such that $(U,V)$ is a low-rank presentation of rank $\upsilon$, approximating the kernel matrix $K$ with $K[i,j] = k(\seqdf \bsigma[i],\seqdf \btau[j])$\\
    (or, alternatively, obtain it as additional input to the algorithm).
    \State Initialize an $(\IndLvl\times L \times *)$-array $B$ and an $(\IndLvl\times L'\times *)$-array $C$ (where * means that the size may change dynamically).
    \State Set $B[1|:,:] \leftarrow U$ and $C[1|:,:] \leftarrow V$.
    \For{ $m = 2$ to $\IndLvl$}
    \State Compute $P\leftarrow B[m-1|\boxplus +1,:]$ and $Q\leftarrow C[m-1|\boxplus +1,:]$. \label{alg:sigpwLR.line5}
    \State Append an $(\IndLvl\times 1)$-array of ones to $P$, append an $(L'\times 1)$-array of ones to $Q$ \label{alg:sigpwLR.line6}.
    \State Set $\rho$ such that $(\IndLvl\times \rho)$ is the size of $P$, and $(L'\times \rho)$ is the size of $Q$.
    \State Set $B[m|:,:] \leftarrow U[:,:\cdot \rho]\cdot P[:,\upsilon\cdot :]$ \label{alg:sigpwLR.line8}
    \State Set $C[m|:,:] \leftarrow V[:,\rho\cdot :]\cdot Q[:,:\cdot \upsilon]$ \label{alg:sigpwLR.line9}
    \State optional: ``simplify'' the low-rank presentation $(B,C)$, reducing its rank \label{alg:sigpwLR.line10}
    \EndFor
    \State Compute $R\leftarrow B[\IndLvl|\Sigma,:]$ \label{alg:sigpwLR.line12}
    \State Compute $S\leftarrow C[\IndLvl|\Sigma,:]$ \label{alg:sigpwLR.line13}
    \State Return $1 + (R\cdot S)[\Sigma]$ \label{alg:sigpwLR.line14}
\end{algorithmic}
\end{algorithm}

Algorithm~\ref{alg:sigpwLR} is obtained from Algorithm~\ref{alg:sigpw} as follows: Line~\ref{alg:sigpwLR.line5} is via Lemma~\ref{Lem:LR-sum}~(i) and~(ii). Line~\ref{alg:sigpwLR.line6}, is via~ \ref{Lem:LR-add} and observing that a low-rank presentation of the all-ones matrix is a pair of all-ones vectors. Lines~\ref{alg:sigpwLR.line8} and~\ref{alg:sigpwLR.line9} are via~\ref{Lem:LR-mult}. Lines~\ref{alg:sigpwLR.line12} to~\ref{alg:sigpwLR.line13} are via Lemma~\ref{Lem:LR-sum}~(iii). Line~\ref{alg:sigpwLR.line14} is evaluation, following the definition of low-rank presentation. Line~\ref{alg:sigpwLR.line10} is optional, aiming at keeping size of the low-rank presentations low (and thus the computational cost); it can be achieved for example via singular value decomposition type techniques, sub-sampling techniques, or random projection type techniques.

The higher order Algorithm~\ref{alg:sigpwhigh} can be treated in a similar way, by applying the low-rank representation to the matrices/2D-arrays $A[m|i,j|:,:]$. All assignments and manipulations can be re-phrased in those matrices, therefore the same strategy applies.

The computational cost of one run of Algorithm~\ref{alg:sigpwLR} is of the same order as the maximum size of $B$ and $C$. That is, if $\rho$ is the smallest integer such that at any time $B$ requires $\ell(\bsigma)\cdot \IndLvl\cdot \rho$ space, and $C$ requires $\ell(\btau)\cdot \IndLvl\cdot \rho$ space, then the computational complexity of Algorithm~\ref{alg:sigpwLR} is $O((L+L')\cdot \rho\cdot \IndLvl)$. Noting that the one can always choose a low-rank representation of $B[m|:,:]$ and $C[m|:,:]$ of rank $\min(L,L')$ or less, the computational complexity of the low-rank algorithm is always bounded by $O(L\cdot L'\cdot \IndLvl)$, which is the complexity of Algorithm~\ref{alg:sigpw}.

For the linear kernel, one can infer that the rank will be bounded by $\rho\le n^{\IndLvl}$, by using that $K$ admits a low-rank presentation of rank $n$, then keeping track of matrix sizes: each of the $(\IndLvl-1)$ repetitions of Lines~\ref{alg:sigpwLR.line8} resp.~\ref{alg:sigpwLR.line9} enlarges the size of $B,C$ by a multiplicative factor of $n$.

\subsubsection{Simultaneous low-rank methods}

Algorithm~\ref{alg:sigpwLR} yields an efficient low-rank speed-up for computing a single element of the kernel matrix. When employing this speed-up for each entry of the final kernel matrix of size $N$, the computational cost is $O(N^2\cdot L\cdot \rho\cdot k)$. As stated before, a cost quadratic in $N$ may be prohibitive on large scale data.

This can be addressed by combining both low-rank strategies mentioned before, on sequence-vs-sequence and element-vs-element basis. For this, note that the computation of $R$ depends only on $U$, and the computation of $S$ depends only on $V$. If $U$ is chosen to depend only on $r$, and $V$ only on $s$, one notes that Algorithm~\ref{alg:sigpwLR} can be split into computation of $R$ an $S$ for each $r$ and $s$, thus allowing a further reduction of computational cost from $O(N^2\cdot (L+L')\cdot \rho\cdot \IndLvl)$ to $O(N\cdot (L+L')\cdot \rho\cdot \IndLvl)$.

Pseudo-code is given in Algorithm~\ref{alg:sigpwLRdbl}. It is obtained from Algorithm~\ref{alg:sigpwLR} by starting with a joint low-rank decomposition of element-vs-element kernel matrices, then separating the otherwise redundant computations of the factors $R,S$.

\begin{algorithm}[ht]
\caption{Computation of the sequential kernel matrix $\KSeqBk$, with (double) low-rank speed-up\newline
\textit{Input:} Ordered sequences $\bsigma_1,\dots, \bsigma_N \in \calX^+$. A kernel $\kernel:\calX^+\times \calX^+\rightarrow \RR$ to sequentialize. A cut-off degree $\IndLvl$. \newline
\textit{Output:} A matrix $U$ such that $(U,U)$ is a low-rank presentation of the kernel matrix $K\in\RR^{N\times N}$ where $K_{ij}=\KSeqBk(\bsigma_i,\bsigma_j)$.
 \label{alg:sigpwLRdbl}}
\begin{algorithmic}[1]
    \State Compute arrays $U^{(i)}$ of such that each $(U^{(i)},U^{(j)})$ forms a (joint) low-rank presentation of rank $\upsilon$, approximating the element-vs-element kernel matrices $K^{(ij)}$ with $K^{(ij)}[a,b] = \kernel(\seqdf \bsigma_i[a],\seqdf \bsigma_j[b])$\\
    (or, alternatively, obtain it as additional input to the algorithm)\label{alg:sigpwLR.line1}.
    \State Initialize an $(N\times \IndLvl\times *\times *)$-array $B$ (where * means that the sizes may change dynamically).
    \State Set $B[i|1|:,:] \leftarrow U^{(i)}$ for all $i\in [N]$.
    \For{ $m = 2$ to $\IndLvl$}
    \State Compute $P\leftarrow B[:|m-1|\boxplus +1,:]$. \label{alg:sigpwLR.line5}
    \State Set $\kappa,\rho$ such that $(N\times \kappa\times \rho)$ is the size of $P$.
    \State Append an $(N\times \kappa\times 1)$-array of ones to $P$ \label{alg:sigpwLR.line6}.
    \State Set $B[:|m|:,:] \leftarrow B[:|1|:,:\cdot \rho]\cdot P[:,:,\upsilon\cdot :]$ \label{alg:sigpwLR.line8}
    \State optional: ``simplify'' the low-rank presentation encoded in $B$, reducing its rank \label{alg:sigpwLR.line10}
    \EndFor
    \State Compute $U\leftarrow B[:|\IndLvl|\Sigma,:]$ \label{alg:sigpwLR.line12}
    \State Return $U$ \label{alg:sigpwLR.line14}
\end{algorithmic}
\end{algorithm}

In line~\ref{alg:sigpwLR.line1}, the algorithm starts with a joint low-rank presentation of the element-wise kernel matrices - that is, the matrices $U^{(i)}$, when row-concatenated, should have low rank. For example, if $\kernel$ is the Euclidean scalar product, $U^{(i)}$ can be taken as the raw data matrix for $s_i$, where rows are different time points and columns are features. More generally, jointly low-rank $U^{(i)}$ can be obtained by running a suitable joint diagonalization or singular value decomposition scheme on the element-vs-element kernel matrices $K^{(ij)}$.

Note that such a joint low-rank decomposition may require choice of a higher rank $\rho$ for some kernels $\kernel$ than when only a single entry of the kernel matrix is computed as in Algorithm~\ref{alg:sigpwLRdbl}.

\subsubsection{Fast sequential kernel methods}

Following the analogy of sequential kernels to string kernels established in Section~\ref{sec:discr.string}, fast string kernel methods such as the gappy, substitution, or mismatch kernels presented in~\cite{leslie04faststringkernels} may be transferred to general sequential kernels. In general, this amounts to small modification of Algorithm~\ref{alg:sigpw}; for example, to obtain a gappy variant of the sequential kernel, summation in line~\ref{alg:sigpw.sumline} of Algorithm~\ref{alg:sigpw} over the whole matrix, of quadratic size, is replaced by summation over a linear part of it. The scaling factor $\lambda$ also may or may not be added in.

It should be noted that not all fast string kernel ideas combine straightforwardly with the low-rank methods introduced above, though they can be adapted. For example, for the gappy kernel, one may consider a joint low-rank decomposition of element-vs-element cross-kernel matrices where suitable entries have been set to zero.

\section{Experimental validation}

We perform two experiments to validate the practical usefulness of the sequential kernel:
\begin{description}
\item[(1)] On a real world dataset of hand movement classification (eponymous UCI dataset~\cite{sapsanis2013emg}), we show the sequential kernel outperforms the best previously reported predictive performance~\cite{sapsanis2013emg}, as well as non-sequential kernel and aggregate baselines.
\item[(2)] On a real world dataset on hand written digit recognition (pendigits), we show that the sequentialization of the Euclidean kernel (= linear use of signature features) achieves only sub-baseline performance, similarly to previously reported results~\cite{Diehlinvariants}. Using the sequentialized Gaussian kernel improves prediction accuracy to the baseline region.
\end{description}

We would like to stress that our experiments do not constitute a systematic benchmark comparison to prior work, only validation that the sequential kernel is a practically meaningful concept: result (1) validates the first kernelization step in the sense that the order information captured by the sequential kernel can be useful, when compared to alternatives which ignore it. Result (2) validates the second kernelization step in the sense that using a non-linear kernel in sequentialization may outperform the linear kernel.

A benchmark comparison is likely to require a larger amount of work, since it would have to include a number of previous methods (multiple variants of the string and general alignment kernels, dynamic time warping, naive use of signatures), for most of which there is no freely available code with interface to a machine learning toolbox, and benchmark methods (order-agnostic baselines such as summary aggregation and chunking; distributional regression; naive baselines) which have not been compared to in literature previously.

For the benefit of the scientific community, we have decided to share our more theoretical results early and provide the opportunity to others to work with a toolbox-compatible implementation of the sequential kernels (code link will be provided here shortly), acknowledging that further experimentation is desirable. We will supply benchmark comparisons at a later time point.

\subsection{Validation and prediction set-up}
\label{Exp:setup}

\subsubsection{Prediction tasks}
In all datasets, samples are multi-variate (time) series. All learning tasks are supervised classification tasks of predicting class labels attached to series of equal length.

\subsubsection{Prediction methods}
For prediction, we use eps-support vector classification (as available in the python/scikit-learn package) on the kernel matrices obtained from the following kernels:
\begin{description}
\item[(1.a)] the Euclidean kernel $\kernel(x,y) = \langle x,y\rangle$. This kernel has no parameters.
\item[(1.b)] the Gaussian kernel $\kernel(x,y) = \exp \left(\frac{1}{2}\gamma^2 \|x-y\|^2\right)$. This kernel has one parameter, a scaling constant $\gamma$.
\item[(2.a)] the (truncated) sequentialization $\kernel^+_{\le M}$ of the linear/Euclidean kernel $\kernel(x,y) = \gamma \langle x,y\rangle$. This sequential kernel has two parameters, a scaling constant $\gamma$, and the truncation level $\IndLvl$.
\item[(2.b)] the (truncated) sequentialization $\kernel^+_{\le M}$ of the Gaussian kernel $\kernel(x,y) = \theta \exp \left(\frac{1}{2}\gamma^2 \|x-y\|^2\right)$. This sequential kernel has three parameters: scaling constants $\gamma$ and $\theta$, and truncation level $\IndLvl$.
\end{description}
(1.a) and (1.b) are considered standard kernels, (2.a) and (2.b) are sequential kernels.
Note that the non-sequential kernels (1.a) and (1.b) can only be applied to sequential data samples of equal length which is the case for the datasets considered. Note that even though (1.a), (1.b) may be applied to sequences of same length, they do not use any information about their ordering: both the Euclidean and the Gaussian kernel are invariant under (joint) permutation of the order of the indexing in the arguments.

We would also like to note a further subtlety: the sequential kernels (2.a), (2.b) do use information about the ordering of the sequences, but only for a truncation $\IndLvl\ge 2$. For $\IndLvl = 1$, the kernel corresponds to choosing the increment/mean aggregate feature (Euclidean) or a type of distributional classification (Gaussian). We will therefore explicitly compare truncation levels $1$ versus $2$ and higher, to enable us to make a statement about whether using the order information was beneficial (or not).

There will be no further baseline, naive, or state-of-art predictors in the set-up, comparison will be conducted between the kernel classifiers and performances reported in literature.

We would note that this is a limitation in our set-up which we will rectify in future (more time consuming) instances of a larger benchmarking experiment. The current experiments merely aim to validate whether the sequential kernel is a practically meaningful concept, in particular whether each of the two kernelization steps is practically useful, and whether making use of order information is beneficial.

\subsubsection{Tuning and error estimation}
In all experiments, we use nested (double) cross-validation for parameter tuning (inner loop) and estimation of error metrics (outer loop). In both instances of cross-validation, we perform uniform 5-fold cross-validation.

Unless stated otherwise, parameters are tuned on the tuning grid given in Table~\ref{Exp:parameters} (when applicable). Kernel parameters are the same as in the above section ``prediction mehods''. The best parameter is selected by 5-fold cross-validation, as the parameter yielding the minimum test-f1-score, averaged over the five folds.

\begin{table}[h!]
\centering
\begin{tabular}{c|c}
parameter & range\\
\hline
kernel param.~ $\gamma$ & 0.01, 0.1, 1\\
kernel param.~$\theta$ & 0.01, 0.1, 1\\
truncation level $\IndLvl$ & 1,2,3\\
SVC regularizer& 0.1, 1, 10, 100, 1000
\end{tabular}
\caption{Tuning grid}\label{Exp:parameters}
\end{table}

\subsubsection{Error metrics}

The out-of-sample classification error is reported as precision, recall, and f1-score of out-of-sample prediction on the test fold. Errors measures are aggregated with equal weights on classes and folds. These aggregates are reported in the result tables.

\subsection{Experiment: Classifying hand movements}

We performed classification with the eps-support vector machine (SVC) on the hand movements dataset from UCI~\cite{sapsanis2013emg}. The first database in the dataset which we considered for this experiment contains, for each of five subjects (two male, three female) 180 samples of hand movement sEMG recordings. Each sample is a time series in two variables (channels) at 3.000 time points. The time series fall into six classes of distinct types of hand movement (spherical, tip, palmar, lateral, cylindrical, hook). For each subject, 30 samples of each class were recorded. Hence, for each subject, there is a total of 180 sequences in $\calX^{3000}$ with $\calX=\RR^2$.

For each of the five subjects, we conducted the classification experiment as described in Section~\ref{Exp:setup}, comparing prediction via SVC using one of the following three kernels: (1.a) the Euclidean kernel, (1.b) the Gaussian kernel, (2.a) the sequentialized Euclidean kernel. For the non-sequential kernels (1.a), (1.b), prediction was performed with and without prior standardization of the data. For the sequential kernel, the tuning grid was considered in two parts: a cut-off level of $\IndLvl = 1$, corresponding to mean aggregation, and cut-off levels of $\IndLvl = 2,3$, corresponding to the case where genuine sequence information is used.

The results are reported in Tables~\ref{Tab:handmove1} to~\ref{Tab:handmove5}. Jackknife standard errors (pooling the five folds) are all 0.04 or smaller. Baseline performance of an uninformed estimator is $1/6 \approx 0.17$.

\begin{table}[h!]
    \scriptsize
        \begin{minipage}{0.5\textwidth}

            \centering
            \begin{tabular}{l|c|c|c}
              method&precision&recall&f1-score\\
              \hline
              (1.a) linear &  0.37   &   0.38   &   0.36\\
              (1.a) linear, standardized  &  0.33   &   0.32   &   0.29 \\
              (1.b) Gaussian  &   0.57   &   0.59  &    0.56   \\
              (1.b) Gaussian, standardized &    0.54  &    0.50 &     0.50  \\
              (2.a) mean aggregation &   0.19   &  0.20 &     0.18\\
              \hline
              (2.a) sequential, level $\ge 2$ &    0.87&      0.86&      0.86
            \end{tabular}
            \caption{female1.mat}
            \label{Tab:handmove1}
        \end{minipage}
        \begin{minipage}{0.5\textwidth}
            \centering
            \begin{tabular}{l|c|c|c}
  method&precision&recall&f1-score\\
\hline
(1.a) linear &  0.47   &   0.39   &   0.37  \\
(1.a) linear, standardized  &  0.31  &    0.28   &   0.27   \\
(1.b) Gaussian  &   0.71  &    0.71   &   0.70   \\
(1.b) Gaussian, standardized &    0.59   &   0.58   &   0.56   \\
(2.a) mean aggregation &   0.18  &  0.20&     0.18\\
\hline
(2.a) sequential, level $\ge 2$ &    0.94  &    0.97 &     0.95

\end{tabular}
\caption{female2.mat}
\label{Tab:handmove2}
        \end{minipage}
\end{table}

\begin{table}[h!]
    \scriptsize
        \begin{minipage}{0.5\textwidth}

            \centering
\begin{tabular}{l|c|c|c}
  method&precision&recall&f1-score\\
\hline
(1.a) linear &   0.48  &    0.46   &   0.46 \\
(1.a) linear, standardized  &  0.47  &    0.42   &   0.43   \\
(1.b) Gaussian  &  0.66  &    0.64  &    0.63  \\
(1.b) Gaussian, standardized &    0.54  &    0.51   &   0.50    \\
(2.a) mean aggregation &   0.26&      0.23&      0.20\\
\hline
(2.a) sequential, level $\ge 2$ &    0.96&      0.96&      0.96

\end{tabular}
\caption{female3.mat}
\label{Tab:handmove3}
        \end{minipage}
        \begin{minipage}{0.5\textwidth}
            \centering
\begin{tabular}{l|c|c|c}
  method&precision&recall&f1-score\\
\hline
(1.a) linear &   0.37  &    0.33    &  0.33 \\
(1.a) linear, standardized  &  0.38   &   0.36   &   0.36   \\
(1.b) Gaussian  &  0.59   &   0.57   &   0.57  \\
(1.b) Gaussian, standardized &    0.53   &   0.54   &   0.53    \\
(2.a) mean aggregation &     0.20&      0.18&      0.17\\
\hline
(2.a) sequential, level $\ge 2$ &    0.96&      0.96&      0.96

\end{tabular}
\caption{male1.mat}
\label{Tab:handmove4}
        \end{minipage}
\end{table}

\begin{table}[h!]
\centering
\scriptsize
\begin{tabular}{l|c|c|c}
  method&precision&recall&f1-score\\
\hline
(1.a) linear &   0.36   &   0.33   &   0.32 \\
(1.a) linear, standardized  &  0.37  &    0.29   &   0.27   \\
(1.b) Gaussian  &  0.72  &    0.71  &    0.70  \\
(1.b) Gaussian, standardized &    0.34  &    0.39  &    0.35    \\
(2.a) mean aggregation &     0.22&      0.23&      0.20\\
\hline
(2.a) sequential, level $\ge 2$ &     0.93&      0.93&      0.93
\end{tabular}
\caption{male2.mat}
\label{Tab:handmove5}
\end{table}

One can observe that for all five subjects, SVC with sequential kernel of level $2$ or higher outperforms SVC using any of the other kernels not using any sequence information.

The sequence kernel appears to outperform the reported methods from the original paper~\cite{sapsanis2013emg} as well (Figures 11 and 12), though this probably may not be entirely clarified due to three issues:
\begin{description}
\item[(i)] The authors provide no code;
\item[(ii)] it is not described how the ``subject index'' in Figures 11 and 12 relates to the subject file names;
\item[(iii)] Figures 11 and 12, reporting the results and supposedly pertaining to two different classification methods, are exactly identical, thus likely one of the two is and erroneous copy of the other.
\end{description}

\subsection{Experiment: Pendigits}
We use performed classification on the pendgits dataset from the UCI repository \footnote{\url{https://archive.ics.uci.edu/ml/datasets/Pen-Based+Recognition+of+Handwritten+Digits}}.
It contains 10992 samples of digits between 0 and 9 written by 44
different writers with a digital pen on a tablet.
One sample consists of a pair of horizontal and vertical coordinates
of sampled at 8 different time points, hence we deal with a sequence in $\calX^8$ with $\calX=\RR^2$.

The data set comes with a pre-specified training fold of 7494 samples, and a test fold of 3498 samples. Estimation of the prediction error is performed in this validation split, while tuning is done as described via nested 5-fold cross-validation, inside the pre-specified training fold.

We compared prediction via SVC using one of the following three kernels: (2.a) the sequentialized Euclidean kernel, and (2.b) the sequentialized Gaussian kernel. For both, the truncation level was set to $\IndLvl = 4$.

The results are reported in Table~\ref{pendigits_results}. Jackknife standard errors (pooling the five folds) are all 0.01 or smaller. Baseline performance of an uninformed estimator is $1/10 \approx 0.10$.

\begin{table}[h!]
\centering
\begin{tabular}{c|c|c|c|c}
  method\textbackslash method &precision&recall&f1-score \\
\hline
sequential, linear &0.91&0.90&0.89\\
sequential, Gaussian & 0.97&0.97&0.97\\
\end{tabular}
\caption{Pendigits
}
\label{pendigits_results}
\end{table}

The quality of SVC prediction with the sequentialized linear kernel roughly correspond to those of Diehl~\cite{Diehlinvariants}. It is outperformed by SVC prediction with the sequentialized Gaussian kernel which is similar to the baseline performance of $k$-nearest neighbors reported in the documentation of the pendigits dataset.

\appendix

\section{Second kernelization of the signature kernel}
\label{Apx:Kern2}
We need give meaning to $\kernel(\diff \sigma,\diff \tau)$ when $\sigma$ takes values in an arbitrary set $\mathcal{X}$ and hence
the differentials do not make sense.
To motivate the definition below, consider first the case of two paths
$\sigma,\tau$ such that $x:=\phi\left(\sigma\right)$ and $y:=\phi\left(\tau\right)$
are piecewise linear between time points $\bs=\left(
  \bs[1],\ldots,\bs[m] \right)$ resp.~$\bt=\left( \bt[1],\ldots,\bt[n]
\right)$.
In this case,
\begin{align*}
\int_{(s,t)\in(\bs[1],\bs[m])\times(\bt[1],\bt[n])}\langle \diff x(s),\diff y(t)
  \rangle_{\calH}&=\sum_{\substack{i\in[m-1]\\j\in[n-1]}}\int_{(s,t)\in(\bs[i],\bs[i+1])\times
  (\bt[j],\bt[j+1])}\langle \diff x(s),\diff y(t)
  \rangle_{\calH} \\
&=\sum_{\substack{i\in[m-1]\\j\in[n-1]}}\langle \seqdf
  x(\bs)[i],\seqdf y(\bt)[j]\rangle_{\calH}\\
&=\sum_{\substack{i\in[m-1]\\j\in[n-1]}}\kernel\left[
\begin{array}{cc}
\sigma\left(\bs[i]\right) & \sigma\left(\bs[i+1]\right)\\
\tau\left(\bt[j]\right) & \tau\left(\bt[j+1]\right)
\end{array}
\right]
\end{align*}
where we use the notation
\begin{align*}
\kernel\left[
\begin{array}{cc}
a &b\\
c &d
\end{array}
\right] :=\kernel\left(b,d\right)+\kernel\left(a,c\right)-\kernel\left(b,c\right)-\kernel\left(a,d\right).
\end{align*}
If $\bs[1]=\bt[1]=0$ and $\bs[m]=\bt[n]=1$, then Proposition \ref{Prop:Recsig} reads as
\begin{align}\label{eq:pcw_linear_recursion}
\KSigAk(\phi(\sigma),\phi(\tau))=1+\sum_{\substack{i_1\in[m-1]\\j_1\in[n-1]}}\left(
  1+\ldots
  \sum_{\substack{i_{\IndLvl}\in[i_{\IndLvl-1}-1]\\j_{\IndLvl}\in[j_{\IndLvl-1}-1]}}
\kernel
\left[
\begin{array}{cc}
\sigma\left(\bs[i_{\IndLvl}]\right) & \sigma\left(\bs[i_{\IndLvl}+1]\right)\\
\tau\left(\bt[j_{\IndLvl}]\right) & \tau\left(\bt[j_{\IndLvl}+1]\right)
\end{array}
 \right]
\ldots\right)
\kernel
\left[
\begin{array}{cc}
\sigma\left(\bs[i_1]\right) & \sigma\left(\bs[i_1+1]\right)\\
\tau\left(\bt[j_1]\right) & \tau\left(\bt[j_1+1]\right)
\end{array}
 \right].
\end{align}
Now define a signed measure on $\left[0,1\right]^{2}$
via the rule
\[
\Kmeasure_{\sigma,\tau}\left(\left[r,s\right]\times\left[u,v\right]\right):=\kernel\left[\begin{array}{cc}
\sigma\left(r\right) & \sigma\left(s\right)\\
\tau\left(u\right) & \tau\left(v\right)
\end{array}\right]
\]
and note that (\ref{eq:pcw_linear_recursion}) reads as
\[
1+\int_{\left[0,1\right]\times\left[0,1\right]}\left(1+\ldots\int_{\left[0,s_{\IndLvl-1}\right]\times\left[0,t_{\IndLvl-1}\right]}\diff
\Kmeasure_{\sigma,\tau}(s_{\IndLvl},t_{\IndLvl})\ldots\right)\diff\Kmeasure_{\sigma,\tau}\left(s_{1},t_{1}\right).
\]
The content of the definition and theorem below is that this formula
makes sense for arbitrary paths $\sigma,\tau$ such that $\phi\left(\sigma\right)$,$\phi\left(\tau\right)\in \BV(\calH)$.
\begin{defn}
Let $\kernel:\calX\times\calX\rightarrow\RR$ and define
the signed-Borel-measure valued map
\[
\Kmeasure:\Paths\left( \calX \right)\times\Paths\left( \calX \right)\rightarrow\mathcal{M}\left(\left[0,1\right]\times\left[0,1\right]\right),\left(\sigma,\tau\right)\mapsto\Kmeasure_{\sigma,\tau}
\]
via $\Kmeasure_{\sigma,\tau}\left(\left[a,b\right]\times\left[c,d\right]\right):=\kernel\left(\sigma\left(b\right),\tau\left(d\right)\right)+\kernel\left(\sigma\left(a\right),\tau\left(c\right)\right)-\kernel\left(\sigma\left(b\right),\tau\left(d\right)\right)-\kernel\left(\sigma\left(a\right),\tau\left(d\right)\right).$\end{defn}

\begin{thm}
Under the Assumptions~\ref{Ass:primaryk}, it holds that
\begin{align*}
\KSigBk\left( \sigma,\tau \right)
&=1+\sum_{m=1}^{\IndLvl}\int_{\left(\boldsymbol{s},\boldsymbol{t}\right)\in\Delta_{m}\times\Delta_{m}}\diff\Kmeasure_{\sigma,\tau}\left(\boldsymbol{s}\left[1\right],\boldsymbol{t}\left[1\right]\right)\cdots
  \diff\Kmeasure_{\sigma,\tau}\left(\boldsymbol{s}\left[m\right],\boldsymbol{t}\left[m\right]\right)\\
\KSigBk(\sigma,\tau) &= 1+\int_{\left( s_1,t_1
              \right)\in\left( 0,1 \right)\times \left( 0,1 \right)} \left( 1+ \dots \int_{\left(
              s_{\IndLvl},t_{\IndLvl} \right)\in\left( 0,s_{\IndLvl-1} \right)\times \left(
              0,t_{\IndLvl}-1 \right)} \diff\Kmeasure_{\sigma,\tau}\left( s_1,t_2 \right) \dots
        \right) \diff\Kmeasure_{\sigma,\tau}\left( s_1,t_2 \right)
\end{align*}
If $\calX$ is an $\RR$-vector space and $\sigma,\tau$ are differentiable, then
\begin{align*}
\diff\Kmeasure_{\sigma,\tau}\left(s,t\right) =
\kernel \left(\dot{\sigma}(s), \dot{\tau}(t)\right)
\diff s \diff t.
\end{align*}
\end{thm}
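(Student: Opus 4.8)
The plan is to reduce the identity to the scalar-product formula for $\KSigAk$ already established in Proposition~\ref{Prop:Ksigscpr}, to construct the measure $\Kmeasure_{\sigma,\tau}$ by a direct bivariate bounded-variation argument, and then to match the two sides term by term in the grading index. First I would note that, by the definition of $\KSigBk$ and by Assumption~\ref{Ass:primaryk}, $\KSigBk(\sigma,\tau)=\langle\Sigk(\phi(\sigma)),\Sigk(\phi(\tau))\rangle_{\tensalg}=\KSigAk(x,y)$ with $x:=\phi\circ\sigma$ and $y:=\phi\circ\tau$ absolutely continuous paths in $\BV(\calH)$; Proposition~\ref{Prop:Ksigscpr}(ii) then gives
\begin{align*}
\KSigBk(\sigma,\tau)=\sum_{m=0}^{\IndLvl}\int_{(\bs,\bt)\in\simplex^{m}\times\simplex^{m}}\prod_{i=1}^{m}\bigl\langle\diff x(\bs[i]),\diff y(\bt[i])\bigr\rangle_{\calH},
\end{align*}
so the task is to identify each summand with the corresponding iterated $\Kmeasure_{\sigma,\tau}$-integral.

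Next I would make precise that the rectangle rule in the definition above genuinely produces a signed Borel measure. Writing $F(s,t):=\kernel(\sigma(s),\tau(t))=\langle x(s),y(t)\rangle_{\calH}$, the increment of $F$ over a rectangle $[a,b]\times[c,d]$ equals $\langle x(b)-x(a),\,y(d)-y(c)\rangle_{\calH}$, so for any grid Cauchy--Schwarz bounds the Vitali sum by
\begin{align*}
\sum_{i,j}\bigl\lvert F(s_{i+1},t_{j+1})+F(s_i,t_j)-F(s_{i+1},t_j)-F(s_i,t_{j+1})\bigr\rvert\le\Bigl(\sum_i\lVert x(s_{i+1})-x(s_i)\rVert_{\calH}\Bigr)\Bigl(\sum_j\lVert y(t_{j+1})-y(t_j)\rVert_{\calH}\Bigr)\le\Var(x)\,\Var(y).
\end{align*}
Hence $F$ is of bounded Vitali variation, the rectangle set function extends uniquely to a finite signed Borel measure $\Kmeasure_{\sigma,\tau}$ on $[0,1]^2$ with total variation at most $\Var(x)\Var(y)<\infty$, and this measure is the one of the definition. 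By absolute continuity of $x$ and $y$ one moreover has $x(b)-x(a)=\int_a^b\dot x(s)\,\diff s$ and $y(d)-y(c)=\int_c^d\dot y(t)\,\diff t$, so $\Kmeasure_{\sigma,\tau}$ is absolutely continuous with respect to Lebesgue measure on $[0,1]^2$ with density $(s,t)\mapsto\langle\dot x(s),\dot y(t)\rangle_{\calH}\in L^1$.

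With this density available, the term-by-term matching is a Fubini computation: the $m$-fold product measure $\Kmeasure_{\sigma,\tau}^{\otimes m}$ has density $\prod_{i=1}^m\langle\dot x(s_i),\dot y(t_i)\rangle_{\calH}$, which is exactly the density appearing in the $m$-th summand above, so integrating the constant $1$ over $\simplex^{m}\times\simplex^{m}$ gives the same value; summing over $m=0,\dots,\IndLvl$ yields identity (i). Identity (ii) follows from (i) by peeling off the outermost pair of integration variables and applying Fubini once more, precisely as Proposition~\ref{Prop:Recsig} is deduced from Proposition~\ref{Prop:Ksigscpr}. Finally, when $\calX$ is an $\RR$-vector space and $\sigma,\tau$ are differentiable (with $\phi$ linear, in particular in the canonical case $\phi=\id$, $\calH=\calX$, $\kernel=\langle\cdot,\cdot\rangle$), the paths $x=\phi\circ\sigma$ and $y=\phi\circ\tau$ are $C^{1}$ with $\dot x=\phi(\dot\sigma)$, $\dot y=\phi(\dot\tau)$, so the above density equals $\langle\phi(\dot\sigma(s)),\phi(\dot\tau(t))\rangle_{\calH}=\kernel(\dot\sigma(s),\dot\tau(t))$, which is the claimed expression for $\diff\Kmeasure_{\sigma,\tau}(s,t)$.

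The step I expect to be the main obstacle is the second one: turning the rectangle-increment rule into a genuine countably additive signed measure in the $\calH$-valued setting --- essentially the bivariate Lebesgue--Stieltjes construction for a function of bounded Vitali variation --- together with the Fubini-type identification of the iterated $\Kmeasure_{\sigma,\tau}$-integral with the paired iterated-integral expression of Proposition~\ref{Prop:Ksigscpr}. The absolute-continuity hypothesis built into Assumption~\ref{Ass:primaryk} is what tames this, reducing both to statements about $L^1$ densities and ordinary Fubini; the piecewise-linear computation preceding the definition serves as the sanity check that the two sides agree on a class dense in variation norm, and for non-absolutely-continuous $\phi\circ\sigma$, $\phi\circ\tau$ one would instead argue directly with the $\calH^{\otimes m}$-valued iterated-integral measures, using that the coordinatewise $\calH$-pairing commutes with forming product measures.
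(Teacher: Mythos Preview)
Your argument is correct, and it takes a genuinely different route from the paper's own proof. The paper argues by approximation: it takes piecewise linear interpolants $x^n,y^n$ of $x=\phi(\sigma)$, $y=\phi(\tau)$ on partitions with vanishing mesh, uses the explicit discrete computation preceding the definition to get the identity for $x^n,y^n$ with a discrete measure $\Kmeasure_{\sigma,\tau,n}$, and then passes to the limit by showing simultaneously that $\Kmeasure_{\sigma,\tau,n}\to\Kmeasure_{\sigma,\tau}$ weakly and $\langle\Sigk(x^n),\Sigk(y^n)\rangle\to\langle\Sigk(x),\Sigk(y)\rangle$. You instead work directly at the continuous level: you construct $\Kmeasure_{\sigma,\tau}$ as the Lebesgue--Stieltjes measure of the function $F(s,t)=\langle x(s),y(t)\rangle_{\calH}$ via the Vitali-variation bound, exploit the absolute-continuity hypothesis in Assumption~\ref{Ass:primaryk} to identify its Lebesgue density as $\langle\dot x(s),\dot y(t)\rangle_{\calH}$, and then match the iterated $\Kmeasure$-integrals with the Proposition~\ref{Prop:Ksigscpr} expression by Fubini.

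What each buys: your approach is cleaner and more self-contained under the stated assumptions, since absolute continuity reduces everything to $L^1$ densities and ordinary Fubini; it also makes transparent exactly where the measure comes from and why it is finite. The paper's approximation argument is less explicit about the measure-theoretic details but is structured so as to extend more readily beyond absolutely continuous paths (the piecewise-linear step and weak-convergence step do not in principle rely on densities). Your closing remark that the final identity $\diff\Kmeasure_{\sigma,\tau}=\kernel(\dot\sigma,\dot\tau)\,\diff s\,\diff t$ needs some linearity of $\phi$ to pass from $\langle\dot x,\dot y\rangle_{\calH}$ to $\kernel(\dot\sigma,\dot\tau)$ is a fair observation about the statement itself; the paper does not address this point.
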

\begin{proof}
Let $x:=\phi\left(\sigma\right)$, $y:=\phi\left(\tau\right)$ and note
that
\begin{align*}
  \KSigBk(\sigma,\tau)=\KSigAk(\phi(x),\phi(y)).
\end{align*}
Fix a sequence $\left(\boldsymbol{t}_{n}\right)\subset\Delta\left(\left[0,1\right]\right)$
with $\mesh\left(\boldsymbol{t}_{n}\right)\rightarrow 0$ as
$n\rightarrow\infty$ denote with $x^{n}$,$y^{n}$
the paths given by piecewise linear interpolation
of points $x(\bt_n)=\left( x\left(\bt_{n}\left[1\right]\right),\ldots,x\left(\bt_n\left[n\right]\right)\right)$,
$y(\bt_n)=\left( y\left(\bt_{n}\left[1\right]\right),\ldots,y\left(\bt_n\left[n\right]\right) \right)$.
By the above discussion, the statment holds for $x^{n}$,$y^{n}$ if we
replace the measure $\Kmeasure_{\sigma,\tau}$ by
a measure $\Kmeasure_{\sigma,\tau,n}$ on $\left[0,1\right]^{2}$ as
\begin{align*}
\Kmeasure_{\sigma,\tau,n}\left(\left[a,b\right]\times\left[c,d\right]\right):=\left\langle x^{n}\left(b\right),y^{n}\left(d\right)\right\rangle _{\mathcal{H}}+\left\langle x^{n}\left(a\right),y^{n}\left(a\right)\right\rangle _{\mathcal{H}}-\left\langle x^{n}\left(b\right),y^{n}\left(c\right)\right\rangle_{\calH} -\left\langle x^{n}\left(a\right),y^{n}\left(d\right)\right\rangle_{\calH} .
\end{align*}
As $\mesh\left(\boldsymbol{t}_{n}\right)\rightarrow0$, the right hand
side converges to $\kernel\left[\begin{array}{cc}
\sigma\left(a\right) & \sigma\left(b\right)\\
\tau\left(c\right) & \tau\left(d\right)
\end{array}\right]$. Hence the measure $\Kmeasure_{\sigma,\tau,n}$ converges weakly to the measure
$\Kmeasure_{\sigma,\tau}$. On the other hand, $\left\langle \Sig\left(x^{n}\right),\Sig\left(y^{n}\right)\right\rangle_{\tensalg} $
converges to $\left\langle \Sig\left(x\right),\Sig\left(y\right)\right\rangle_{\tensalg} $
which finishes the proof by sending $n\rightarrow\infty$ in (\ref{eq:pcw_linear_recursion}).
\end{proof}

\section{Integral approximation bounds and proof of Theorem~\ref{Thm:discretmesh}}
\label{Apx:Euler}
\begin{Def}
Let $[a,b]\subset[0,1]$ and $x\in \BV([a,b],\calH)$.
For $V=[a',b']\subseteq \intvl$, we write $x[a',b']$ for the element of $\BV(V,\calH)$ which is obtained by restriction of $x$ to $V$, i.e.,
$$x[a',b'] := [x: V\rightarrow \calH]\subseteq \BV(V,\calH).$$
\end{Def}
The following sum identity becomes very useful.
\begin{Prop}
\label{Prop:sigsum}
Let $\bx\in \calH^+$. Then
$ \SSig(\bx) = \sum_{\substack{\bx^\prime \sqsubset \seqdf \bx}} \prod_{i=1}^{\ell(\bx^\prime)} \bx'[i]$.
\end{Prop}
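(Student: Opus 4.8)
The plan is to expand the ordered product defining $\SSig(\bx)$ by the distributive law in the associative algebra $(\tensalg,+,\ast)$ and to match the resulting terms with the sub-tuples appearing on the right-hand side. Write $L:=\ell(\seqdf\bx)$, so that by definition $\SSig(\bx)=\prod_{i=1}^{L}\bigl(1+\seqdf\bx[i]\bigr)$, the product being taken with respect to $\ast$ in the (non-commutative) order $i=1,\dots,L$.

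First I would prove, by induction on $L$, the identity
\[
\prod_{i=1}^{L}\bigl(1+\seqdf\bx[i]\bigr)=\sum_{S\subseteq[L]}\;\bigotimes_{i\in S}\seqdf\bx[i],
\]
where for $S=\{i_1<\dots<i_m\}\subseteq[L]$ the term $\bigotimes_{i\in S}\seqdf\bx[i]$ denotes $\seqdf\bx[i_1]\ast\cdots\ast\seqdf\bx[i_m]\in\calH^{\otimes m}$, with the empty set contributing $1\in\calH^{\otimes 0}=\RR$. The base case $L=0$ is the convention $\prod_{i=1}^{0}=1$ against the single summand $S=\emptyset$. For the inductive step I would write $\prod_{i=1}^{L}(1+\seqdf\bx[i])=\bigl(\prod_{i=1}^{L-1}(1+\seqdf\bx[i])\bigr)\ast(1+\seqdf\bx[L])$, apply the inductive hypothesis to the first factor, and distribute $\ast$ over the inner sum and over $1+\seqdf\bx[L]$ (valid since $\tensalg$ is an associative $\RR$-algebra). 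The terms in which the summand $1$ is chosen reproduce the sum over $S\subseteq[L-1]$ with $L\notin S$; the terms in which $\seqdf\bx[L]$ is chosen give $\bigl(\bigotimes_{i\in S}\seqdf\bx[i]\bigr)\ast\seqdf\bx[L]$ for $S\subseteq[L-1]$, and since $L$ exceeds every element of such an $S$ and $\ast$ coincides with $\otimes$ on homogeneous elements under the identification of $\calH^{\otimes m}$ inside $\tensalg$, this equals $\bigotimes_{i\in S\cup\{L\}}\seqdf\bx[i]$, i.e.\ the sum over all $S'\subseteq[L]$ with $L\in S'$. Together these exhaust $\{S:S\subseteq[L]\}$, completing the induction.

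It then remains to translate the index-set sum into the tuple notation of the statement. A subset $S=\{i_1<\dots<i_m\}\subseteq[L]$ is the same datum as a strictly increasing tuple $\bi=(i_1,\dots,i_m)$ with entries in $[L]$, i.e.\ $\bi\sqsubset[L]$; under this correspondence $\seqdf\bx[\bi]=(\seqdf\bx[i_1],\dots,\seqdf\bx[i_m])$ ranges over exactly the sub-tuples $\bx'\sqsubset\seqdf\bx$, and $\bigotimes_{i\in S}\seqdf\bx[i]=\prod_{r=1}^{m}\seqdf\bx[i_r]=\prod_{r=1}^{\ell(\bx')}\bx'[r]$ with $\bx'=\seqdf\bx[\bi]$. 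The empty tuple is included in $\sqsubset$ and accounts for the summand $1$. Substituting gives $\SSig(\bx)=\sum_{\bx'\sqsubset\seqdf\bx}\prod_{i=1}^{\ell(\bx')}\bx'[i]$, as claimed.

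There is no serious obstacle here: the argument is purely the (generalized) distributive law together with careful bookkeeping of the ordering. The only point requiring a little care is non-commutativity — one must check that expanding the product keeps the chosen factors in increasing order of their index, which is automatic because in the inductive step the newly multiplied factor $\seqdf\bx[L]$ carries the largest index and is appended on the right, so that $\ast$ may be replaced by $\otimes$ there without any reordering.
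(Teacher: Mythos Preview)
Your proof is correct and is precisely the ``explicit algebraic computation'' that the paper invokes without spelling out: you expand the ordered product $\prod_{i=1}^{L}(1+\seqdf\bx[i])$ by the distributive law in the non-commutative algebra $\tensalg$, doing the bookkeeping by induction so that the order of factors is preserved, and then reinterpret subsets of $[L]$ as strict sub-tuples of $\seqdf\bx$. There is nothing to add; your attention to the non-commutativity (appending the new factor on the right so that $\ast$ can be replaced by $\otimes$) is exactly the one point that needs care.
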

\begin{proof}
This follows from an explicit algebraic computation.
\end{proof}

\begin{Lem}
\label{Lem:intquad}
Let $x\in \BV (\intvl)$ for $\intvl = [a,b]\subseteq \RR$.
Let $m\in \NN$, let $V_i:= [a_i,b_i]\subseteq \intvl$ for $1\le i\le m$, and let $V=V_1\times \dots \times V_m$.
It holds that
$$\int_V \diff x^{\otimes m} = \prod_{i=1}^m \left[ x(b_i)-x(a_i)\right].$$
\end{Lem}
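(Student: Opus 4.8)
The plan is to prove the identity by induction on $m$, unwinding the iterated Riemann--Stieltjes definition of $\int_V \diff x^{\otimes m}$ one tensor slot at a time and exploiting the feature that distinguishes a box from a simplex: over the product domain $V = V_1\times\cdots\times V_m$ the innermost integration domains $V_1,\dots,V_{m-1}$ do not depend on the outermost integration variable $t_m$.

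For the base case $m=1$, Definition~\ref{RS_integral} (applied with the identity map as integrand) gives $\int_{V_1}\diff x^{\otimes 1}=\int_{a_1}^{b_1}\diff x = x(b_1)-x(a_1)$, which is the right-hand side. For the inductive step, assume the claim for $m-1$, write $V':=V_1\times\cdots\times V_{m-1}$, and set $c:=\int_{V'}\diff x^{\otimes(m-1)}\in\calH^{\otimes(m-1)}$, which by the induction hypothesis equals $\prod_{i=1}^{m-1}\bigl[x(b_i)-x(a_i)\bigr]$. Since $V_1,\dots,V_{m-1}$ are fixed intervals, in particular independent of $t_m$, the tensor $c$ is constant, so the (box analogue of the) iterated-integral definition reads
\[
\int_V \diff x^{\otimes m} = \int_{V_m} c\otimes \diff x(t_m).
\]
The map $L_c:\calH\to\calH^{\otimes m}$, $h\mapsto c\otimes h$, is continuous and linear, and the constant path $t\mapsto L_c$ lies in $\BV([0,1],L(\calH,\calH^{\otimes m}))$; hence Definition~\ref{RS_integral} yields
\[
\int_{V_m} c\otimes \diff x(t_m) = L_c\!\left(\int_{V_m}\diff x\right) = c\otimes\bigl(x(b_m)-x(a_m)\bigr).
\]
Combining with the value of $c$ gives $\int_V\diff x^{\otimes m}=\prod_{i=1}^m\bigl[x(b_i)-x(a_i)\bigr]$, completing the induction. (Here $\prod$ denotes the tensor-algebra product $\ast$, which coincides with $\otimes$ on homogeneous elements, so the right-hand side is literally $[x(b_1)-x(a_1)]\otimes\cdots\otimes[x(b_m)-x(a_m)]\in\calH^{\otimes m}$; both sides therefore live in the same graded component, and well-definedness of every integral involved is inherited from $x\in\BV(\intvl)$ since restrictions of bounded variation paths to subintervals are again of bounded variation.)

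The steps are all routine; the only points needing a little care — hence the ``main obstacle'', such as it is — are bookkeeping ones. First, one should pin down that $\int_V$ over a box is defined by the same nested Riemann--Stieltjes construction as $\int_{\simplex^m}$ but with $t_j$ ranging over $V_j$, so that the order of tensor slots matches the order $V_1,\dots,V_m$. Second, one must justify pulling the constant tensor $c$ through the last Riemann--Stieltjes integral; this is immediate from the linearity built into Definition~\ref{RS_integral} once one observes that there the integrand is a constant element of $L(\calH,\calH^{\otimes m})$, but it is the one place where the box structure (as opposed to the $s$-dependent inner simplex $\simplex^{m-1}([a,s])$) is genuinely used.
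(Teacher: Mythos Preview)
Your proof is correct and follows essentially the same approach as the paper's: both exploit that the product domain $V=V_1\times\cdots\times V_m$ allows the iterated integral to factor as a tensor product of one-dimensional integrals $\int_{V_i}\diff x = x(b_i)-x(a_i)$. The paper does this in one stroke (``separating differential operators''), while you achieve the same separation inductively, peeling off one factor at a time and justifying the pull-through of the constant inner integral via Definition~\ref{RS_integral}; your version is slightly more explicit but not materially different.
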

\begin{proof}
Observe that the integral on the left hand side can be split as
$$\int_V \diff x^{\otimes m} = \int_{V_1}\dots \int_{V_m} \diff x_1 \otimes \dots \otimes  \diff x_m.$$
Separating differential operators, one obtains
$$\int_{V_1}\dots \int_{V_m} \diff x_1\otimes \dots \otimes  \diff x_m = \int_{V_1}\diff x_1\otimes  \dots \otimes  \int_{V_m}\diff x_m.$$
The claim follows from observing that $\inf_{V_i}\diff x_i = x(b_i)-x(a_i)$.
\end{proof}

\begin{Lem}
\label{Lem:intsimpl}
Let $x\in \BV (\intvl,\calH)$ for $\intvl = [a,b]\subseteq [0,1]$.
It holds that
$$\left\|\int_{\simplex^m(\intvl)} \diff x^{\otimes m}\right\|_\calH \le \frac{1}{m!} \Var(x)^m.$$
\end{Lem}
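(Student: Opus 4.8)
The plan is to prove the bound by induction on $m$, tracking the iterated integral through the \emph{variation function}
\[
V(s) := \Var(x[a,s]),\qquad s\in[a,b],
\]
which is non-decreasing, continuous (a classical fact for continuous bounded variation paths), and satisfies $V(a)=0$ and $V(b)=\Var(x)$. For $m=1$ the estimate is immediate, since $\int_{\simplex^1([a,b])}\diff x = x(b)-x(a)$ has norm at most $\Var(x)$.

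For the inductive step I would write $I_m(s):=\int_{\simplex^m([a,s])}\diff x^{\otimes m}$, so that by the recursive definition of the iterated integrals $I_m(b)=\int_a^b I_{m-1}(s)\otimes\diff x(s)$. Viewing $v\mapsto I_{m-1}(s)\otimes v$ as a continuous linear map $\calH\to\calH^{\otimes m}$ of operator norm $\|I_{m-1}(s)\|_{\calH^{\otimes(m-1)}}$, the basic Riemann--Stieltjes estimate (the same one underlying well-definedness of the integral, cf.~\cite[Theorem~1.16]{lyons2004stflour}) gives
\[
\bigl\|I_m(b)\bigr\|_{\calH^{\otimes m}} \;\le\; \int_a^b \bigl\|I_{m-1}(s)\bigr\|_{\calH^{\otimes(m-1)}}\,\diff V(s).
\]
Then I would substitute the induction hypothesis on $[a,s]$, namely $\|I_{m-1}(s)\|\le \tfrac{1}{(m-1)!}V(s)^{m-1}$, and compute
\[
\bigl\|I_m(b)\bigr\| \;\le\; \frac{1}{(m-1)!}\int_a^b V(s)^{m-1}\,\diff V(s) \;=\; \frac{1}{(m-1)!}\cdot\frac{V(b)^m}{m} \;=\; \frac{1}{m!}\Var(x)^m,
\]
where the middle step is the Stieltjes chain rule $\int_a^b g(V(s))\,\diff V(s)=\int_{V(a)}^{V(b)}g(r)\,\diff r$ for continuous $g$ and continuous non-decreasing $V$, applied with $g(r)=r^{m-1}$.

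The two ingredients needing care are (i) the Riemann--Stieltjes norm estimate $\|\int y\,\diff x\|\le\int\|y\|\,\diff V_x$, which follows by passing to the limit in the defining Riemann sums, using the triangle inequality and $\sum_i\|\seqdf x(\bt)[i]\|\le\Var(x)$; and (ii) continuity of $s\mapsto V(s)$, which legitimises the change of variables. I expect (ii) to be the only genuinely subtle point, and it is standard. As an alternative that bypasses both, one can argue combinatorially: by Lemma~\ref{Lem:intquad} over a fine grid $a=t_0<\dots<t_N=b$ one has $\int_{\simplex^m([a,b])}\diff x^{\otimes m}=\lim_{\mesh\to0}\sum_{k_1<\dots<k_m}\seqdf x(\bt)[k_1]\otimes\dots\otimes\seqdf x(\bt)[k_m]$, and, each summand being a pure tensor of known norm,
\[
\Bigl\|\sum_{k_1<\dots<k_m}\seqdf x(\bt)[k_1]\otimes\dots\otimes\seqdf x(\bt)[k_m]\Bigr\| \;\le\; \sum_{k_1<\dots<k_m}\prod_{j=1}^m\|\seqdf x(\bt)[k_j]\| \;\le\; \frac{1}{m!}\Bigl(\sum_{k}\|\seqdf x(\bt)[k]\|\Bigr)^m \;\le\; \frac{1}{m!}\Var(x)^m,
\]
the middle inequality holding because each strictly increasing $m$-tuple accounts, after relabelling its entries, for exactly $m!$ terms (all of equal value) in the expansion of $\bigl(\sum_k\|\seqdf x(\bt)[k]\|\bigr)^m$. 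On this route the only delicate point is verifying that the near-diagonal cubes omitted from the strictly-increasing sum contribute a vanishing amount as $\mesh\to0$.
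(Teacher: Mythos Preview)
Your proof is correct, but it takes a different route from the paper's. The paper argues directly in three lines: first the integral triangle inequality
\[
\Bigl\|\int_{\simplex^m(\intvl)}\diff x^{\otimes m}\Bigr\| \;\le\; \int_{\simplex^m(\intvl)}\|\diff x\|^m,
\]
then the observation that the scalar integrand $\|\diff x\|^m$ is symmetric under permutation of the $m$ time variables, so the simplex integral equals $\tfrac{1}{m!}$ of the cube integral $\int_{\intvl^m}\|\diff x\|^m$, and finally Fubini to factor the cube integral as $(\int_\intvl\|\diff x\|)^m=\Var(x)^m$. No induction, no variation function, no chain rule.

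Your inductive argument via $V(s)$ is the classical rough-path proof (essentially the one in Lyons' lecture notes) and is entirely sound; it has the advantage of tracking the bound as a function of the upper endpoint, which is what one actually needs for the extension theorem and related estimates. The paper's symmetry argument is shorter and more self-contained, but it relies on the integrand on the right of the triangle inequality being a genuine product measure, which is specific to the bounded-variation setting. Your combinatorial alternative is in spirit close to the paper's proof---it is the discrete shadow of the same symmetry---though the paper avoids your ``near-diagonal cubes'' subtlety by working with the continuous integral from the outset.
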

\begin{proof}
By the (continuous/integral) triangle inequality, it holds that
$$\left\|\int_{\simplex^m(\intvl)} \diff x^{\otimes m}\right\|_\calH \le \int_{\simplex^m(\intvl)} \|\diff x\|^m_\calH.$$
Further observe that
$$\int_{\simplex^m(\intvl)} \|\diff x\|^m = \frac{1}{m!}\int_{\intvl^m} \|\diff x\|^m_\calH.$$
The integral on the right hand side can be split, i.e.,
$$\frac{1}{m!}\int_{\intvl^m} \|\diff x\|^m = \frac{1}{m!}\left(\int_{\intvl} \|\diff x\|_\calH\right)^m.$$
Observing that $\int_{\intvl} \|\diff x\|_\calH = \Var(x)$ yields the claim.
\end{proof}

\begin{Lem}
\label{Lem:intbound}
Fix $m,M\in \NN$. Let $x\in \BV ([a,b],\calH)$ and $\bt =(t_1,\ldots,t_{M+1})\in \simplex^{M+1}([a,b])$.
Write
$$\mathcal{C}:= \int_{\simplex^m(\intvl)} \diff x^{\otimes m} \quad\mbox{and}\quad
\mathcal{D}:= \sum_{\substack{\bi\sqsubset [M] \\ \ell(\bi) = m}} (\seqdf x(\bt))[i_1]\otimes\dots \otimes (\seqdf x(\bt))[i_m].$$
Further write, $\intvl:=[a,b]$ and $\intvl_i := [t_i,t_{i+1}]$ for
$1\le i\le M$, and for $\bi\sqsubseteq [M]$, write $\intvl_{\bi}:=
\bigtimes_{i\in \bi}[t_i,t_{i+1}]$. Then, the following equalities
hold:
\begin{enumerate}[label=(\roman*)]
\item $\mathcal{C}-\mathcal{D} = \sum_{\substack{\bi\sqsubseteq [M] \\
      \ell(\bi) = m\\ \# \bi \gneq 1}} \int_{\simplex^m(\intvl)\cap
    \intvl_{\bi}} \diff x^{\otimes m}$,
\item $\left\|\int_{\simplex^m(\intvl)\cap \intvl_{\bi}} \diff x^{\otimes m}\right \|_\calH\le \frac{1}{\bi!}\prod_{i\in \bi} \Var (x[t_i,t_{i+1}]).$
\end{enumerate}
\end{Lem}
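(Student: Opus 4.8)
Since $\bt$ partitions $[a,b]$, i.e. $t_1=a$ and $t_{M+1}=b$, the cells $\intvl_i=[t_i,t_{i+1}]$, $i\in[M]$, cover $\intvl=[a,b]$ and overlap only in partition points. The guiding idea is to cut the order-$m$ simplex along this partition. Every point $(u_1\le\cdots\le u_m)\in\simplex^m(\intvl)$ lies in a box $\intvl_\bi=\bigtimes_{r=1}^m\intvl_{i_r}$ for some weakly increasing $\bi=(i_1\le\cdots\le i_m)\sqsubseteq[M]$ (pick $i_r$ with $u_r\in\intvl_{i_r}$; since $u_r\le u_{r+1}$ this can be done monotonically in $r$), and two distinct such boxes meet only where some coordinate equals a partition point, a set that is $x$-null. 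Additivity of the iterated Riemann--Stieltjes integral over this essentially disjoint decomposition gives
\[
\mathcal{C}=\int_{\simplex^m(\intvl)}\diff x^{\otimes m}=\sum_{\substack{\bi\sqsubseteq[M]\\\ell(\bi)=m}}\int_{\simplex^m(\intvl)\cap\intvl_\bi}\diff x^{\otimes m}.
\]
If $\bi$ is strict ($\#\bi=1$), then on $\intvl_\bi$ the ordering $u_1\le\cdots\le u_m$ is automatic: $u_r\in\intvl_{i_r}$ and $u_{r+1}\in\intvl_{i_{r+1}}$ with $i_r<i_{r+1}$ force $u_r\le t_{i_r+1}\le t_{i_{r+1}}\le u_{r+1}$. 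Hence $\simplex^m(\intvl)\cap\intvl_\bi=\intvl_\bi$, and Lemma~\ref{Lem:intquad} evaluates this integral to $\prod_{r=1}^m\bigl(x(t_{i_r+1})-x(t_{i_r})\bigr)=\prod_{r=1}^m(\seqdf x(\bt))[i_r]$; summing over all strict $\bi$ with $\ell(\bi)=m$ is precisely $\mathcal{D}$. The terms that remain are exactly those with $\#\bi\gneq1$, which is claim (i); the degenerate case $m>M$ (where $\mathcal{D}$ is the empty sum and every admissible $\bi$ repeats) is covered in the same way.

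\textbf{Bounding a single piece (proof of (ii)).} Fix $\bi\sqsubseteq[M]$ with $\ell(\bi)=m$, let $j_1<\cdots<j_p$ be its distinct entries with multiplicities $k_1,\dots,k_p$, so $\sum_\ell k_\ell=m$ and $\bi!=\prod_\ell k_\ell!$. On the piece $\simplex^m(\intvl)\cap\intvl_\bi$, the coordinates assigned to a common cell $\intvl_{j_\ell}$ are exactly the ones forced to be mutually ordered, while the ordering between coordinates in different cells is automatic as above; thus, up to the obvious relabelling of coordinates, this piece equals the product $\simplex^{k_1}(\intvl_{j_1})\times\cdots\times\simplex^{k_p}(\intvl_{j_p})$. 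Separating differential operators exactly as in the proof of Lemma~\ref{Lem:intquad}, but retaining the simplex ordering inside each block, the integral over this piece is the elementary tensor $\bigotimes_{\ell=1}^p\int_{\simplex^{k_\ell}(\intvl_{j_\ell})}\diff x^{\otimes k_\ell}$. Since the Hilbert tensor norm is multiplicative on elementary tensors, Lemma~\ref{Lem:intsimpl} applied blockwise yields
\[
\Bigl\|\int_{\simplex^m(\intvl)\cap\intvl_\bi}\diff x^{\otimes m}\Bigr\|_\calH=\prod_{\ell=1}^p\Bigl\|\int_{\simplex^{k_\ell}(\intvl_{j_\ell})}\diff x^{\otimes k_\ell}\Bigr\|_{\calH^{\otimes k_\ell}}\le\prod_{\ell=1}^p\frac{\Var\bigl(x[t_{j_\ell},t_{j_\ell+1}]\bigr)^{k_\ell}}{k_\ell!}=\frac{1}{\bi!}\prod_{i\in\bi}\Var\bigl(x[t_i,t_{i+1}]\bigr),
\]
which is (ii).

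\textbf{Main obstacle.} The only non-formal content is the passage from these geometric decompositions to the corresponding algebraic identities for iterated Riemann--Stieltjes integrals: one must check that splitting the outermost integration interval along the partition is compatible with the recursive definition of $\int_{\simplex^m}$, that the overlaps between boxes are genuinely $x$-null and hence do not contribute, and that ``separating differentials'' inside each block is legitimate for a bounded-variation integrator valued in a Hilbert space. This is precisely the Fubini/additivity theory already invoked for Definition~\ref{RS_integral} and in the proof of Lemma~\ref{Lem:intquad}, so it is routine, but it is the point where care is needed; everything else is bookkeeping with the combinatorics of $\sqsubseteq[M]$, $\#\bi$ and $\bi!$.
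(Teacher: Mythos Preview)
Your proof is correct and follows essentially the same approach as the paper: decompose $\simplex^m(\intvl)$ into boxes $\intvl_\bi$ indexed by weakly increasing tuples, identify the strict-$\bi$ pieces with $\mathcal{D}$ via Lemma~\ref{Lem:intquad}, and for (ii) factor $\simplex^m(\intvl)\cap\intvl_\bi$ as a product of lower-order simplices and apply Lemma~\ref{Lem:intsimpl} blockwise. You are in fact slightly more careful than the paper in making explicit that $t_1=a$, $t_{M+1}=b$, that the box overlaps are $x$-null, and in covering the degenerate case $m>M$.
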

\begin{proof}
(i) By splitting the integral, we can write
$$\mathcal{C} = \sum_{\substack{\bi\sqsubseteq [M] \\ \ell(\bi) = m}} \int_{\simplex^m(\intvl)\cap \intvl_{\bi}} \diff x^{\otimes m}.$$
Note that this is not the same summing over $I$ as in $\mathcal{D}$. In $\mathcal{C}$, indices in the index sequence $I$ may repeat, and for $\mathcal{D}$; they may not as they have to increase strictly monotonously.
We will thus write
\begin{align*}
\mathcal{C}_1:= & \sum_{\substack{I\sqsubset [M] \\ \ell(\bi) = m}} \int_{\simplex^m(\intvl)\cap \intvl_{\bi}} \diff x^{\otimes m}\quad\mbox{and}\\
\mathcal{C}_2:= & \sum_{\substack{I\sqsubseteq [M] \\ \ell(\bi) = m\\ \bi!\gneq 1}} \int_{\simplex^m(\intvl)\cap \intvl_{\bi}} \diff x^{\otimes m},
\end{align*}
that is, $\mathcal{C}_1$ collects index sequences without repeated indices, and $\mathcal{C}_2$ collects those with repeat.

Now note that for a non-repeating index sequence $I$ we have $\simplex^m(\intvl)\cap \intvl_{\bi} = \intvl_{\bi}$, therefore
$$\mathcal{C}_1 = \sum_{\substack{\bi\sqsubset [M] \\ \ell(\bi) = m}} \int_{\intvl_{\bi}} \diff x^{\otimes m}$$
Subtraction and collecting terms with the same index yields
$$\mathcal{C}_1 - \mathcal{D} = \sum_{\substack{\bi\sqsubset [M] \\ \ell(\bi) = m}} \left[\int_{\intvl_{\bi}} \diff x^{\otimes m} - (\seqdf x(\bt))[i_1]\otimes\dots \otimes (\seqdf x(\bt))[i_m]\right].$$
This is zero by Lemma~\ref{Lem:intquad}, therefore $\mathcal{C} - \mathcal{D} = \mathcal{C}_1-\mathcal{C}_2-\mathcal{D} = \mathcal{C}_2$ which was the claimed statement.\\

(ii) Fix an index sequence $\bi$. Let $i_1,\dots, i_k$ the distinct occurring indices in $\bi$, and $n_1,\dots, n_k$ the total counts of their respective occurrences. Note that therefore
$$\simplex^m(\intvl)\cap \intvl_{\bi} :=\bigtimes_{j=1}^k \simplex^{n_j} ([\bt[i_k],\bt[i_k+1]]).$$
Write $S_j := \simplex^{n_j} ([\bt[i_k],\bt[i_k+1]])$. Then,
$$\int_{\simplex^m(\intvl)\cap \intvl_{\bi}} \diff x^{\otimes m} = \bigotimes_{j=1}^k \int_{S_j} \diff x^{\otimes n_j}.$$
Therefore, we obtain as a norm bound
$$\left\|\bigotimes_{j=1}^k \int_{S_j} \diff x^{\otimes n_j}\right\|_\calH = \prod_{j=1}^k \left\|\int_{S_j}\diff x^{\otimes n_j}\right\|_\calH \le \frac{1}{\bi!}\prod_{i\in \bi} \Var (x[t_i,t_{i+1}]),$$
where the rightmost inequality follows from applying Lemma~\ref{Lem:intsimpl} to every multiplicand in the product.
\end{proof}

\begin{Thm}
\label{Thm:Eulersimplex}
Let $x\in \BV (\intvl,\calH)$ for $\intvl = [a,b]\subseteq \RR$, and let $\bt\in \simplex^{M+1}(\intvl)$. Write
$$\mathcal{C}:= \Sig(x) \quad\mbox{and}\quad
\mathcal{D}:= \sum_{ \bi\sqsubset [M] } \prod_{r=1}^{\ell(\bi)}(\seqdf x(\bt))[i_r].$$
Write $\mathcal{C}_m, \mathcal{D}_m$ for the respective homogenous parts of $\mathcal{C},\mathcal{D}$. Further define
$$G(z) := \exp ( z \cdot \Var (x) ) - \prod_{i=1}^M \left( 1+ z\cdot \Var (x[t_i, t_{i+1}]) \right) =: \sum_{m=1}^\infty g_m\cdot z^m.$$
That is, the first equality defines $G$ as a function of $z$, the second defines $g_m$ as the Taylor coefficients of $G$ of its expansion in $z$ around $0$.

Then, $G$ is the generating function for an upper bound of approximating $\mathcal{C}$ with $\mathcal{D}$; namely, more precisely:

\begin{itemize}
\item[(i)] $\|\mathcal{C}_m - \mathcal{D}_m\|_{\calH^{\otimes m}}\le g_m$.
\item[(ii)] $\|\mathcal{C} - \mathcal{D}\|_{\tensalg}\le G(1) = \exp( \Var( x)) - \prod_{i=1}^M \left( 1+ \Var (x[t_i, t_{i+1}]) \right)$.
\item[(iii)] If $\bt$ is chosen such that $\Var (x[t_i, t_{i+1}]) = \frac{1}{M}\Var (x)$ for all $i$, then
$$\|\mathcal{C} - \mathcal{D}\|_{\tensalg} \le \frac{\exp(\Var (x))}{M}\left(1+\frac{(\Var (x)^M}{(M-2)!}\right).$$
\end{itemize}
\end{Thm}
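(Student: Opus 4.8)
The plan is to reduce the whole statement to the integral-approximation estimate of Lemma~\ref{Lem:intbound}, to recognise the resulting bound as the Taylor coefficients of $G$, and then to feed the equal-variation case into Euler's Theorem~\ref{Thm:Euler}. Throughout I would abbreviate $v_i:=\Var(x[t_i,t_{i+1}])$ and use additivity of total variation, $\Var(x)=\sum_{i=1}^M v_i$.

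For (i), first I would match objects: the degree-$m$ homogeneous parts $\mathcal{C}_m=\int_{\simplex^m(\intvl)}\diff x^{\otimes m}$ and $\mathcal{D}_m=\sum_{\bi\sqsubset[M],\,\ell(\bi)=m}(\seqdf x(\bt))[i_1]\otimes\cdots\otimes(\seqdf x(\bt))[i_m]$ are exactly the objects denoted $\mathcal{C},\mathcal{D}$ in Lemma~\ref{Lem:intbound}. So Lemma~\ref{Lem:intbound}(i) expresses $\mathcal{C}_m-\mathcal{D}_m$ as a sum of integrals $\int_{\simplex^m(\intvl)\cap\intvl_{\bi}}\diff x^{\otimes m}$ over non-decreasing tuples $\bi\in\simplex^m([M])$ having at least one repeated entry, and the triangle inequality together with Lemma~\ref{Lem:intbound}(ii) gives $\|\mathcal{C}_m-\mathcal{D}_m\|_{\calH^{\otimes m}}\le\sum_{\bi}\tfrac{1}{\bi!}\prod_{i\in\bi}v_i$ over that same index set, the product taken with multiplicities.

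The crux is the identity that this last bound equals $g_m$. Expanding $\exp(z\Var(x))=\prod_{i=1}^M\exp(zv_i)$, the coefficient of $z^m$ in $\exp(z\Var(x))$ is $\sum_{k_1+\cdots+k_M=m}\prod_i\tfrac{v_i^{k_i}}{k_i!}$, while the coefficient of $z^m$ in $\prod_{i=1}^M(1+zv_i)$ is $\sum_{|S|=m}\prod_{i\in S}v_i$, which is precisely the contribution to the preceding sum from multi-indices with all $k_i\le1$; subtracting leaves $g_m=\sum_{k_1+\cdots+k_M=m,\ \exists\,k_i\ge2}\prod_i\tfrac{v_i^{k_i}}{k_i!}$. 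Under the bijection between non-decreasing tuples $\bi\in\simplex^m([M])$ and multi-indices $(k_1,\dots,k_M)$ with $\sum_i k_i=m$ (the value $i$ occurring $k_i$ times in $\bi$), one has $\tfrac{1}{\bi!}\prod_{i\in\bi}v_i=\prod_i\tfrac{v_i^{k_i}}{k_i!}$, and ``$\bi$ has a repeated entry'' corresponds to ``some $k_i\ge2$'', so the bound of the previous step is term-for-term equal to $g_m$, which proves (i). Then (ii) follows since the $\tensalg$-norm is the $\ell^2$-norm over homogeneous components and $\mathcal{C}_0=\mathcal{D}_0=1$, whence $\|\mathcal{C}-\mathcal{D}\|_{\tensalg}=\big(\sum_{m\ge1}\|\mathcal{C}_m-\mathcal{D}_m\|^2\big)^{1/2}\le\sum_{m\ge1}\|\mathcal{C}_m-\mathcal{D}_m\|\le\sum_{m\ge1}g_m=G(1)=\exp(\Var(x))-\prod_{i=1}^M(1+v_i)$. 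For (iii), setting $v_i\equiv\Var(x)/M$ gives $\prod_i(1+v_i)=(1+\Var(x)/M)^M$, so $G(1)=\exp(\Var(x))-(1+\Var(x)/M)^M\ge0$ equals $\|g(\Var(x),M)\|$ in the notation of Euler's Theorem~\ref{Thm:Euler}, whose bound is exactly $\tfrac{\exp(\Var(x))}{M}\big(1+\tfrac{\Var(x)^M}{(M-2)!}\big)$.

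I expect the main obstacle to be the combinatorial bookkeeping behind the identity that the bound coming out of Lemma~\ref{Lem:intbound}(ii) equals $g_m$: one must check it holds \emph{exactly}, not merely up to a constant, which hinges on reading ``$\prod_{i\in\bi}$'' with multiplicities so that the weights $1/\bi!$ line up with $\prod_i 1/k_i!$, and on verifying that the cancellation of the all-multiplicity-$\le1$ terms is complete. The remaining ingredients --- isolating homogeneous parts, the $\ell^2\le\ell^1$ bound, and invoking Euler --- are routine.
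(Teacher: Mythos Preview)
Your proposal is correct and follows essentially the same route as the paper: identify $\mathcal{C}_m,\mathcal{D}_m$ with the objects of Lemma~\ref{Lem:intbound}, bound the difference termwise via parts (i)--(ii) of that lemma, recognise the resulting sum as the coefficient $g_m$, then pass to $G(1)$ by the $\ell^2\le\ell^1$ estimate and invoke Euler's Theorem~\ref{Thm:Euler} for the equal-variation case. Your explicit unpacking of the combinatorial identity $g_m=\sum_{\bi}\tfrac{1}{\bi!}\prod_{i\in\bi}v_i$ (via $\exp(z\Var(x))=\prod_i\exp(zv_i)$ and the bijection between non-decreasing tuples and multi-indices) is exactly what the paper means by ``explicitly writing out the coefficient $g_m$'', and your reading of $\prod_{i\in\bi}$ with multiplicities so that $1/\bi!=\prod_i 1/k_i!$ is the correct one under the paper's Definition~\ref{def:ordered_tuples}.
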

\begin{proof}
(i) The bounds given by $G$ are those from Lemma~\ref{Lem:intbound}~(i) and (ii) where $\mathcal{C}_m$ here is $\mathcal{C}$ in Lemma~\ref{Lem:intbound}~(i), and $\mathcal{D}_m$ here is $\mathcal{D}$ in the lemma. The statement follows from explicitly writing out the coefficient $g_m$ and Lemma~\ref{Lem:intbound}~(i) and (ii).\\

(ii) Applying the triangle inequality and then part (i), one obtains
$$\|\mathcal{C} - \mathcal{D}\|_{\tensalg} \le \sum_{m=1}^\infty \|C_m
- \mathcal{D}_m\|_{\calH^{\otimes m}} \le \sum_{m=1}^\infty g_m = G(1)$$
and therefore the statement when writing out $G(1)$.\\

(iii) This follows from observing that for such a choice of $\bt$, it holds that
$$G(1) = \exp ( \Var (x)) - \left(1+\frac{1}{M}\Var (x)\right),$$
to which Euler's Theorem~\ref{Thm:Euler} may be applied.
\end{proof}

\begin{Prop}
\label{Prop:Eulerv2}
Let $x\in\RR, n\in \NN$, $x\geq0$. Let $x_1,\dots, x_M\in \RR$,
$x_i\geq0$ for $i=1,\ldots,M$ such that $\sum_{i=1}^M x_i = x$. Then,
$$\exp(x) = \prod_{i=1}^m\left(1+x_i\right) + g(x,x_1,\dots, x_M),\quad\mbox{where}\; 0\le g(x,x_1,\dots, x_M) \le x\exp(x)\cdot \max_i x_i.$$
In particular, it holds that
\begin{align*}
\lim_{\max_i x_i\rightarrow 0}\prod_{i=1}^m\left(1+x_i\right) = \exp(x),
\end{align*}
where convergence is uniform of order $O(\max_i x_i)$ on any compact subset of $[0,\infty)$.
\end{Prop}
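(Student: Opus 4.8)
The plan is to reduce the statement to a single elementary one‑variable inequality together with a telescoping expansion of the product. The inequality I will use is
\[
e^{t}-1-t\le t^{2}e^{t}\qquad(t\ge 0),
\]
which follows either from $e^{t}-1-t=\int_{0}^{t}(e^{s}-1)\,\mathrm{d}s\le\int_{0}^{t}s e^{s}\,\mathrm{d}s\le t^{2}e^{t}$, or by comparing Taylor coefficients term by term (using $1/j!\le 1/(j-2)!$ for $j\ge 2$).

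First I would dispatch the lower bound $g\ge 0$: since $1+x_i\le e^{x_i}$ for each $i$, one has $\prod_{i=1}^{M}(1+x_i)\le\prod_{i=1}^{M}e^{x_i}=e^{\sum_i x_i}=e^{x}$, hence $g=e^{x}-\prod_{i=1}^{M}(1+x_i)\ge 0$. For the upper bound I would write $e^{x}=\prod_{i=1}^{M}e^{x_i}$ and apply the telescoping identity $\prod_{i=1}^{M}a_i-\prod_{i=1}^{M}b_i=\sum_{k=1}^{M}\bigl(\prod_{i<k}a_i\bigr)(a_k-b_k)\bigl(\prod_{i>k}b_i\bigr)$ with $a_i=e^{x_i}$, $b_i=1+x_i$, obtaining
\[
g=\sum_{k=1}^{M}\Bigl(\prod_{i<k}e^{x_i}\Bigr)\,\bigl(e^{x_k}-1-x_k\bigr)\,\Bigl(\prod_{i>k}(1+x_i)\Bigr).
\]
Then I would bound the two product factors using $1+x_i\le e^{x_i}$ again, $\prod_{i<k}e^{x_i}\cdot\prod_{i>k}(1+x_i)\le\prod_{i\ne k}e^{x_i}=e^{x-x_k}\le e^{x}$, and the middle factor by the displayed inequality, $e^{x_k}-1-x_k\le x_k^{2}e^{x_k}$. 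Multiplying, the $k$‑th summand is at most $e^{x-x_k}x_k^{2}e^{x_k}=x_k^{2}e^{x}$, so
\[
g\le e^{x}\sum_{k=1}^{M}x_k^{2}\le e^{x}\Bigl(\max_{k}x_k\Bigr)\sum_{k=1}^{M}x_k=x\,e^{x}\max_{k}x_k,
\]
which is exactly the claimed estimate.

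Finally, for the ``in particular'' part I would note that on a compact $K\subseteq[0,\infty)$ the quantity $x e^{x}$ is bounded by $C_K:=\sup_{x\in K}x e^{x}<\infty$, so $\bigl|e^{x}-\prod_{i=1}^{M}(1+x_i)\bigr|=g\le C_K\max_{k}x_k$; letting $\max_k x_k\to 0$ gives convergence, uniformly over $x\in K$ and over the choice of partition $x_1,\dots,x_M$, and the bound $C_K\max_k x_k$ is precisely an error of order $O(\max_k x_k)$.

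There is no real obstacle here; the proof is routine once the right ingredients are fixed. The only point that needs care is choosing the quadratic estimate $e^{t}-1-t\le t^{2}e^{t}$ rather than the weaker (and also true) $e^{t}-1-t\le t(e^{t}-1)$: only the quadratic version produces the extra factor $\max_k x_k$ in the bound after summing $\sum_k x_k^2\le(\max_k x_k)\sum_k x_k$.
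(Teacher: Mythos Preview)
Your proof is correct and takes a genuinely different route from the paper. The paper expands both $\exp(x)=\exp\bigl(\sum_i x_i\bigr)$ and $\prod_i(1+x_i)$ as sums over ordered index sequences $\bi\sqsubseteq[M]$ resp.\ $\bi\sqsubset[M]$, identifies the difference $g$ as the sum over sequences with at least one repeated index, and then bounds this combinatorially via an at-most-$m$-to-one map that deletes the first duplicated index. Your argument bypasses all of this index-sequence bookkeeping: a telescoping identity $\prod a_i-\prod b_i=\sum_k(\prod_{i<k}a_i)(a_k-b_k)(\prod_{i>k}b_i)$ together with the one-variable estimate $e^{t}-1-t\le t^{2}e^{t}$ and $\sum_k x_k^2\le(\max_k x_k)\sum_k x_k$ gives the same bound directly. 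Your approach is more elementary and self-contained for this standalone proposition; the paper's combinatorial expansion, while heavier here, mirrors the index-sequence machinery used elsewhere in the paper (e.g.\ Lemma~\ref{Lem:intbound} and Theorem~\ref{Thm:Eulersimplex}), so it is natural in context even if less economical in isolation.
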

\begin{proof}
All statements follow from the first, which we proceed to prove. Writing out the product, we obtain
$$\prod_{i=1}^m\left(1+x_i\right) = \sum_{\bi\sqsubset [M]} x^{\bi},$$
where abbreviatingly we have written $x^{\bi}:=\prod_{i\in \bi} x_i$.
The Taylor expansion of the exponential on the other hand yields
$$\exp(x)= \exp \left(\sum_{i=1}^M x_i\right) = \sum_{\bi\sqsubseteq [M]} \frac{1}{\bi!} x^{\bi}.$$
Note the major different between both sums above being the repeating indices which may occur in the expansions of $\exp(x)$. More precisely, we obtain
$$\exp(x) - \prod_{i=1}^m\left(1+x_i\right) = \sum_{\substack{\bi\sqsubseteq [M] \\ \bi! \gneq 1}} \frac{1}{\bi!} x^{\bi}.$$
We further split up the sum by length of $\bi$:
$$\exp(x) - \prod_{i=1}^m\left(1+x_i\right) = \sum_{m=2}^\infty \sum_{\substack{\bi\sqsubseteq [M] \\ \ell( \bi)= m\\ \bi! \gneq 1}} \frac{1}{\bi!} x^{\bi}.$$
Positivity of $g$ follows from this equation and positivity of $x$.
Now consider the map $\phi$ which removes the first duplicated index
in an ordered index sequence $\bi$ yielding a sequence of length $\ell(\bi)-1$. On sequences of length $m$, the map $\phi$ is at most $m$-to-one, and surjective onto sequences of length $m-1$. Therefore,
$$\sum_{\substack{\bi\sqsubseteq [M] \\ \ell( \bi) = m\\ \bi! \gneq 1}} \frac{1}{\bi!} x^{\bi} \le X\cdot \sum_{m=2}^\infty \frac{m}{2} \sum_{\substack{\bi\sqsubseteq [M] \\ \ell( \bi) = m-1}} \frac{1}{\bi!} x^{\bi},$$
where $X = \max_i x_i$. Thus,
$$\sum_{m=2}^\infty \sum_{\substack{\bi\sqsubseteq [M] \\ \ell( \bi) = m\\ \bi! \gneq 1}} \frac{1}{\bi!} x^{\bi} \le X\cdot \sum_{m=1}^\infty m\cdot \sum_{\substack{\bi\sqsubseteq [M] \\ \ell( \bi) = m}} \frac{1}{\bi!} x^{\bi}.$$
Comparing to the expansion of $\exp(x)$ above, one observes that the right hand side is equal to
$$X\cdot \sum_{m=1}^\infty m\cdot \frac{x^m}{m!} = X\cdot x \sum_{m=0}^\infty \frac{x^m}{m!} = X\cdot x \cdot \exp(x).$$
\end{proof}

Note that the bounds from Proposition~\ref{Prop:Eulerv2} are worse than those from Euler's Theorem~\ref{Thm:Euler} in the case of equal $x_i$, by a factor of $x$. This is due to the fact that the bound also needs to be valid for heavily imbalanced partitions of $x$ into $x_i$.

\pagestyle{empty}
\bibliographystyle{plain}
\bibliography{sequences_arXiv}
\end{document}